\setlist[itemize]{label={\small\textbullet}}
\setlist{noitemsep,nosep} 
\newcommand{\smallbox}[1]{\scalebox{0.8}{#1}}
\newcommand{\smallhbox}[1]{\hbox{\smallbox{#1}}}
\newcommand{\Number}[1]{\mathbb{#1}}
\newcommand{\R}{\Number{R}}
\newcommand{\C}{\Number{C}}
\newcommand{\Bool}{\Number{B}}
\newcommand{\cat}[1]{\mathscr{#1}}
\newcommand{\opposite}{\mathrm{op}}
\newcommand{\oppositeTwo}{\mathrm{co}}
\newcommand{\CatName}[1]{\mathbf{#1}}
\newcommand{\Cat}{\CatName{Cat}}
\newcommand{\Set}{\CatName{Set}}
\newcommand{\Rel}{\CatName{Rel}}
\newcommand{\FinRel}{\CatName{FinRel}}
\newcommand{\Cpo}{\CatName{CPO}}
\newcommand{\Ab}{\CatName{Ab}}
\newcommand{\Vect}{\CatName{Vect}}
\newcommand{\Hilb}{\CatName{Hilb}}
\newcommand{\BiRel}{\CatName{BiRel}}
\newcommand{\DistBiRel}{\CatName{DistBiRel}}
\newcommand{\BoolMat}{\CatName{Mat}(\Bool)}
\newcommand{\VectRel}{\CatName{VectRel}}
\newcommand{\Copy}{\Delta}
\newcommand{\Merge}{\nabla}
\newcommand{\Create}{\square}
\newcommand{\Delete}{\lozenge}
\newcommand{\varCopy}{\blacktriangle}
\newcommand{\varMerge}{\blacktriangledown}
\newcommand{\varCreate}{\blacksquare}
\newcommand{\varDelete}{\blacklozenge}
\DeclareMathOperator{\Map}{Map}
\DeclareMathOperator{\RelC}{Rel}
\newcommand{\DL}[1]{\mathcal{#1}}
\newcommand{\given}[1][]{\:#1\vert\:}
\newcommand{\sem}[1]{\left\llbracket#1\right\rrbracket}
\newcommand{\theory}[1]{\mathbb{T}}
\newcommand{\tvec}[1]{\underline{#1}}
\newcommand{\pair}[2]{\langle#1,#2\rangle}
\newcommand{\copair}[3]{\mathop{\delta}\left(#1,#2,#3\right)}
\DeclareMathOperator{\Cl}{Cl}
\DeclareMathOperator{\Lang}{Lang}
\theoremstyle{plain}
\newtheorem*{theorem}{Theorem}
\newtheorem*{proposition}{Proposition}
\newtheorem*{lemma}{Lemma}
\theoremstyle{definition}
\newtheorem*{definition}{Definition}
\theoremstyle{remark}
\newtheorem*{remark}{Remark}
\title{Knowledge Representation in Bicategories of Relations}
\author{Evan Patterson \\
  {\large Department of Statistics, Stanford University}}
\date{\vspace{-1\baselineskip}}
\begin{document}
\maketitle

\begin{abstract}
  We introduce the \emph{relational ontology log}, or \emph{relational olog}, a
  knowledge representation system based on the category of sets and relations.
  It is inspired by Spivak and Kent's \emph{olog}, a recent categorical
  framework for knowledge representation. Relational ologs interpolate between
  ologs and description logic, the dominant formalism for knowledge
  representation today. In this paper, we investigate relational ologs both for
  their own sake and to gain insight into the relationship between the algebraic
  and logical approaches to knowledge representation. On a practical level, we
  show by example that relational ologs have a friendly and intuitive---yet
  fully precise---graphical syntax, derived from the string diagrams of monoidal
  categories. We explain several other useful features of relational ologs not
  possessed by most description logics, such as a \emph{type system} and a rich,
  flexible notion of \emph{instance data}. In a more theoretical vein, we draw
  on categorical logic to show how relational ologs can be translated to and
  from logical theories in a fragment of first-order logic. Although we make
  extensive use of categorical language, this paper is designed to be
  self-contained and has considerable expository content. The only prerequisites
  are knowledge of first-order logic and the rudiments of category theory.
\end{abstract}

\section{Introduction} \label{sec:introduction}

The representation of human knowledge in computable form is among the oldest and
most fundamental problems of artificial intelligence. Several recent trends are
stimulating continued research in the field of knowledge representation (KR).
The birth of the Semantic Web \cite{berners-lee2001} in the early 2000s has led
to new technical standards and motivated new machine learning techniques to
automatically extract knowledge from unstructured text \cite{nickel2016}. In the
scientific community, successful knowledge bases like the Gene Ontology
\cite{ashburner2000} have inspired a proliferation of ontologies across biology
and biomedicine \cite{musen2009}. This development belongs to a general trend
towards greater openness and interconnectivity in science. Optimists dream of a
future science in which all scientific knowledge is open, online, and
interpretable by machines \cite{nielsen2012}.

Description logic is the dominant formalism for knowledge representation today.
In particular, the language OWL (Web Ontology Language), a W3C standard
underlying the Semantic Web, is a description logic \cite{grau2008}. Description
logics are logical calculi designed specifically for knowledge representation.
They lie somewhere between propositional logic and first-order predicate logic,
striking a trade-off between computational tractability and expressivity.

In parallel with the invention of description logic and the Semantic Web, a
mostly disjoint community of mathematicians, physicists, and computer scientists
have discovered that category theory, popularly known for its abstruseness, is
useful not just for describing abstract mathematical structures, but for
modeling such diverse real-world phenomena as databases, programming languages,
electrical circuits, and quantum mechanics
\cite{spivak2012b,lambek1988,baez2015b,abramsky2004}. The ethos of this research
program is that category theory can serve as a general-purpose modeling language
for science and engineering. Having internalized this perspective, it is but a
short step to contemplate a general-purpose knowledge representation system
based on category theory. In this spirit, Spivak and Kent have recently
introduced the \emph{ontology log} (or \emph{olog}), a simple and elegant
categorical framework for knowledge representation \cite{spivak2012a}.

An objective of this paper is to understand the relationship between the logical
and algebraic approaches to knowledge representation. To that end, we introduce
a third knowledge representation formalism that interpolates between description
logic and ontology logs. We call it the \emph{relational ontology log}, or
\emph{relational olog}. Spivak and Kent's ologs, which we sometimes call
\emph{functional ologs} to avoid confusion, are based on $\Set$, the archetypal
category of sets and functions. Relational ologs are based on $\Rel$, the
category of sets and relations. That may sound a small difference, since
functions and relations are often interchangeable, but it leads to very
different modes of expression. Functional ologs achieve their expressivity
through categorical limits and colimits (products, pullbacks, pushforwards,
etc.), while relational ologs rely mainly on relational algebra (intersections,
unions, etc.). In this sense, relational ologs are actually closer to
description logic than to functional ologs.

Practitioners of description logic will find in relational ologs several useful
features not possessed by most existing KR systems, including OWL. Some of these
features are awkward to handle in a purely logical system; all emerge
automatically from the categorical framework. First, functors allow
\emph{instance data} to be associated with an ontology in a mathematically
precise way. Instance data can be interpreted as a relational or graph database
or can take more exotic forms. Second, relational ologs are by default
\emph{typed}. We argue that types, if used judiciously, can mitigate the
maintainability challenges posed by the open world semantics of description
logic. Finally, relational ologs have a friendly and intuitive---yet fully
precise---\emph{graphical syntax}, derived from the string diagrams of monoidal
categories. We expect that this graphical language will appeal to technical and
non-technical users alike.

\paragraph{How to read this paper} We have tried to write a paper that is
accessible to a diverse audience. All Remarks and Appendices are technical and
can be skipped on a first reading. The mathematical prerequisites are limited as
follows. We assume the reader is familiar with the syntax and semantics of
first-order logic. No prior knowledge of description logic is required. We
expect the reader to know the ``big three'' concepts of category
theory---category, functor, and natural transformation---but we do not assume
knowledge of categorical logic or monoidal categories and their graphical
languages. References for further reading are provided where appropriate.

Readers who prefer to begin with an extended example may proceed immediately to
\cref{sec:foaf}, referring to \cref{sec:Rel} as needed to understand the
graphical notation. The core of the paper, explaining the
categorical-relational approach to knowledge representation, is
\cref{sec:Rel,sec:BiRel,sec:olog,sec:instances,sec:types}. The other sections
develop extensions of our methodology and make connections to other branches of
mathematics and knowledge representation.

\paragraph{Organization of paper} In the next section, we review description
logic as a computationally tractable subset of first-order logic and describe
several widely used description logics. In \cref{sec:Rel}, we introduce $\Rel$,
the category of sets and relations, and use it to illustrate the general
concepts of monoidal categories and their graphical languages. We also make
initial contact with the basic notions of description logic. Motivated by
$\Rel$, in \cref{sec:BiRel} we present the \emph{bicategory of relations}, a
categorical abstraction of relational algebra invented by Carboni and Walters.
\cref{sec:olog} defines a \emph{relational olog} to be a finitely presented
bicategory of relations and illustrates with an extended example.
\cref{sec:instances,sec:types} discuss the implications of instance data and
types for knowledge representation. In \cref{sec:logic}, we take a sojourn into
\emph{categorical logic}, proving that regular logic is the internal language of
bicategories of relations. This result establishes a formal connection between
relational ologs and a fragment of typed first-order logic. In
\cref{sec:expressive-olog}, we introduce the \emph{distributive relational
olog}, an extension of the relational olog with high expressivity. In the final
\cref{sec:conclusion}, we comment on the philosophy of categorical knowledge
representation and suggest directions for future research. The two Appendices
bring mathematical rigor to the informal discussion of categorical logic in the
main text.

\section{Description logic} \label{sec:DL}

Early knowledge representation systems, based on semantic networks or frames,
often lacked a formal semantics. The intended meanings of the elements of such
systems were defined only implicitly or operationally by the inference
algorithms that manipulated them. As a result, researchers found it difficult to
reason generally about these systems, independent of any specific
implementation. Arguments were advanced that knowledge representation should be
grounded in formal logic \cite{woods1975}. First-order logic, ever the
``default'' logical system, seems like a natural place to start.

Description logic (DL) is motivated by the deficiencies of first-order logic as
a foundation for knowledge representation. Chief among these is computational
intractability: first-order logic, while quite expressive, is undecidable. The
basic description logics are subsets of first-order logic designed to be
decidable (although not always in polynomial time). The tradeoff between
expressivity and tractability was emphasized by the earliest papers on
description logic \cite{brachman1984}. Another point, less frequently mentioned,
is that description logic is simpler and more user friendly than first-order
logic. As we will see, its syntax suppresses variables, both bound and free, and
imposes strict limits on the logical sentences that can be formed. Given that
most users of knowledge representation systems are domain experts in scientific
or business fields, not professional mathematicians, it is important that KR
formalisms be easily interpretable and maintainable. A knowledge base consisting
of a collection of arbitrary first-order sentences will probably not meet this
requirement.

\subsection{Review of description logic}

In this section, we briskly review description logic. General introductions to
description logic include the survey \cite{krotzsch2012} and the textbook
chapter \cite[Ch.\ 9]{brachman2004} by Brachman and Levesque. A comprehensive
reference is the \emph{Description Logic Handbook} \cite{baader2007}. For the
perspectives of the bioinformatics and Semantic Web communities, see
\cite{robinson2011} and \cite{hitzler2009}, respectively.

Description logic uses a special nomenclature to specify the features possessed
by a given system. The base system, from which most others are derived, is
called $\DL{AL}$ (for Attributive Concept Language). Given a collection of
\emph{atomic concepts}, denoted $A$, and \emph{atomic roles}, denoted $R$ or
$S$, the \emph{concept} descriptions of $\DL{AL}$ are well-formed terms of the
grammar:
\begin{align*}
  C, D \quad::=\quad
    & A \ | & \text{(atomic concept)} \\
    & \top \ | &\text{(universal concept)} \\
    & \bot \ | &\text{(bottom concept)} \\
    & \neg A \ & \text{(atomic negation)} \\
    & C \sqcap D \ | &\text{(intersection)} \\
    & \forall R.C \ | &\text{(value restriction)} \\
    & \exists R.\top &\text{(limited existential quantification)}.
\end{align*}
Note that negating arbitrary concepts is not allowed in $\DL{AL}$. Concepts
and roles are interpreted as unary and binary predicates in first-order logic:
\begin{align*}
  (\neg A)(x) &\qquad\text{iff}\qquad \neg A(x) \\
  (C \sqcap D)(x) &\qquad\text{iff}\qquad C(x) \wedge D(x) \\
  (\forall R.C)(x) &\qquad\text{iff}\qquad \forall y.(R(x,y) \to C(y)) \\
  (\exists R.\top)(x) &\qquad\text{iff}\qquad \exists y. R(x,y)
\end{align*}
A \emph{terminological box} or \emph{TBox} is a collection of 
\emph{terminological axioms} of form
\begin{equation*}
  C \sqsubseteq D \qquad\text{or}\qquad C \equiv D,
\end{equation*}
interpreted as the first-order sentences
\begin{equation*}
  \forall x. (C(x) \to D(x)) \qquad\text{or}\qquad
  \forall x. (C(x) \leftrightarrow D(y)).
\end{equation*}
An \emph{assertional box} or \emph{ABox} is a collection of \emph{assertional
axioms} of form
\begin{equation*}
  C(a) \qquad\text{or}\qquad R(a,b),
\end{equation*}
where $a,b$ are names of individuals. A \emph{knowledge base} or \emph{ontology}
in description logic consists of a TBox and an ABox. Given the above
translations into first-order logic, there is an obvious notion of an
\emph{interpretation} or \emph{model} of a knowledge base. Thus description
logic inherits a model-theoretic semantics from first-order logic.

More expressive description logics are obtained by adjoining to $\DL{AL}$
additional concept and role constructors, identified by script letters like
$\DL{C}$ and $\DL{U}$. The literature describes countless such extensions;
\cref{table:DL} lists the most important ones. As a warning, a few identifiers
(like $\DL{F}$ and $\DL{R}$) are not used consistently across the literature.

\begin{table}
  \centering
  \begin{tabular}{cllr}
    \toprule
    ID & Name & DL Syntax & FOL Interpretation \\
    \midrule
    $\DL{C}$ & Concept negation & $\neg C$ & $\neg C(x)$ \\
    $\DL{U}$ & Concept union & $C \sqcup D$ & $C(x) \vee D(x)$ \\
    $\DL{E}$ & Full existential quantification &
      $\exists R.C$ & $\exists y.(R(x,y) \wedge C(y))$ \\
    $\DL{H}$ & Role axioms 
      & $R \sqsubseteq S$ & $\forall x,y.(R(x,y) \to S(x,y))$ \\
      & & $R \equiv S$ & $\forall x,y.(R(x,y) \leftrightarrow S(x,y))$ \\                 
    $\DL{O}$ & Nominals (concept literals) & 
      $\{a_1,\dots,a_n\}$ & $\{a_1,\dots,a_n\}$ \\
    $\DL{I}$ & Inverse roles & $R^-$ & $R^-(x,y) \leftrightarrow R(y,x)$ \\
    $\DL{F}$ & Functional role & $\leq 1 R$ & see below \\
    $\DL{N}$ & Number restriction & $\geq n R$ & see below \\
      & & $= n R$ & \\
      & & $\leq n R$ & \\
    $\DL{Q}$ & Qualified number restriction & $\geq n R.C$ & see below \\
      & & $= n R.C$ & \\
      & & $\leq n R.C$ & \\
    N/A & Role intersection & $R \sqcap S$ & $R(x,y) \wedge S(x,y)$ \\
    N/A & Role union & $R \sqcup S$ & $R(x,y) \vee S(x,y)$ \\
    N/A & Role composition & $R \circ S$ & $\exists y. R(x,y) \wedge S(y,z)$ \\
    $\DL{R}$ & Regular role inclusion &
      $R_1 \circ \cdots \circ R_n \sqsubseteq S$ & see below \\
    $(\DL{D})$ & Concrete domains (data types) & varies & varies \\
    \bottomrule
  \end{tabular}
  \caption{Summary of $\DL{AL}$ language extensions}
  \label{table:DL}
\end{table}

Several DL constructs in \cref{table:DL} deserve elaboration. The qualified
number restriction ($\DL{Q}$) term $\geq n R.C$ (respectively $\leq n R.C$)
denotes the class of elements related by $R$ to at least $n$ (respectively at
most $n$) elements of class $C$. In first-order logic,
\begin{align*}
  (\geq n R.C)(x) &\qquad\text{iff}\qquad
    \exists y_1,\dots,y_n.\left(
      \bigwedge_{1 \leq i \leq n} (R(x,y_i) \wedge C(y_i))
      \wedge \bigwedge_{1 \leq i < j \leq n} y_i \neq y_j'
    \right) \\ 
  (\leq n R.C)(x) &\qquad\text{iff}\qquad
    \forall y_1,\dots,y_{n+1}.\left(
      \bigwedge_{1 \leq i \leq n} (R(x,y_i) \wedge C(y_i))
      \rightarrow \bigvee_{1 \leq i < j \leq n+1} y_i = y_j'
    \right).
\end{align*}
Number restriction ($\DL{N}$) and functional roles ($\DL{F}$) are special cases
of qualified number restriction. Concrete domains ($(\DL{D})$) refer to data
types, such as natural numbers or real numbers, and operations on them, such as
addition and multiplication. We return to the topic of data types in
\cref{sec:types}.

Most descriptions logics do not allow arbitrary intersection, union, or
composition of roles. However, composition-based regular role inclusion
($\DL{R}$) is widely used. System $\DL{R}$ allows axioms of form $R_1 \circ
\cdots \circ R_n \sqsubseteq S$, where $R_1,\dots,R_n$ are atomic roles,
provided there are no cycles between axioms. This acyclicity requirement, which
we will not make precise, leads to favorable computational properties. Note that
$\DL{R}$ is sometimes taken to include additional, ad hoc features like
reflexivity, ``local'' reflexivity, irreflexivity, and disjoint roles
\cite{horrocks2005,horrocks2006}.

A few description logics are privileged in theory or practice. The minimal
language $\DL{AL}$ is too inexpressive for most applications. The central
language in the theory of description logic is $\DL{ALC}$. It is logically
equivalent to $\DL{ALUE}$, although the shorter name $\DL{ALC}$ is preferred. In
a break with the standard nomenclature, the language $\DL{S}$ is $\DL{ALC}$ plus
transitive roles. The Web Ontology Languages are derived from system $\DL{S}$.
For example, OWL 1 Lite corresponds to $\DL{SHIF(D)}$, OWL 1 DL to
$\DL{SHION(D)}$, and OWL 2 DL to $\DL{SROIQ(D)}$.

\subsection{Structure of description logic}

To put the subsequent developments in context, we make a few observations about
the structure of description logic. Since description logic is not a single
logical system, but rather a large federation of systems, it is difficult to
make broad generalizations. Nevertheless, some general themes can be discerned.

An obvious syntactic difference between description logic and first-order logic
is that the former is \emph{point-free} while the latter is not. By
``point-free'' we mean that the concept and role constructors of description
logic suppress all variables, free and bound. First-order logics without
variables \emph{do} exist---Tarksi, for example, studied such systems in his
last major work \cite{tarski1987}---but, outside of description logic, they are
rare in research and in practice. In this respect, relational ologs are like
description logic: both the textual and graphical syntaxes of relational ologs
are point-free.

Description logics characteristically impose strict limitations on how concepts
and roles may be combined. Thus, not all first-order sentences are expressible
in description logic. The same is true of relational ologs: we shall see that
when relational ologs are interpreted as first-order theories
(\cref{sec:logic}), not all first-order sentences are expressible.

Moreover, there are structural similarities between the first-order sentences
that \emph{are} expressible in the two formalisms. In description logic,
terminological axioms $C \sqsubseteq D$ and $R \sqsubseteq S$ typically
translate into first-order sentences of form
\begin{equation*}
  \forall x_1 \cdots \forall x_n (\varphi \to \psi),
\end{equation*}
where $\varphi$ and $\psi$ are formulas containing only the connectives and
quantifiers $\wedge, \vee, \top, \bot, \exists$. (Depending on the language,
exceptions can arise from value restrictions $\forall R.C$ and number
restrictions like $\leq n R.C$. However, these constructors are acceptable in
axioms of form $\forall R.C \sqsubseteq \top$ or $\forall R.C \sqsubseteq \bot$.
Concept negations $\neg C$ also present exceptions.) The logical system just
described is called \emph{coherent logic}. The weaker system of \emph{regular
logic} is obtained when $\varphi$ and $\psi$ are further restricted to the
connectives and quantifiers $\wedge,\top,\exists$. We shall see that regular
logic and coherent logic are closely connected to relational ologs
(\cref{sec:logic,sec:expressive-olog}).

\section{The category of relations} \label{sec:Rel}

In this section we introduce $\Rel$, the category of sets and relations.
Although the reader is doubtless familiar with sets and relations, we think it
helpful to start the development in this very concrete setting. We will
introduce monoidal categories and their graphical languages by equipping $\Rel$
with various categorical structures, such as a monoidal product, diagonals and
codiagonals, and a dagger operator. These structures on $\Rel$ motivate the more
abstract ``categories of relations'' needed for knowledge representation
(\cref{sec:BiRel}). We will also make initial contact with description logic.

Our presentation draws on the physics-oriented survey by Coecke and Paquette
\cite{coecke2010}, where $\Rel$ is viewed as a ``quantum-like'' category, in
contrast to the ``classical-like'' category $\Set$. The excellent surveys
\cite{baez2010} and \cite{selinger2010} also provide more detail about monoidal
categories and their applications and graphical languages. General introductions
to category theory, in order of increasing sophistication, are
\cite{lawvere2009,spivak2014,awodey2010,leinster2014,riehl2016,maclane1998}.

\begin{definition}
  The \emph{category of sets and relations}, denoted by $\Rel$, is the category
  whose objects are sets and whose morphisms $R: X \to Y$ are subsets $R
  \subseteq X \times Y$. The \emph{composition} of $R: X \to Y$ and $S: Y \to
  Z$, written $R \cdot S: X \to Z$ or $RS: X \to Z$, is given by
  \begin{equation*}
    x RS z \qquad\text{iff}\qquad \exists y \in Y: xRy \wedge yRz,
  \end{equation*}
  where $xRy$ means that $(x,y) \in R$. For any set $X$, the \emph{identity}
  morphism $1_X$ is the diagonal relation:
  \begin{equation*}
    x(1_{X})x' \qquad\text{iff}\qquad x = x'.
  \end{equation*}
\end{definition}

The notion of composition of relations is natural and important. Notice that
when $R$ and $S$ are (graphs of) functions, $RS$ is the usual composition of
functions. Also, the identity morphism is the usual identity function. As a
result, $\Set$, the category of sets and functions, is a subcategory of $\Rel$.

\begin{remark}
  As illustrated by the definition, we compose morphisms in left-to-right or
  ``diagrammatic'' order. We make this choice for consistency with the graphical
  syntax, which is read from left to right. It is also consistent with the
  notation $xRy$ for $(x,y) \in R$.
\end{remark}

Unlike $\Set$, the category $\Rel$ is a special kind of a 2-category.

\begin{definition}
  A category $\cat{C}$ is a \emph{locally posetal 2-category} if between any two
  morphisms $f, g: A \to B$ with common domain and codomain, there exists at most
  one \emph{2-morphism}, written
  \begin{equation*}
    \begin{tikzcd}
      A \arrow[r, bend left=35, "f"{above}, ""{name=f, below}]
        \arrow[r, bend right=35, "g"{below}, ""{name=g, above}]
        & B
      \arrow[Rightarrow, from=f, to=g]
    \end{tikzcd}
  \end{equation*}
  or more succintly $f \Rightarrow g$, together with operations of
  \emph{vertical composition},
  \begin{equation*}
    \begin{tikzcd}
      A \arrow[r, bend left=60, "f"{above}, ""{name=f, below}]
        \arrow[r, "g"{name=g, anchor=center, fill=white}]
        \arrow[r, bend right=60, "h"{below}, ""{name=h, above}]
        & B
      \arrow[Rightarrow, from=f.center, to=g]
      \arrow[Rightarrow, from=g, to=h.center]
    \end{tikzcd}
    \qquad\leadsto\qquad
    \begin{tikzcd}
      A \arrow[r, bend left=35, "f"{above}, ""{name=f, below}]
        \arrow[r, bend right=35, "h"{below}, ""{name=h, above}]
        & B
      \arrow[Rightarrow, from=f, to=h]
    \end{tikzcd},
  \end{equation*}
  and \emph{horizontal composition},
  \begin{equation*}
    \begin{tikzcd}
      A \arrow[r, bend left=35, "f"{above}, ""{name=f, below}]
        \arrow[r, bend right=35, "g"{below}, ""{name=g, above}]
      & B \arrow[r, bend left=35, "h"{above}, ""{name=h, below}]
          \arrow[r, bend right=35, "k"{below}, ""{name=k, above}]
      & C
      \arrow[Rightarrow, from=f, to=g]
      \arrow[Rightarrow, from=h, to=k]
    \end{tikzcd}
    \qquad\leadsto\qquad
    \begin{tikzcd}
      A \arrow[r, bend left=35, "f \cdot h"{above}, ""{name=fh, below}]
        \arrow[r, bend right=35, "g \cdot k"{below}, ""{name=gk, above}]
        & C
      \arrow[Rightarrow, from=fh, to=gk]
    \end{tikzcd},
  \end{equation*}
  such that each hom-set $\cat{C}(A,B)$ is a poset (partially ordered set)
  under the relation $\Rightarrow$.
\end{definition}

In $\Rel$, we stipulate that $R \Rightarrow S$ if there is a set containment $R
\subseteq S$. Vertical composition simply says that set containment is
transitive. More interestingly, horizontal composition says that containment is
preserved by composition of relations: if $R \subseteq S$ and $T \subseteq U$,
then $R \cdot T \subseteq S \cdot U$. Furthermore, the hom-sets are posets
because $R \subseteq R$ and also $R = S$ whenever $R \subseteq S$ and $S
\subseteq R$. Thus $\Rel$ is a locally posetal 2-category. In this context, the
symbol $\Rightarrow$ has a happy double meaning: we can read $\Rightarrow$ as a
generic 2-morphism or as \emph{logical implication}. In the terminology of
description logic, 2-morphisms are subsumptions.

\subsection{Monoidal category}

We will make $\Rel$ into a monoidal category by equipping it with the Cartesian
product. We first state the general definition of a monoidal category.

\begin{definition}
  A (strict) \emph{monoidal category} $(\cat{C}, \otimes, I)$ is a category
  $\cat{C}$ together with a functor ${\otimes: \cat{C} \times \cat{C} \to
  \cat{C}}$, called the \emph{monoidal product}, and an object $I$, called the
  \emph{monoidal unit}, such that $(\otimes,I)$ behaves like a monoid on the
  objects and morphisms of $\cat{C}$, in the following sense. For objects
  $A,B,C$, we have
  \begin{equation*}
    A \otimes (B \otimes C) = (A \otimes B) \otimes C, \qquad
    A \otimes I = I \otimes A = I,
  \end{equation*}
  and for morphisms $f,g,h$, we have
  \begin{equation*}
    f \otimes (g \otimes h) = (f \otimes g) \otimes h, \qquad
    f \otimes 1_I = 1_I \otimes f = f.
  \end{equation*}
\end{definition}

More explicitly, functorality of the monoidal product $\otimes$ means that for
any objects $A,B$,
\begin{equation*}
  1_A \otimes 1_B = 1_{A \otimes B}
\end{equation*}
and for any morphisms $f: A \to B$, $g: B \to C$, $h: D \to E$, $k: E \to F$,
\begin{equation*}
  (f \cdot g) \otimes (h \cdot k) = (f \otimes h) \cdot (g \otimes k).
\end{equation*}

Let us immediately introduce the graphical language of \emph{string diagrams}
that is associated with any monoidal category. In this language, objects are
represented by wires and morphisms are represented by boxes with incoming and
outgoing wires. A generic morphism $f: A \to B$ is represented as
\begin{center}
  \includegraphics{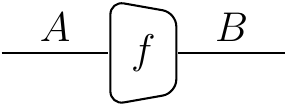}.
\end{center}
The composite $fg: A \to C$ of $f: A \to B$ and $g: B \to C$ is
\begin{center}
  \includegraphics{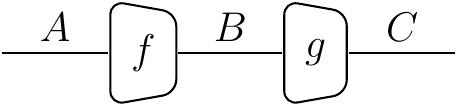}
\end{center}
and the monoidal product $f \otimes g: A \otimes B \to C \otimes D$ of
 $f: A \to B$ and $g: C \to D$ is
\begin{center}
  \includegraphics{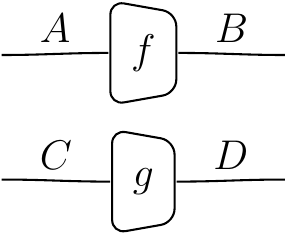}.
\end{center}
Identity morphisms are represented specially as a bare wire:
\begin{equation*}
  1_{A} \quad=\quad \vcenter{\hbox{\includegraphics{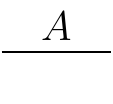}}}.
\end{equation*}
Each time we introduce a new monoidal structure, we will augment the graphical
language accordingly.

String diagrams are among the most beautiful aspects of the theory of monoidal
categories. Unlike the diagrams and flowcharts found throughout the engineering
literature, which have no formal meaning, string diagrams provide a formal
calculus for reasoning in monoidal categories. More precisely, \emph{coherence
theorems} guarantee that string diagrams constitute a \emph{sound} and
\emph{complete} calculus for equational reasoning with morphisms in a monoidal
category. Coherence theorems are emphasized by Selinger's comprehensive survey
\cite{selinger2010}. All string diagrams in this paper are drawn by the author's
(highly experimental) library for computational category theory
\cite{catlab2017}.

The Cartesian product makes $\Rel$ into a monoidal category in the most
straightforward way. (Later we will see that it is not the only interesting
monoidal product on $\Rel$.) On objects, define
\begin{equation*}
  X \otimes Y := X \times Y = \{(x,y): x \in X, y \in Y\}
\end{equation*}
and given morphisms $R: X \to Y$ and $S: Z \to W$, define $R \otimes S: X
\otimes Z \to Y \otimes W$ by
\begin{equation*}
  (x,z) (R \otimes S) (y,w) \qquad\text{iff}\qquad xRy \wedge zSw.
\end{equation*}
The monoidal unit is any singleton set, which we write as $I = \{*\}$.

\begin{remark}
  Technically, $\Rel$ is not a \emph{strict} monoidal category, as defined
  above, because the Cartesian product is not strictly associative: $X \times (Y
  \times Z) \neq (X \times Y) \times Z$. Of course, there is a natural
  isomorphism $X \times (Y \times Z) \cong (X \times Y) \times Z$, mapping
  $(x,(y,z))$ to $((x,y),z)$, that allows us to identify these two sets.
  Similarly, $X \times I \neq X$ but there is a natural isomorphism $X \times I
  \cong X$ that identifies $(x,*)$ with $x$. Such considerations lead to the
  general definition of a monoidal category, where strict associativity and
  units are replaced with \emph{associator} and \emph{unitor} natural
  isomorphisms, subject to some coherence conditions. However, the coherence
  theorem for monoidal categories \cite{maclane1963} ensures that a general
  monoidal category is monoidally equivalent to some strict monoidal category,
  called its \emph{strictification}. As a result, we adopt the common practice
  of suppressing associators and unitors, effectively replacing every monoidal
  category by its strictification. Note that the graphical language implicitly
  performs this strictification. Incidentally, the abstract categories of
  relations we construct in \cref{sec:logic} will actually be strict monoidal
  categories.
\end{remark}

As in most monoidal categories in encountered in practice, in $\Rel$ the order
of inputs and outputs can be freely exchanged.

\begin{definition}
  A monoidal category $(\cat{C}, \otimes, I)$ is a  \emph{symmetric monoidal
  category} if there is a natural family of isomorphisms
  \begin{equation*}
    \sigma_{A,B}: A \otimes B \to B \otimes A, \qquad A, B \in \cat{C},
  \end{equation*}
  called \emph{braidings}, satisfying $\sigma_{A,B}^{-1} = \sigma_{B,A}$.
\end{definition}

In the graphical language, braidings are represented by crossed wires:
\begin{equation*}
  \sigma_{A,B} \quad=\quad \vcenter{\hbox{\includegraphics{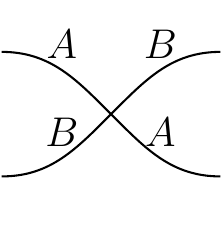}}}.
\end{equation*}
The braidings in $\Rel$ are defined by
\begin{equation*}
  (x, y)\,\sigma_{X,Y}\,(y', x') \qquad\text{iff}\qquad
    x = x' \wedge y = y'.
\end{equation*}

In $\Rel$, unlike in $\Set$, there is a fundamental duality between inputs and
outputs under which any input can be turned into an output and vice versa. This
duality is captured abstractly by the following definition.

\begin{definition}
  A symmetric monoidal category $(\cat{C}, \otimes, I)$ is a \emph{compact
  closed category} if for every object $A \in \cat{C}$, there is an object
  $A^*$, the \emph{dual} of $A$, and a pair of morphisms $\eta_A: I \to A^*
  \otimes A$ and $\epsilon_A: A \otimes A^* \to I$, the \emph{unit} and
  \emph{counit} respectively, which satisfy the \emph{triangle} or
  \emph{zig-zag} identities:
  \begin{equation*}
    \begin{tikzcd}[row sep=large, column sep=large]
      A \arrow[r, "1_A \otimes \eta_A"]
        \arrow[rd, "1_A"{below}]
        & A \otimes A^* \otimes A
          \arrow[d, "\epsilon_A \otimes 1_A"] \\
      & A
    \end{tikzcd}
    \hspace{1in}
    \begin{tikzcd}[row sep=large, column sep=large]
      A^* \arrow[r, "\eta_A \otimes 1_{A^*}"]
          \arrow[rd, "1_{A^*}"{below}]
        & A^* \otimes A \otimes A^*
          \arrow[d, "1_{A^*} \otimes \epsilon_A"] \\
      & A^*
    \end{tikzcd}.
  \end{equation*}
\end{definition}

The prototypical example of a compact closed category is $(\Vect_k,\otimes)$,
the category of finite-dimensional vector spaces (over a fixed field $k$) and
linear maps, equipped with the tensor product. The monoidal unit is $I = \C$,
the one-dimensional vector space. As expected, the dual $A^*$ of a vector space
$A$ is the space of linear maps $A \to \C$. The unit $\eta_A: I \to A^* \otimes
A$ maps $c$ to $c 1_{A}$ and the counit $\epsilon_A: A \otimes A^* \to I$ is the
trace operator.

$\Rel$ is a \emph{self-dual} compact closed category. That is, every object is
its own dual (${X^* = X}$). The unit $\eta_X: I \to X \otimes X$ and counit
$\epsilon_X: X \otimes X \to I$ are defined by
\begin{equation*}
  (*)\,\eta_X\,(x,x')
    \qquad\text{iff}\qquad x = x'
    \qquad\text{iff}\qquad (x,x')\,\epsilon_X\,(*).
\end{equation*}
In the graphical language, these morphisms are represented as ``bent wires'':
\begin{equation*}
  \eta_X \quad=\quad \vcenter{\hbox{\includegraphics{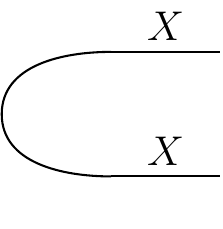}}}
  \hspace{1in}
  \epsilon_X \quad=\quad \vcenter{\hbox{\includegraphics{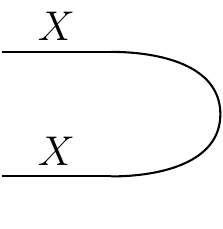}}}.
\end{equation*}
The zig-zag identities assert that ``zig-zags can be straightened out'':
\begin{equation*}
  \vcenter{\hbox{\includegraphics{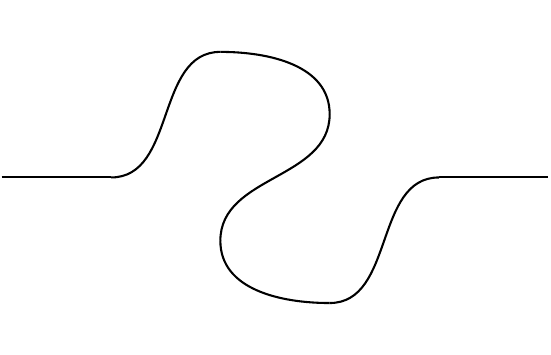}}}
  \quad=\quad
  \vcenter{\hbox{\includegraphics{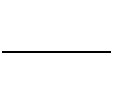}}}
  \quad=\quad
  \vcenter{\hbox{\includegraphics{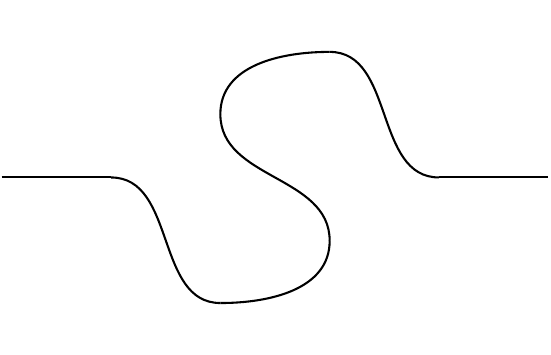}}}.
\end{equation*}

\begin{remark}
  String diagrams for compact closed categories typically include arrowheads on
  the wires to distinguish objects $\xrightarrow{\ A\ }$ from their duals
  $\xrightarrow{\ A^* \ }$, which are drawn as reversed arrows $\xleftarrow{\ A \ }$.
  Because we work in self-dual categories, we can safely omit the arrowheads.
\end{remark}

We have amassed enough structure to specify relations of arbitrary arity.
Relations $B: I \to I$ whose domain and codomain are the monoidal unit
constitute the degenerate case of arity zero. Since $I = \{*\}$ is the singleton
set, the hom-set $\Rel(I,I)$ has only two members: the identity relation $1_I = \{(*,*)\}$
and the empty relation $\emptyset$. Defining $\top := 1_I$ and $\bot :=
\emptyset$, we interpret relations $B: I \to I$ as \emph{booleans}.

Next, we have unary and binary relations. A unary relation $C: X \to I$ is
called a \emph{class} or \emph{concept} in the description logic literature. Its
elements have the form $(x,*)$, where $x \in X$. In the graphical calculus,
wires of type $I$ are not drawn at all, so a concept $C: X \to I$ is represented
as
\begin{equation*}
  \includegraphics{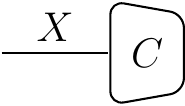}.
\end{equation*}
A binary relation $R: X \to Y$, or \emph{role} in description logic jargon, is
depicted as
\begin{equation*}
  \includegraphics{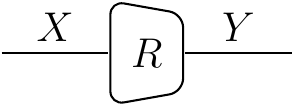},
\end{equation*}
as we have seen.

Finally, we can easily express higher-order relations. A relation of arity $n$
is a morphism of form $R: X_1 \otimes \cdots \otimes X_n \to I$. For instance,
here is a ternary relation $R: X \otimes Y \otimes Z \to I$:
\begin{equation*}
  \includegraphics{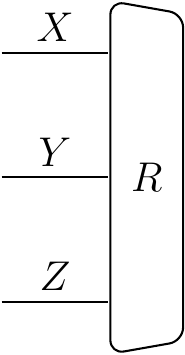}
\end{equation*}
Apparently, there are two conventions for representing a binary relation: as a
morphism $R: X \to Y$ or a morphism $R: X \otimes Y \to I$. By bending wires, we
can pass freely between the two representations, via the transformations
\begin{equation*}
  \vcenter{\hbox{\includegraphics{img/role}}}
  \qquad\leadsto\qquad
  \vcenter{\hbox{\includegraphics{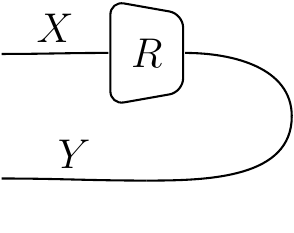}}}
\end{equation*}
and
\begin{equation*}
  \vcenter{\hbox{\includegraphics{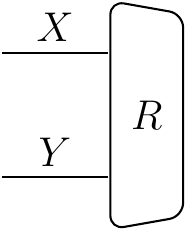}}}
  \qquad\leadsto\qquad
  \vcenter{\hbox{\includegraphics{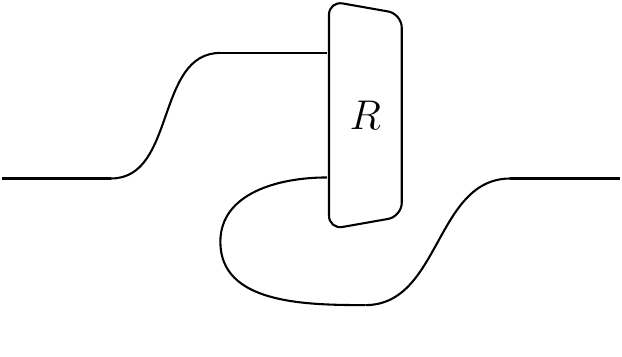}}},
\end{equation*}
which, by the zig-zag identities, are mutually inverse. Most description logics
do not support relations of arity greater than two, in part because the
point-free textual syntax becomes quite awkward \cite[\S 5.7]{baader2007}. The
graphical language of monoidal categories enables graceful and intuitive
composition even when relations have multiple inputs and outputs.

\subsection{Dagger category}

Every relation $R: X \to Y$ in $\Rel$ has an \emph{opposite} relation
$R^\dagger: Y \to X$, also known as the \emph{converse} or \emph{inverse}
relation, defined by
\begin{equation*}
  y R^\dagger x \qquad\text{iff}\qquad xRy.
\end{equation*}
This structure is axiomatized by the following definition.

\begin{definition}
  A \emph{dagger category} is a category $\cat{C}$ equipped with a contravariant
  functor $(-)^\dagger: \cat{C}^\opposite \to \cat{C}$ that is the identity on
  objects and is involutive, i.e., $((-)^\dagger)^\dagger = 1_{\cat{C}}$.

  More explicitly, a dagger category is a category $\cat{C}$ such that to every
  morphism $f: A \to B$ there corresponds a morphism $f^\dagger: B \to A$,
  and the correspondence satisfies
  \begin{equation*}
    {1_A}^\dagger = 1_A, \qquad
    (f g)^\dagger = g^\dagger f^\dagger, \qquad\text{and}\qquad
    (f^\dagger)^\dagger = f.
  \end{equation*}
\end{definition}

When $\cat{C}$ is a monoidal category, one typically asks that the dagger
respect the monoidal structure.

\begin{definition}
  A symmetric monoidal category $\cat{C}$ is a \emph{dagger symmetric monoidal
  category} if $\cat{C}$ is a dagger category and $(-)^\dagger$ is a symmetric
  monoidal functor, i.e.,
  \begin{equation*}
    (f \otimes g)^\dagger = f^\dagger \otimes g^\dagger
    \qquad\text{and}\qquad
    \sigma_{A,B}^\dagger = \sigma_{B,A} = \sigma_{A,B}^{-1}.
  \end{equation*}
  A compact closed category $\cat{C}$ is a \emph{dagger compact category} if
  $\cat{C}$ is a dagger symmetric monoidal category and for each object $A \in
  \cat{C}$, there is a commutative diagram
  \begin{equation*}
    \begin{tikzcd}
      I \arrow[r, "\epsilon_A^\dagger"] \arrow[dr, "\eta_A"{below left}]
        & A \otimes A^* \arrow[d, "\sigma_{A,A^*}"{right}] \\
      & A^* \otimes A
    \end{tikzcd}.
  \end{equation*}
\end{definition}

\begin{remark}
  Dagger compact categories have been introduced and studied in the context of
  quantum computing \cite{selinger2007}. The prototypical example is $\Hilb$,
  the category of finite-dimensional Hilbert spaces and linear maps, equipped
  with the tensor product. Here $f^\dagger$ is the usual adjoint (or Hermitian
  conjugate) of the linear map $f$. For this reason, in a general dagger
  category $\cat{C}$, the morphism $f^\dagger$ is often called the
  \emph{adjoint} of $f$. To avoid confusion with the 2-categorical notion of
  adjoint, invoked in \cref{sec:BiRel}, we do not use this terminology.
\end{remark}

In the graphical language, $f^\dagger$ is represented by taking the ``mirror
image'' of $f$:
\begin{equation*}
  \vcenter{\hbox{\includegraphics{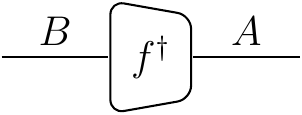}}} \quad:=\quad
  \vcenter{\hbox{\includegraphics{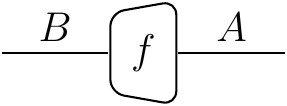}}}.
\end{equation*}
Equivalently, given any morphism $f$ drawn as a string diagram---which we read
from left to right, as usual---we get $f^\dagger$ by reading the same diagram
from right to left.

With the dagger operation defined above, $\Rel$ is a dagger compact category.
However, unlike most dagger compact categories, $\Rel$ is self-dual. Thus there
is potentially a second way to transform a morphism $X \to Y$ into a morphism $Y
\to X$: bend both the input and the output wires. These two operations are
actually the same, which we can express graphically as:
\begin{equation*}
  \vcenter{\hbox{\includegraphics{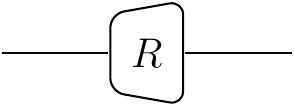}}} \quad=\quad
  \vcenter{\hbox{\includegraphics{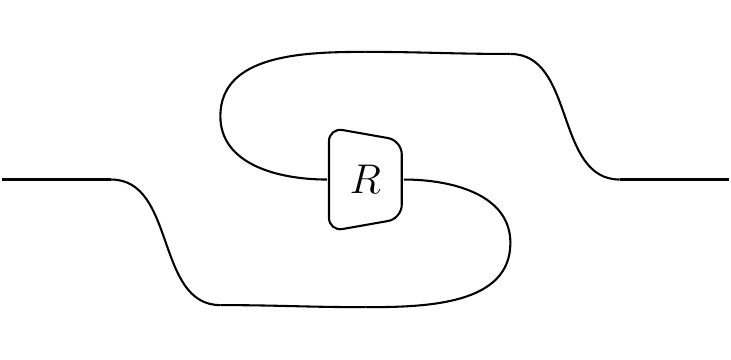}}}.
\end{equation*}
This equation is a generalization of the zig-zag identity: if we imagine
``straightening out'' the right-hand side by pulling on the ends of the input
and output wires, we obtain the left-hand side. Mathematically speaking, the
dagger structure on $\Rel$ is superfluous since it can be reduced to the compact
closed structure. We choose to make the dagger structure explicit because
inverse relations occur frequently in practice and the associated graphical
language is succinct and intuitive.

\subsection{Diagonals and codiagonals}

In our final topic of this section, we show that the category of relations has
operations for ``copying'' and ``deleting'' data and, dually, for ``merging''
and ``creating'' data. Using these operations, we can express intersections of
classes and relations and, more generally, logical operations involving
conjunction. We also obtain an important characterization of the ``functional
relations'', or \emph{maps}, in $\Rel$.

The ``copying'' and ``merging'' operations are defined by internal comonoids and
monoids, respectively, in $\Rel$. We recall the general definition of a
(co)monoid in a monoidal category.

\begin{definition}
  Let $(\cat{C}, \otimes, I)$ be a monoidal category. An \emph{internal monoid}
  in $\cat{C}$ is an object $M \in \cat{C}$ together with a
  \emph{multiplication} morphism $\mu: M \otimes M \to M$ and a \emph{unit}
  morphism $\eta: I \to M$ such that
  \begin{equation*}
    \begin{tikzcd}
      M \otimes M \otimes M
          \arrow[r, "1_M \otimes \mu"]
          \arrow[d, "\mu \otimes 1_M"]
        & M \otimes M \arrow[d, "\mu"{right}]\\
      M \otimes M \arrow[r, "\mu"{below}]
        & M
    \end{tikzcd}
    \hspace{0.5in}
    \begin{tikzcd}
      I \otimes M \arrow[r, "\eta \otimes 1_M"{above}] \arrow[rd, equal]
        & M \otimes M \arrow[d, "\mu"]
        & M \otimes I \arrow[l, "1_M \otimes \eta"{above}] \arrow[ld, equal] \\
      & M &
    \end{tikzcd}.
  \end{equation*}
  Dually, an \emph{internal comonoid} in $\cat{C}$ is an internal monoid in
  $\cat{C}^\opposite$. In concrete terms, an internal comonoid is an object $C$
  together with a \emph{comultiplication} morphism $\delta: C \to C \otimes C$
  and a \emph{counit} morphism $\epsilon: C \to I$ such that
  \begin{equation*}
    \begin{tikzcd}
      C \arrow[r, "\delta"{above}]
        \arrow[d, "\delta"{left}]
        & C \otimes C \arrow[d, "\delta \otimes 1_C"{right}]\\
      C \otimes C \arrow[r, "1_C \otimes \delta"{below}]
        & C \otimes C \otimes C
    \end{tikzcd}
    \hspace{0.5in}
    \begin{tikzcd}
      & C \arrow[d, "\delta"] & \\
      I \otimes C \arrow[ru, equal]
        & C \otimes C
          \arrow[r, "\epsilon \otimes 1_C"{below}]
          \arrow[l, "1_C \otimes \epsilon"{below}]
        & C \otimes I \arrow[lu, equal]
    \end{tikzcd}.
  \end{equation*}
  
  If $\cat{C}$ is a symmetric monoidal category, we say that an internal monoid
  $(M, \mu, \eta)$ is \emph{commutative} if
  \begin{equation*}
    \begin{tikzcd}
      M \otimes M \arrow[r, "\sigma_{M,M}"] \arrow[rd, "\mu"{below left}]
        & M \otimes M \arrow[d, "\mu"] \\
        & M
    \end{tikzcd}.
  \end{equation*}
  Dually, an internal comonoid $(C, \delta, \epsilon)$ is \emph{cocommutative} if
  \begin{equation*}
    \begin{tikzcd}
      C \arrow[r, "\delta"] \arrow[rd, "\delta"{below left}]
        & C \otimes C \arrow[d, "\sigma_{C,C}"] \\
        & C \otimes C
    \end{tikzcd}.
  \end{equation*}
  Note that an internal monoid in $\Set$ is just a monoid in the usual sense,
  i.e., a set $M$ equipped an associative binary operation $\mu$ and an identity
  element $\eta$. Likewise, an internal commutative monoid in $\Set$ is just a
  commutative monoid.
\end{definition}

We define a family of internal (co)monoids in $\Rel$ as follows. For each set
$X$, define $\Copy_X: X \to X \otimes X$ by
\begin{equation*}
  x\,\Copy_X\,(x', x'') \qquad\text{iff}\qquad x = x' \wedge x = x'',
\end{equation*}
and define $\Delete_X: X \to I$ by $\Delete_X = \{(x,*): x \in X\}$ (so that
$x(\Delete_X)*$ holds for every $x \in X$). It is easily verified that
$(X,\Copy_X,\Delete_X)$ is a cocommutative comonoid in $\Rel$. By taking the
opposite relations
\begin{equation*}
  \Merge_X := \Copy_X^\dagger: X \otimes X \to X
  \qquad\text{and}\qquad
  \Create_X := \Delete_X^\dagger: I \to X,
\end{equation*}
we also obtain for each set $X$ a commutative monoid ($X, \Merge_X, \Create_X)$
in $\Rel$. We think of $\Copy_X$ as ``copying'' or ``duplicating,'' $\Merge_X$
as ``merging,'' $\Delete_X$ as ``deleting'' or ``erasing,'' and $\Create_X$ as
``creating.'' These interpretations will be manifest from the graphical
language, to be demonstrated shortly. However, for the graphical language to be
consistent, the family of (co)monoids must satisfy certain \emph{coherence
axioms}, which ensure that they interact properly with the monoidal product.
These axioms are captured by the following definition.

\begin{definition}[\cite{selinger1999}]
  A \emph{monoidal category with diagonals} is a symmetric monoidal category
  $\cat{C}$ together with a family of morphisms $\Copy_A: A \to A \otimes A$
  and $\Delete_A: A \to I$, not necessarily natural in objects $A$, such that each
  triple $(A,\Copy_A,\Delete_A$) is a cocommutative comonoid in $\cat{C}$ and obeys
  the \emph{coherence axioms}
  \begin{equation*}
    \Delete_I = 1_I, \qquad
    \Delete_{A \otimes B} = \Delete_A \otimes \Delete_B, \qquad
    \Copy_{A \otimes B} = 
      (\Copy_A \otimes \Copy_B) (1_A \otimes \sigma_{A,B} \otimes 1_B).
  \end{equation*}
  Dually, a \emph{monoidal category with codiagonals} is a symmetric monoidal
  category $\cat{C}$ together with a family of morphisms $\Merge_A: A \otimes A
  \to A$ and $\Create_A: I \to A$ such that each triple $(A,\Merge_A,\Create_A)$ is
  commutative monoid in $\cat{C}$ and obeys the coherence axioms
  \begin{equation*}
    \Create_I = 1_I, \qquad
    \Create_{A \otimes B} = \Create_A \otimes \Create_B, \qquad
    \Merge_{A \otimes B} = 
      (1_A \otimes \sigma_{B,A} \otimes 1_B) (\Merge_A \otimes \Merge_B).
  \end{equation*}
\end{definition}

\begin{remark}
  When the monoidal category $\cat{C}$ is, like $\Rel$, not strict, we also need
  coherence axioms asserting that $\Copy_I$ and $\Merge_I$ are the unitors
  realizing the isomorphism $I \cong I \otimes I$.
\end{remark}

The graphical language of a monoidal category with diagonals is
\begin{equation*}
  \Copy_A \quad=\quad \vcenter{\hbox{\includegraphics{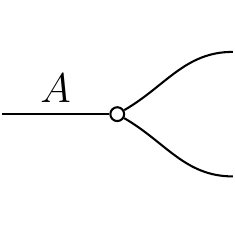}}},
  \hspace{1in}
  \Delete_A \quad=\quad \vcenter{\hbox{\includegraphics{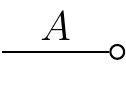}}}.
\end{equation*}
Similarly, the graphical language of a monoidal category with codiagonals is
\begin{equation*}
  \Merge_A \quad=\quad \vcenter{\hbox{\includegraphics{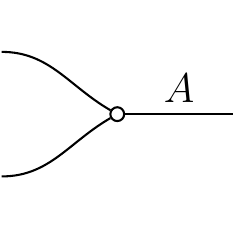}}},
  \hspace{1in}
  \Create_A \quad=\quad \vcenter{\hbox{\includegraphics{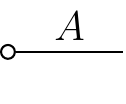}}}.
\end{equation*}
By the coherence axioms, we can express the diagonal morphisms for a product
$A \otimes B$ in the graphical language as
\begin{equation*}
  \Copy_{A \otimes B} \quad=\quad \vcenter{\hbox{\includegraphics{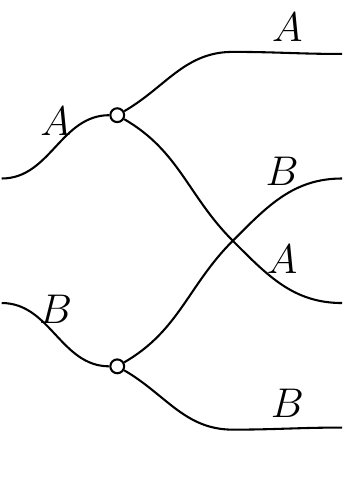}}},
  \hspace{1in}
  \Delete_{A \otimes B} \quad=\quad \vcenter{\hbox{\includegraphics{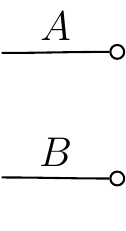}}}.
\end{equation*}
Of course, there is a dual picture for the codiagonal morphisms.

Under the above definitions, $\Rel$ is a monoidal category with diagonals and
codiagonals. A quick calculation shows that the intersection $R \cap S $ of two
relations $R,S: X \to Y$ with common domain $X$ and codomain $Y$ is $\Copy_X (R
\otimes S) \Merge_Y$ or, in graphical language,
\begin{equation*}
  R \cap S \quad=\quad \vcenter{\hbox{\includegraphics{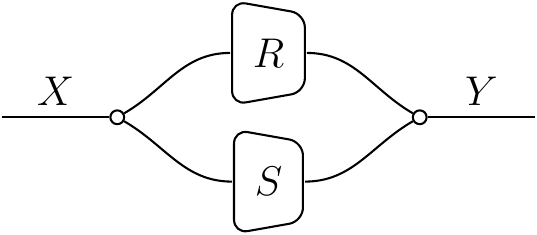}}}.
\end{equation*}
As a special case, the intersection of two classes $C, D: X \to I$ is
\begin{equation*}
  C \cap D \quad=\quad \vcenter{\hbox{\includegraphics{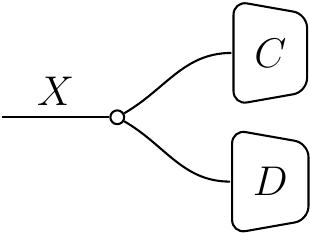}}}.
\end{equation*}

We can also express many of the basic concept constructors in description logic.
For any relation $R: X \to Y$ and class $C: Y \to I$, the ``limited''
existential quantification $\exists R.\top$ is the class
\begin{equation*}
  \includegraphics{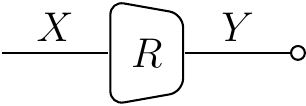}
\end{equation*}
and the ``full'' existential quantification $\exists R.C$ is the class
\begin{equation*}
  \includegraphics{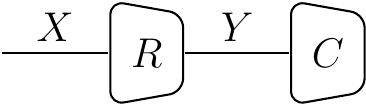}.
\end{equation*}
In contrast to description logic, we can retain access to the domain or codomain
of the relation $R$ while restricting its values. For instance, given classes
$C: X \to I$ and $D: Y \to I$, the relation
\begin{equation*}
  \includegraphics{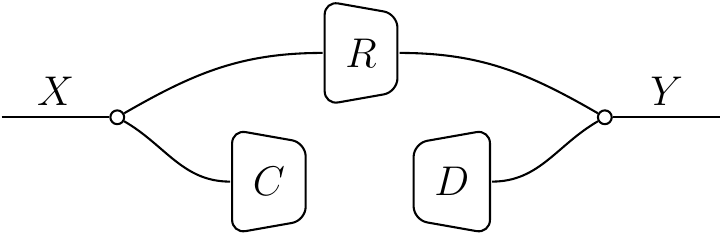}
\end{equation*}
consists of all pairs $(x,y) \in X \times Y$ satisfying  $xRy \wedge {xC*}
\wedge {yD*}$. The value restriction concept constructor $\forall R.C$ cannot be
expressed as a single morphism, but we can achieve the same effect by declaring
a subsumption of two different morphisms:
\begin{equation*}
  \vcenter{\hbox{\includegraphics{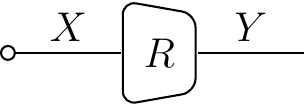}}}
  \qquad\implies\qquad
  \vcenter{\hbox{\includegraphics{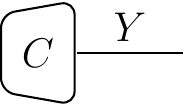}}}.
\end{equation*}
This 2-morphism asserts that
$\forall x \in X.\forall y \in Y.(xRy \rightarrow yC*)$.

Finally, we can express a typed variant of the ``universal role'' in
description logic. The local maximum
\begin{equation*}
  \top_{X,Y} \quad:=\quad \vcenter{\hbox{\includegraphics{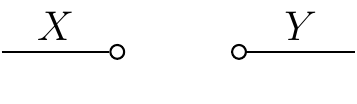}}}
\end{equation*}
is the (unique) maximum element of the poset $\Rel(X,Y)$, namely $X \times Y$.
It generalizes the top element $\top = \top_{I,I} = 1_I$ of the booleans
$\Rel(I,I)$.

\paragraph{Maps} The diagonal structure on $\Rel$ leads to an abstract
characterization of the relations that are functions, i.e., the relations $R: X
\to Y$ with the property that for every $x \in X$, there exists a unique $y \in
Y$ such that $xRy$. This matter is closely connected to the naturality, or lack
thereof, of the diagonal in $\Rel$. In general, a diagonal in a symmetric
monoidal category $\cat{C}$ is \emph{natural} if for every morphism $f: A \to
B$, we have $f \Copy_B = \Copy_A (f \otimes f)$ and $f \Delete_B = \Delete_A$,
or graphically
\begin{equation*}
  \vcenter{\hbox{\includegraphics{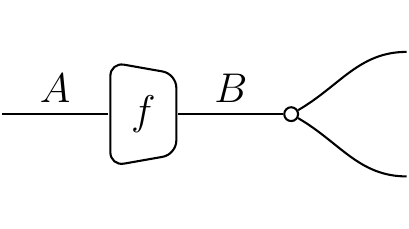}}}
  \quad=\quad
  \vcenter{\hbox{\includegraphics{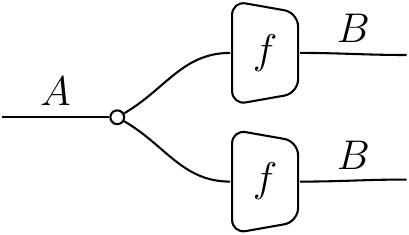}}}
\end{equation*}
and
\begin{equation*}
  \vcenter{\hbox{\includegraphics{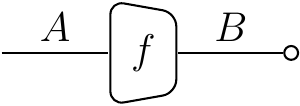}}}
  \quad=\quad
  \vcenter{\hbox{\includegraphics{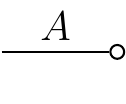}}}.
\end{equation*}
The first equation has the interpretation that applying $f$, then copying the
output is the same as copying the input, then applying $f$ to both copies; the
second that applying $f$, then deleting the output is the same as deleting the
input. When both equations hold for a morphism $f$, we say that $f$ is a
\emph{comonoid homomorphism}. In a general category, we expect the equations to
hold for morphisms that ``behave like functions.''

The diagonal in $\Rel$ is not natural because not all relations are functions.
However, for any relation $R: X \to Y$, there are 2-morphisms
\begin{align*}
  \vcenter{\hbox{\includegraphics{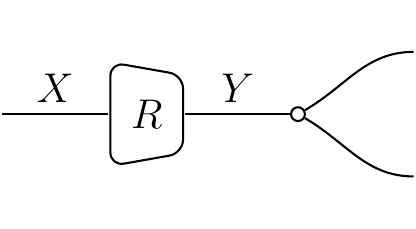}}}
  \quad&\implies\quad
  \vcenter{\hbox{\includegraphics{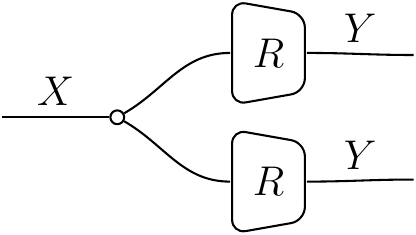}}} \\
  \vcenter{\hbox{\includegraphics{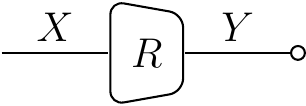}}}
  \quad&\implies\quad
  \vcenter{\hbox{\includegraphics{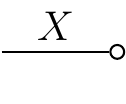}}}.
\end{align*}
We say that every morphism in $\Rel$ is a \emph{lax comonoid homomorphism}.
Explicitly, the 2-morphisms are the inclusions
\begin{align*}
  \{(x,y,y): xRy\} &\subseteq \{(x,y,y'): xRy \wedge xRy'\} \\
  \{(x,*): \exists y \in Y. xRy\} &\subseteq \{(x,*): x \in X\}.
\end{align*}
When the first inclusion is an equality, $R$ is a \emph{partial function}; when
the second is an equality, $R$ is \emph{total}; when both are equalities,
$R$ is a \emph{function} or a \emph{map}. In other words, the comonoid
homomorphisms in $\Rel$ are exactly the relations that are functions.

Of course, for every concept about the diagonal, there is a dual concept about
the codiagonal, whose details we omit. In $\Rel$, we obtain abstract
characterizations of the \emph{injective}, \emph{surjective}, and
\emph{bijective} relations. By combining the diagonal and codiagonal structures,
we can characterize the injective functions, surjective functions, etc.

\paragraph{Interactions between structures} To conclude this section, we
consider how the diagonals and codiagonals of $\Rel$ interact with each other
and with the previous structures. In fact, the self-dual compact closed
structure is reducible to the (co)diagonals. The unit and counit morphisms are
given by
\begin{equation*}
  \vcenter{\hbox{\includegraphics{img/unit}}}
  \quad=\quad
  \vcenter{\hbox{\includegraphics{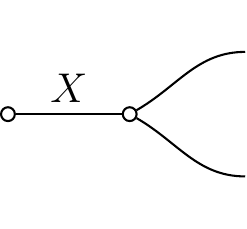}}},
  \hspace{0.5in}
  \vcenter{\hbox{\includegraphics{img/counit}}}
  \quad=\quad
  \vcenter{\hbox{\includegraphics{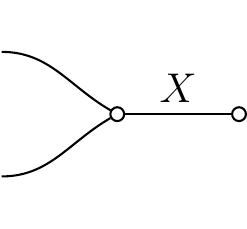}}}.
\end{equation*}
We have seen that the dagger is, in turn, reducible to the compact closed
structure. Like the dagger operation, bending arrows is useful enough to merit
its own textual and graphical syntax.

The internal monoids and comonoids in $\Rel$ combine to form internal
\emph{Frobenius algebras} (sometimes called \emph{Frobenius monoids})
\cite{kock2004}. That is, for each object $X$, there is a monoid
$(X,\Merge_X,\Create_X)$ and a comonoid $(X,\Copy_X,\Delete_X)$ satisfying the
\emph{Frobenius equations}
\begin{equation*}
  (1_X \otimes \Copy_X) (\Merge_X \otimes 1_X)
    = \Merge_X \Copy_X
    = (\Copy_X \otimes 1_X) (1_X \otimes \Merge_X)
\end{equation*}
or, graphically,
\begin{equation*}
  \vcenter{\hbox{\includegraphics{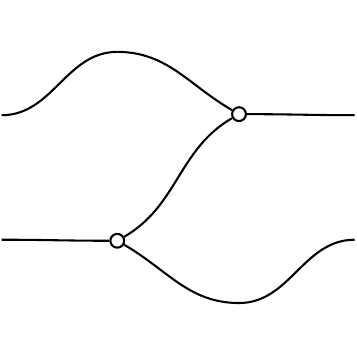}}}
  \quad=\quad
  \vcenter{\hbox{\includegraphics{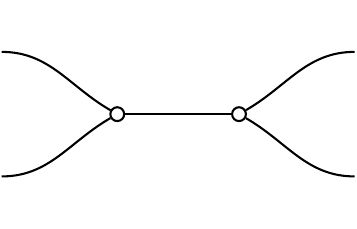}}}
  \quad=\quad
  \vcenter{\hbox{\includegraphics{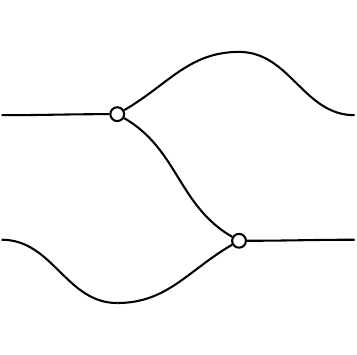}}}.
\end{equation*}
The monoid and comonoids are also \emph{special}, meaning that $\Copy_X
\Merge_X = 1_X$ or
\begin{equation*}
  \vcenter{\hbox{\includegraphics{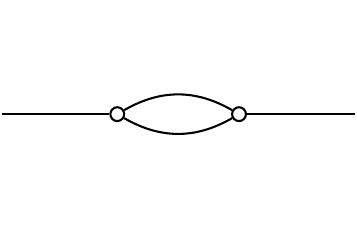}}}
  \quad=\quad
  \vcenter{\hbox{\includegraphics{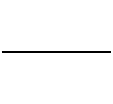}}}.
\end{equation*}
Finally, by definition, we have $\Merge_X = \Copy_X^\dagger$ and $\Create_x =
\Delete_X^\dagger$. These properties can be summarized by saying that
$(X,\Copy_X,\Delete_X,\Merge_X,\Create_X)$ is a \emph{special
$\dagger$-Frobenius monoid} \cite{baez2015a}.

\section{Abstract categories of relations} \label{sec:BiRel}

The category $\Rel$ of sets and relations cannot stand alone as a formalism for
knowledge representation. A knowledge representation system must be
implementable on a computer, which requires that each knowledge base admit a
finite description. Yet $\Rel$, far from being a finitary object, has as objects
every possible set and as morphisms every possible relation! Moreover, there is
no formal system for specifying equations or subsumptions that should hold
between relations. To enable a finite description of categories that ``behave
like'' the category of sets and relations, we must \emph{axiomatize} the salient
structures of $\Rel$. The previous section provides some clues about how to
achieve this axiomatization.

In fact, there are two different notions of an ``abstract'' category of
relations in the category theory literature. The best known is Freyd's
\emph{allegory}, popularized by Freyd and Scedrov \cite{freyd1990} and utilized
in Johnstone's treatise on topos theory \cite{johnstone2002}. There have been a
few efforts to apply allegories to real-world phenomena, e.g., in circuit design
\cite{brown1994a,brown1994b}, logic programming \cite{gallegoarias2012}, and
database modeling \cite{zielinski2013}. Allegories take intersections and the
dagger (called ``reciprocation'') as primitive, characterizing the former by
axioms like reflexivity, commutativity, and, most distinctively, the
\emph{modular law}
\begin{equation*}
  RS \cap T \subseteq (R \cap TS^\dagger)S,
\end{equation*}
where $R \subseteq S$ is, by definition, equivalent to $R \cap S = R$. (The
reader can check that this rather strange law \emph{does} hold in $\Rel$.) The
second notion is the \emph{bicategory of relations}, introduced by Carboni and
Walters \cite{carboni1987,carboni2008}. In bicategories of relations, the
monoidal structures are primitive, while intersections and the dagger are
derived concepts. The two notions are ostensibly quite different, but it can be
shown that the categories of unitary pretabular allegories and of bicategories
of relations are equivalent, in fact isomorphic
\cite{knijnenburg1994,lawler2015}. Thus, the choice of axiomatization is mostly
a matter of preference.

In this paper, we shall take bicategories of relations as our preferred notion
of an ``abstract'' category of relations. An advantage of this choice is that
the graphical language of monoidal categories is immediately available.

\begin{definition}[\cite{carboni1987}]
  A \emph{bicategory of relations} is a locally posetal 2-category $\cat{B}$
  that is also a symmetric monoidal category $(\cat{B},\otimes,I$) with
  diagonals $(X,\Copy_X,\Delete_X)_{X \in \cat{B}}$, such that
  \begin{itemize}
    \item every morphism $R: X \to Y$ is a lax comonoid homomorphism,
    \begin{equation*}
      R \cdot \Copy_Y \implies \Copy_X (R \otimes R), \qquad
      R \cdot \Delete_Y \implies \Delete_X;
    \end{equation*}
    \item the duplication morphisms $\Copy_X$ and deletion morphisms $\Delete_X$
    have right adjoints $\Merge_X := \Copy_X^*$ and $\Create_X := \Delete_X^*$;
    \item the pairs of morphisms $(\Copy_X, \Merge_X)$ obey the Frobenius
    equations.
  \end{itemize}
\end{definition}

We denote by $\BiRel$ the category of (small) bicategories of relations and
structure-preserving functors.

\begin{remark}
  Our definition differs from Carboni and Walter's definition in one respect.
  They ask not for diagonals but only for internal cocommutative comonoids,
  subject to the requirement that they are the \emph{unique} cocommutative
  comonoids with right adjoints. However, it appears that the only use of this
  uniqueness axiom is to derive the coherence axioms \cite[Remark 1.3
  (ii)]{carboni1987}. We think it simpler to just assert the coherence axioms to
  begin with. By omitting the uniqueness axiom, we ensure that the theory of
  bicategories of relations is essentially algebraic (see below).
\end{remark}

Every structure invoked in the definition has been introduced in \cref{sec:Rel},
with the exception of adjoints. In this paper we use ``adjoint'' in the sense of
2-categories \cite{lack2010}. Thus, in a locally posetal 2-category $\cat{C}$, a
morphism $f: A \to B$ is \emph{left adjoint} to $g: B \to A$ (and $g$ is
\emph{right adjoint} to $f$), written $f \dashv g$, if
\begin{equation*}
  1_A \implies fg \qquad\text{and}\qquad gf \implies 1_B.
\end{equation*}
If a morphism $f$ has a right adjoint $g$, then it is unique, for if $g'$ is
another right adjoint, then $g' \Rightarrow g'fg \Rightarrow g$ and, by
symmetry, $g \Rightarrow g'$, so that $g = g'$. Similarly, left adjoints are
unique when they exist. In $\Rel$, a relation $R: X \to Y$ has a right adjoint
$R^*: Y \to X$ if and only if $R$ is a function, in which case $R^* =
R^\dagger$. Together with the discussion in \cref{sec:Rel}, this proves that
$\Rel$ is a bicategory of relations. We shall meet other interesting
bicategories of relations in \cref{sec:instances,sec:logic}.

\begin{table}
  \centering
  \begin{tabular}{llc}
    \toprule
    Structure & Name & Notation and definition \\
    \midrule
    category & composition & $R \cdot S$ \\
    monoidal category & product & $R \otimes S$ \\
      & braiding & $\sigma_{X,Y}$ \\ 
    diagonal & copy & $\Copy_X$ \\
      & delete & $\Delete_X$ \\
    codiagonal & merge & $\Merge_X := \Copy_X^* = \Copy_X^\dagger$ \\
      & create & $\Create_X := \Delete_X^* = \Delete_X^\dagger$ \\
    compact closed & unit & $\eta_X := \Create_X \cdot \Copy_X$ \\
      & counit & $\epsilon_X := \Merge_X \cdot \Delete_X$ \\
    dagger & dagger & $R^\dagger :=
      (\eta_X \otimes 1_Y) (1_X \otimes R \otimes 1_Y) (1_X \otimes \epsilon_Y)$ \\
    logical & intersection & $R \cap S := \Copy_X (R \otimes S) \Merge_Y$ \\
      & true & $\top := \Delete_I \cdot \Create_I = 1_I$ \\
      & local maximum & $\top_{X,Y} := \Delete_X \cdot \Create_Y$ \\
    \bottomrule
  \end{tabular}
  \caption{Summary of morphisms in a bicategory of relations}
  \label{table:BiRel}
\end{table}

Carboni and Walters derive from the axioms of a bicategory of relations all the
categorical structures discussed in \cref{sec:Rel}. The situation is perfectly
analogous to that of $\Rel$. For the reader's convenience, we summarize the
results in \cref{table:BiRel}, using the textual syntax for brevity.

The characterization of maps in $\Rel$ also generalizes to an arbitrary
bicategory of relations. A morphism $R: X \to Y$ in a bicategory of relations
$\cat{B}$ is a \emph{map} if it has a right adjoint $R^*: Y \to X$. Equivalent
conditions are that $R$ is a comonoid homomorphism or that $R$ is left adjoint
to $R^\dagger$ \cite[Lemma 2.5]{carboni1987}. The collection of maps in
$\cat{B}$ is closed under composition and monoidal products and hence forms a
symmetric monoidal category, which we denote by $\Map(\cat{B})$. In the
motivating example, $\Map(\Rel) = \Set$. The diagonal on $\cat{B}$ is natural
when restricted to $\Map(\cat{B})$, making $\Map(\cat{B})$ into a cartesian
category. In fact, $\Map(\cat{B})$ is the largest subcategory of $\cat{B}$ that
is cartesian. Thus, in the terminology of \cite{selinger1999}, $\Map(\cat{B})$
is the \emph{focus} of $\cat{B}$.

\section{Relational ologs} \label{sec:olog}

A categorical framework for knowledge representation, generalizing the category
of sets and relations, emerges almost automatically from the abstractions
developed in the previous section. An ontology in this framework is called a
``relational olog,'' after Spivak and Kent \cite{spivak2012a}. We will define a
relational olog to be any bicategory of relations that admits a finite
description; more precisely, a relational olog is a \emph{finitely presented}
bicategory of relations. Intuitively, a finitely presented bicategory of
relations is the ``generic'' or ``free'' bicategory of relations that contains a
specified finite collection of basic objects, morphisms, and 2-morphisms. It is
analogous to other free constructions in algebra, such as a free vector space or
a finitely presented group. Another, more relevant example is a functional olog
that does not involve limits or colimits, which is just a finitely presented
category.

\begin{definition}
  A \emph{relational ontology log} (or \emph{relational olog}) is a finitely
  presented bicategory of relations.
  
  In more detail, a relational olog is a bicategory of relations $\cat{B}$
  presented by
  \begin{itemize}
    \item a finite set of \emph{basic types} or \emph{object generators};
    
    \item a finite set of \emph{basic relations} or \emph{morphism generators}
    of form $R: X \to Y$, where $X,Y$ are object expressions;
    
    \item a finite set of \emph{subsumption axioms} or \emph{2-morphism
    generators} of form $R \Rightarrow S$, where $R,S$ are well-formed morphism
    expressions with the same domain and codomain.
  \end{itemize}
\end{definition}

Note that while our definition does not explicitly include \emph{equality
axioms}, equality of morphisms can be reduced to subsumption. In the sequel,
axioms of form $R = S$ are understood to be shorthand for the two axioms $R
\Rightarrow S$ and $S \Rightarrow R$.

We hope that the meaning of the definition is intuitively clear but let us be
somewhat more precise about our terminology. By ``well-formed morphism
expressions'' we mean expressions constructed from the morphism generators and
the syntax of bicategories of relations (see \cref{table:BiRel}) such that
domains and codomains are respected in all compositions. Similarly, ``object
expressions'' are expressions constructed from the object generators and the
syntax of monoidal categories ($\otimes$ and $I$). A bicategory of relations
$\cat{B}$ is ``presented by'' a given collection of generators $\cat{B}_0$ if
$\cat{B}$ contains (an isomorphic copy of) $\cat{B}_0$ and if for every other
bicategory of relations $\cat{B}'$ containing $\cat{B}_0$, there exists a unique
functor $F: \cat{B} \to \cat{B}'$ preserving the structure of $\BiRel$ and the
generators $\cat{B}_0$. As usual, this universal property guarantees the
\emph{uniqueness} of $\cat{B}$ up to isomorphism. For readers concerned about
the \emph{existence} of $\cat{B}$ we make the following technical remark.

\begin{remark}
  The preceding definition can be made fully rigorous by formulating the axioms
  of a bicategory of relations as an essentially algebraic theory. Roughly
  speaking, an \emph{essentially algebraic theory} is an algebraic theory that
  allows some operations to be partially defined, provided the domain of
  definition is characterized by equations between total operations
  \cite{freyd1972,piessens1997}. A motivating example is the theory of
  categories, where composition of morphisms is partially defined. It is well
  known that definition by finite presentation, also known as the ``method of
  generators and relations,'' works in any essentially algebraic theory. There
  are several methods for constructing a free model from the syntax of the
  theory. \emph{Generalized algebraic theories}, a reformulation of essentially
  algebraic theories using dependent type theory, provide a particularly elegant
  solution \cite{cartmell1986,pitts1995}.
  
  Alternatively, our foray into categorical logic (\cref{sec:logic}) yields an
  entirely different and fully explicit construction of relational ologs. This
  construction is based not on universal algebra or dependent type theory but on
  the proof theory of a subset of first-order logic.
\end{remark}

\subsection{Example: Friend of a friend} \label{sec:foaf}

While the formal definition of a relational olog is somewhat abstract, the
specification of a particular relational olog is, as a practical matter, simple
and intuitive, thanks to the graphical language of monoidal categories. To
illustrate, we specify a relational olog in a toy domain affectionately called
``friend of a friend'' (or ``FOAF'') \cite{foaf2014,davis2010}. This domain,
involving people, organizations, and their presences online, is often used to
showcase the Semantic Web technologies (RDF and OWL). We take the formal
specification of FOAF as an inspiration only, making no attempt to replicate its
interface or general philosophy.

The basic types of the olog are ``Person'', ``Organization'', ``Number'', and
``String''. We shall introduce the basic relations as we need them. Here are
some essential relations for our ontology:
\begin{align*}
  \vcenter{\smallhbox{\includegraphics{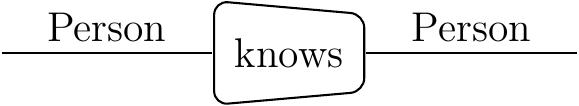}}}, &\hspace{0.5in}
  \vcenter{\smallhbox{\includegraphics{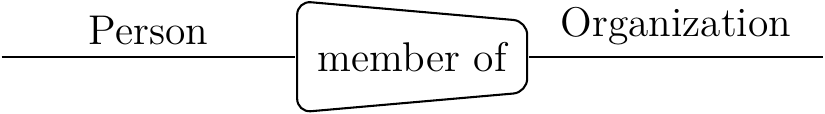}}}, \\
  \vcenter{\smallhbox{\includegraphics{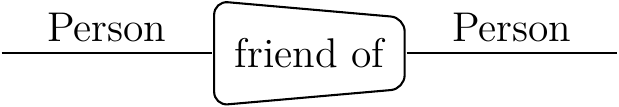}}}, &\hspace{0.5in}
  \vcenter{\smallhbox{\includegraphics{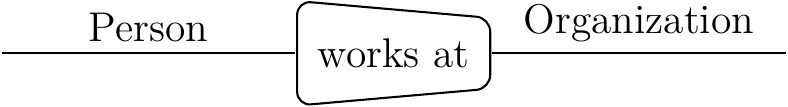}}}.
\end{align*}
The obligatory ``friend of a friend'' relation is just the composite
\begin{equation*}
  \smallbox{\includegraphics{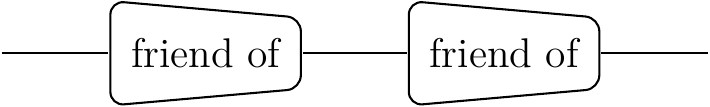}}.
\end{equation*}
(All relations are typed but when the types are clear from context we shall
suppress the type labels.) Some of the basic relations are subsumed by others.
For instance, if Alice is a friend of Bob, then Alice knows Bob; thus, the
``friend of'' relation is subsumed by the ``knows'' relation:
\begin{equation*}
  \vcenter{\smallhbox{\includegraphics{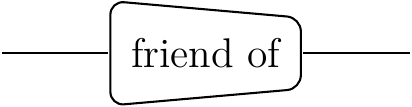}}}
  \quad\implies\quad
  \vcenter{\smallhbox{\includegraphics{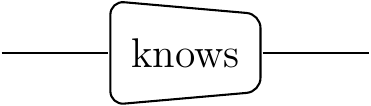}}}.
\end{equation*}
Likewise, the ``works at'' relation is subsumed by the ``member of'' relation
(diagram omitted). Presumably, if Alice knows Bob, then Bob also knows Alice, so
we should declare that the ``knows'' relation is \emph{symmetric}:
\begin{equation*}
  \vcenter{\smallhbox{\includegraphics{img/foaf-knows}}}
  \quad=\quad
  \vcenter{\smallhbox{\includegraphics{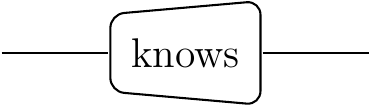}}}.
\end{equation*}
Most people would also say that the ``friend of'' relation is symmetric.

We can attach some basic data to each person, such as their name and age:
\begin{gather*}
  \vcenter{\smallhbox{\includegraphics{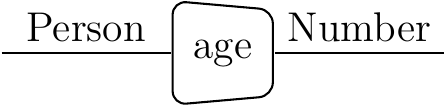}}}, \\
  \vcenter{\smallhbox{\includegraphics{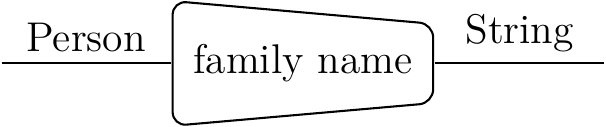}}}, \hspace{0.5in}
  \vcenter{\smallhbox{\includegraphics{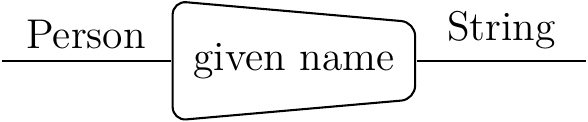}}}.
\end{gather*}
We declare that these relations are (total) \emph{functions}; for instance,
\begin{equation*}
  \vcenter{\smallhbox{\includegraphics{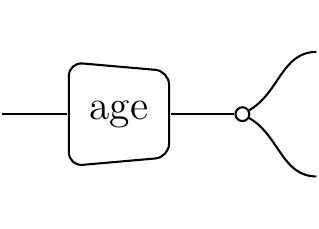}}}
  \quad=\quad
  \vcenter{\smallhbox{\includegraphics{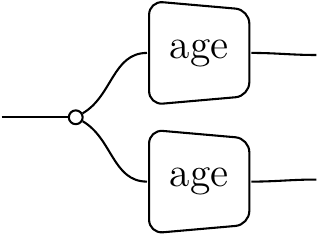}}},
  \hspace{0.5in}
  \vcenter{\smallhbox{\includegraphics{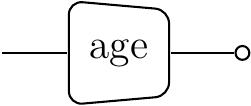}}}
  \quad=\quad
  \vcenter{\smallhbox{\includegraphics{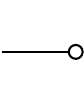}}}.
\end{equation*}
In RDF and OWL, functional relations whose codomains are primitive data types
are called ``properties'' and are treated specially.

For extra flavor, we complement the ``friend of'' relation with an ``enemy of''
relation. We can then define the notorious relation of ``frenemy'' as the
\emph{intersection} of friend and enemy:
\begin{equation*}
  \vcenter{\smallhbox{\includegraphics{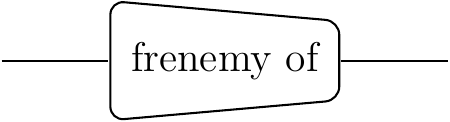}}}
  \quad:=\quad
  \vcenter{\smallhbox{\includegraphics{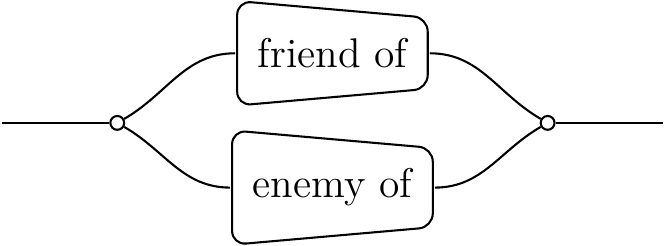}}}.
\end{equation*}

Next, we model some basic family relationships. Having introduced a ``child of''
relation, the ``parent of'' relation is just its \emph{inverse}:
\begin{equation*}
  \vcenter{\smallhbox{\includegraphics{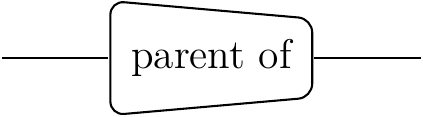}}}
  \quad:=\quad
  \vcenter{\smallhbox{\includegraphics{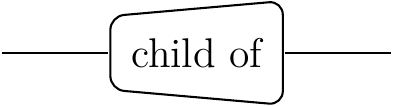}}}.
\end{equation*}
An ``ancestor of'' relation should possess several properties. First, it should
subsume ``parent of'':
\begin{equation*}
  \vcenter{\smallhbox{\includegraphics{img/foaf-parent-of}}}
  \quad\implies\quad
  \vcenter{\smallhbox{\includegraphics{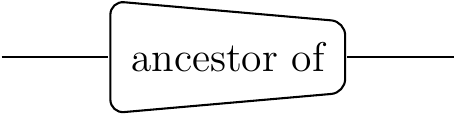}}}.
\end{equation*}
It should be \emph{transitive},
\begin{equation*}
  \vcenter{\smallhbox{\includegraphics{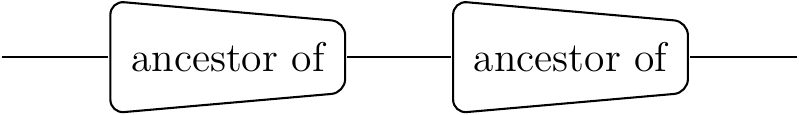}}}
  \quad\implies\quad
  \vcenter{\smallhbox{\includegraphics{img/foaf-ancestor-of}}},
\end{equation*}
(an ancestor of an ancestor is an ancestor) and \emph{reflexive},
\begin{equation*}
  \vcenter{\smallhbox{\includegraphics{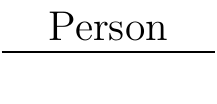}}}
  \quad\implies\quad
  \vcenter{\smallhbox{\includegraphics{img/foaf-ancestor-of}}},
\end{equation*}
(by convention, we regard every person as their own ancestor). Finally, the
ancestor relation should be \emph{antisymmetric},
\begin{equation*}
  \vcenter{\smallhbox{\includegraphics{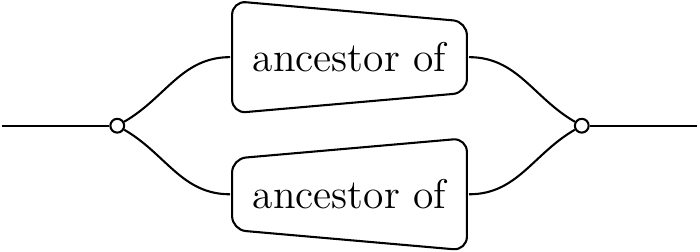}}}
  \quad\implies\quad
  \vcenter{\smallhbox{\includegraphics{img/foaf-person-id}}},
\end{equation*}
(if two people are both ancestors and descendants of each other, then they are
the same person). We can now \emph{deduce}, rather an declare as an axiom, that
the relation ``grandparent of'' is subsumed by ``ancestor of'':
\begin{align*}
  \vcenter{\smallhbox{\includegraphics{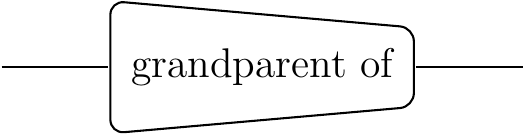}}}
  \quad&:=\quad \vcenter{\smallhbox{\includegraphics{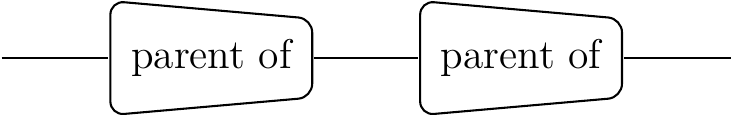}}} \\
  \quad&\implies\quad \vcenter{\smallhbox{\includegraphics{img/foaf-ancestor-transitive}}} \\
  \quad&\implies\quad \vcenter{\smallhbox{\includegraphics{img/foaf-ancestor-of}}}.
\end{align*}
In summary, ``ancestor of'' is a partial order that subsumes ``parent of.'' It
would be more precise to declare that ``ancestor of'' is the partial order
\emph{generated} by ``parent of,'' but that cannot be expressed in a relational
olog.

So far we have seen only relatively simple, binary relations. Let us now
consider more complex compound relations and relations of arity different than
two. The class (unary relation) of employed people can be defined as the class
of people who work at some organization:
\begin{equation*}
  \vcenter{\smallhbox{\includegraphics{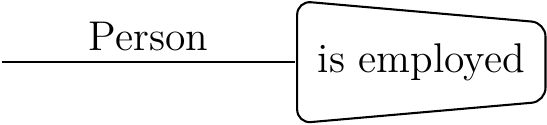}}}
  \quad:=\quad
  \vcenter{\smallhbox{\includegraphics{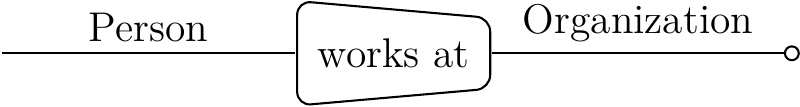}}}.
\end{equation*}
We declare a ternary relation ``salary'' with signature
\begin{equation*}
  \smallbox{\includegraphics{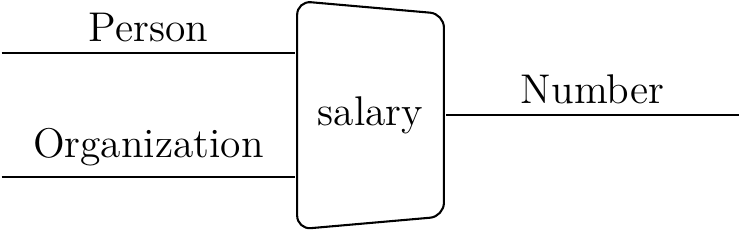}}.
\end{equation*}
We assert that ``salary'' is a \emph{partial function} (diagram omitted). Its
domain of definition is characterized by
\begin{equation*}
  \vcenter{\smallhbox{\includegraphics{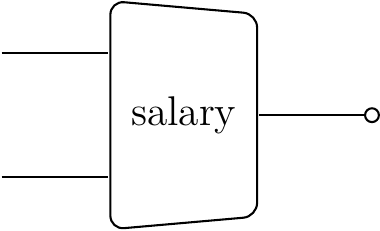}}}
  \quad=\quad
  \vcenter{\smallhbox{\includegraphics{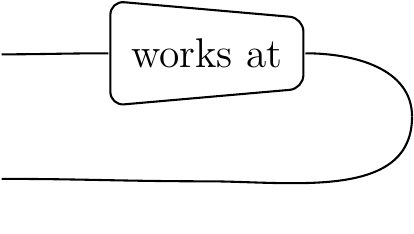}}}.
\end{equation*}
Alternatively, we can take this equation as the definition of the ``works at''
relation: a person works at an organization if and only if they draw a salary
from that organization. As another example, a ``colleague'' is a person whom you
know and with whom you share a membership at some organization:
\begin{equation*}
  \vcenter{\smallhbox{\includegraphics{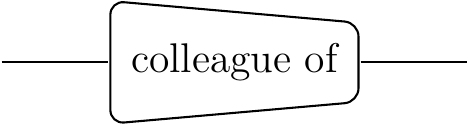}}}
  \quad:=\quad
  \vcenter{\smallhbox{\includegraphics{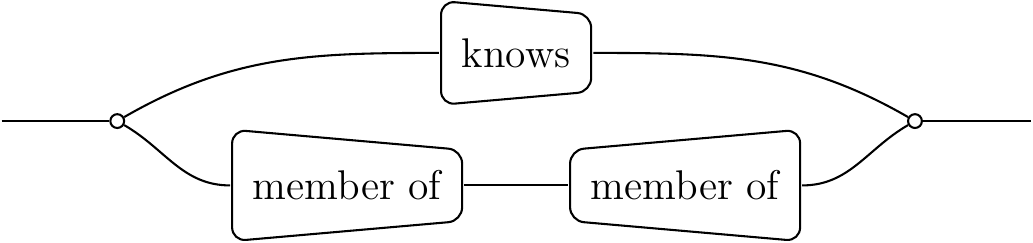}}}.
\end{equation*}
A simple calculation, using the symmetry of ``knows,'' proves that ``colleague
of'' is a symmetric relation. More fancifully, a romantic ``love triangle'' is
the ternary relation
\begin{equation*}
  \smallbox{\includegraphics{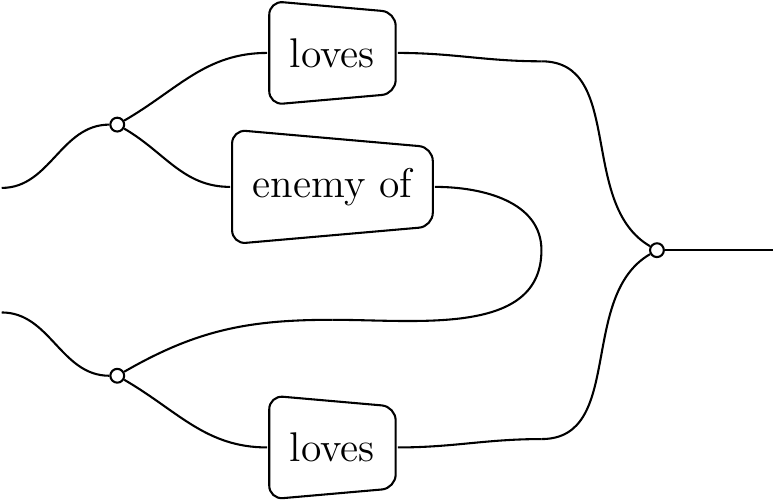}}.
\end{equation*}
Thus, a love triangle consists of two people, mutually enemies, who both love a
third person. Assuming ``enemy of'' is symmetric, this relation is symmetric in
its first two arguments. As long as we're indulging in Shakespearean themes, we
can also define the quaternary relation of ``intergenerational family feud'':
\begin{equation*}
  \smallbox{\includegraphics{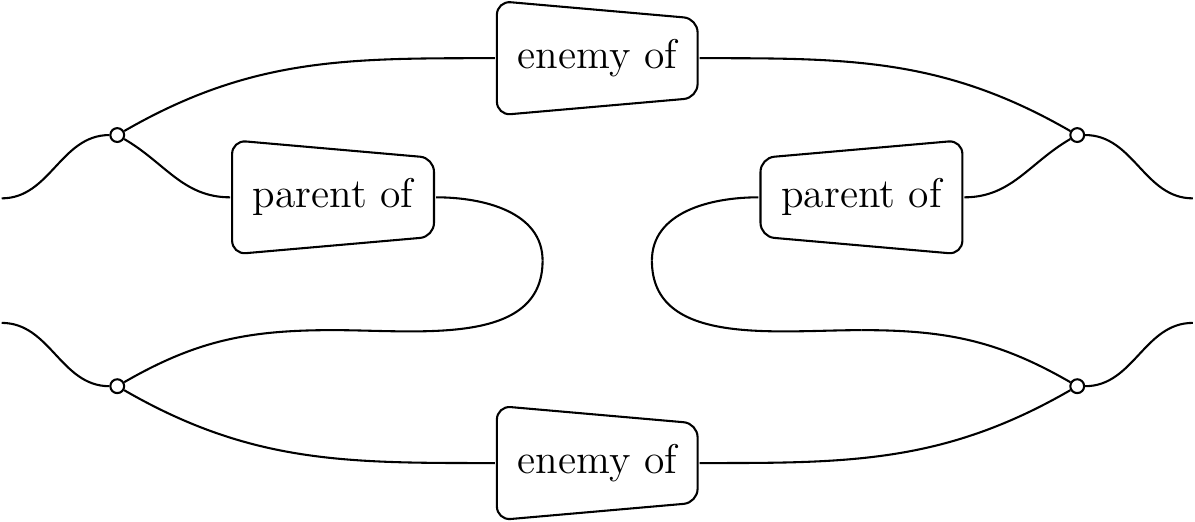}}.
\end{equation*}
Such a feud consists of two parent-child pairs, where the parents are enemies
and the children are also enemies.

Although it might be entertaining to continue along these lines, we shall stop
here. We hope we have convinced the reader that relational ologs are both
expressive and intuitive. With a little practice, it becomes easy to write down
complex relations and read them at a glance. However, there are certain natural
constraints that cannot be expressed in a relational olog, as developed so far.
For instance, if we took our ontology more seriously, we might prefer to dismiss
the possibility of ``frenemies'' and declare that ``friend of'' and ``enemy of''
are disjoint relations. At present we cannot express this constraint because we
cannot express the empty relation. Nor can we express unions, so we cannot
declare that, for example, the ``parent of'' relation is the union of the
``mother of'' and ``father of'' relations. In \cref{sec:expressive-olog} we
explain how to overcome these limitations.

\section{Instance data} \label{sec:instances}

A distinguishing feature of categorical knowledge representation, compared to
the logical paradigm, is a rich and flexible notion of instance data. The idea
of instance data is simply that of \emph{functorality}. To be precise,
\emph{instance data} for a relational olog $\cat{B}$ in an arbitrary bicategory
of relations $\cat{D}$ is a structure-preserving functor $D: \cat{B} \to
\cat{D}$. We call $\cat{D}$ the \emph{data category} for the instance data $D$.
Unsurprisingly, the ``standard'' data category is $\Rel$, the category of sets
and relations. We study this important case and several others below.

In knowledge representation systems based on description logic, instances are
represented by named constants \emph{within the logical system}, usually called
``individuals.'' There are several advantages to the categorical notion of
instance data. First, there is a clean separation between universal concepts,
stored in the olog $\cat{B}$, and instantiations of these concepts, stored in
the functor $D: \cat{B} \to \cat{D}$. In description logic, this separation is
only partly achieved by partitioning the axioms of the knowledge base into a
``TBox'' and an ``ABox'' (see \cref{sec:DL}). Besides its aesthetic appeal, the
separation of universal and particular knowledge has important practical
benefits. In modern ``big data'' applications involving a large number of
individuals, storing instance data in a suitable database, rather than as
logical sentences, becomes a practical necessity. Of course, one can define ad
hoc schemes for translating between the logical system and the database system.
The point is that functors provide a simple, mathematically precise notion of
``translation'' between systems.

Another advantage, less easily achieved by ad hoc devices, is that we can define
``non-standard'' instance data by using data categories besides $\Rel$. This
possibility arises because ologs, unlike logical theories, are algebraic
structures and hence come equipped with a general notion of structure-preserving
maps, namely functors. From this point of view, instance data for relational
ologs is closely connected to \emph{functorial semantics} in categorical logic.
We shall return to categorical logic in \cref{sec:logic}.

Let us add, parenthetically, that it \emph{is} possible to represent individuals
inside a relational olog. An \emph{individual} of type $X$ is a map $c: I \to
X$, since a function from the singleton set $I = \{*\}$ to $X$ picks out an
element of $X$. In our view, individuals should be used sparingly to represent
concepts that are inherently singletons. For example, there is at any given time
only one Dalai Lama, so it would be reasonable to represent the Dalai Lama as an
individual of type ``Person.'' It should suffice to include most ``ordinary''
people only as instance data. In general, the olog should contain only universal
concepts, even if they are singletons, while the instance data contains all
particular knowledge. (We grant that when building an ontology it is not always
easy to distinguish between universal and particular, but often the difference
is clear enough.)

In this section, we consider four different kinds of instance data for
relational ologs. The first and second are interpreted as relational databases
and graph databases, respectively. With minor modification, these two concepts
apply equally well to functional ologs and are treated by Spivak and Kent
\cite{spivak2012a}. The other two kinds of instance data are specific to
relational ologs. From the matrix calculus of relations, an extension of the
familiar boolean algebra, we derive instance data in the category of boolean
matrices. Finally, we consider ``non-standard'' instance data in the category of
linear relations. This data category can be used to model linear dynamical
systems.

\subsection{Relational databases} \label{sec:relational-db}

The default category for instance data, suitable for most applications, is the
category of sets and relations. Thus, without further qualification,
\emph{instance data} for a relational olog $\cat{B}$ is a structure-preserving
functor $D: \cat{B} \to \Rel$.

A structure-preserving functor $D: \cat{B} \to \Rel$ is defined by the following
data. Each basic type $X$ of $\cat{B}$ is mapped to a set $D(X)$ and each basic
relation $R: X \to Y$ of $\cat{B}$ is mapped to a subset $D(R)$ of $D(X) \times
D(Y)$. The set $D(X)$ contains the \emph{instances} of type $X$ and the subset
$D(R)$ tabulates the instances of type $X$ that are in relation $R$ with
instances of type $Y$. By functorality, this data determines the action of $D$
on every object and every morphism of $\cat{B}$, since $\cat{B}$ is generated by
the basic types and relations. Moreover, in order for $D$ to be well-defined, it
must preserve all the subsumption axioms of $\cat{B}$. Thus, for every
subsumption axiom $R \Rightarrow S$, we require that $D(R) \subseteq D(S)$. If
the mapping $D$ satisfies these properties, it defines valid instance data for
the olog.

Instance data is straightforwardly interpreted as a relational (SQL) database
\cite{codd1970}. In the idealized database interpretation, there is a
single-column table $D(X)$ for each basic type $X$, which defines the primary
key of each instance of type $X$, and a multi-column table $D(R)$ for each basic
relation $R$, whose columns are foreign keys associated with the domain and
codomain types of $R$. The tables $D(R)$ are called ``association tables'' or
``junction tables'' in SQL jargon. Association tables are the standard way of
representing many-to-many relationships in a relational database. The primary
key of the association table is the product of the foreign keys of the columns.

An example should make this clear. Instance data for a fragment of the ``friend
of a friend'' ontology (\cref{sec:foaf}) is shown below.
\begin{center}
  \begin{tabular}{|l|}
    \multicolumn{1}{c}{Person} \\ \Xhline{2\arrayrulewidth}
    \textbf{ID} \\ \Xhline{2\arrayrulewidth}
    P1 \\ \hline 
    P2 \\ \hline 
    P3 \\ \hline 
    P4 \\ \hline
  \end{tabular}
  \qquad
  \begin{tabular}{|l|}
    \multicolumn{1}{c}{Organization} \\ \Xhline{2\arrayrulewidth}
    \textbf{ID} \\ \Xhline{2\arrayrulewidth}
    O1 \\ \hline 
    O2 \\ \hline
  \end{tabular}
  \qquad
  \begin{tabular}{|l|l|l|}
    \multicolumn{3}{c}{salary} \\ \Xhline{2\arrayrulewidth}
    Person & Organization & Number \\ \Xhline{2\arrayrulewidth}
    P1 & O1 & 30,000 \\ \hline 
    P3 & O2 & 40,000 \\ \hline
  \end{tabular}
  \\
  \begin{tabular}{|l|l|}
    \multicolumn{2}{c}{friend of} \\ \Xhline{2\arrayrulewidth}
    Person 1 & Person 2 \\ \Xhline{2\arrayrulewidth}
    P1 & P2 \\ \hline 
    P2 & P1 \\ \hline
  \end{tabular}
  \qquad
  \begin{tabular}{|l|l|}
    \multicolumn{2}{c}{knows} \\ \Xhline{2\arrayrulewidth}
    Person 1 & Person 2 \\ \Xhline{2\arrayrulewidth}
    P1 & P2 \\ \hline 
    P2 & P1 \\ \hline
    P3 & P4 \\ \hline
    P4 & P3 \\ \hline
  \end{tabular}
\end{center}
As required by functorality, the ``friend of'' and ``knows'' tables are
symmetric and the ``friend of'' table is a subset of the ``knows'' table. In
practice we expect to deviate slightly from the idealized database
interpretation to obtain a more compact database schema. Most importantly,
instead of representing \emph{maps} as individual tables, e.g.,
\begin{center}
  \begin{tabular}{|l|}
    \multicolumn{1}{c}{Person} \\ \Xhline{2\arrayrulewidth}
    \textbf{ID} \\ \Xhline{2\arrayrulewidth}
    P1 \\ \hline
    P2 \\ \hline
    P3 \\ \hline
    P4 \\ \hline
  \end{tabular}
  \quad
  \begin{tabular}{|l|l|}
    \multicolumn{2}{c}{age} \\ \Xhline{2\arrayrulewidth}
    Person & Number \\ \Xhline{2\arrayrulewidth}
    P1 & 21 \\ \hline
    P2 & 37 \\ \hline
    P3 & 22 \\ \hline
    P4 & 54 \\ \hline
  \end{tabular}
  \quad
  \begin{tabular}{|l|l|}
    \multicolumn{2}{c}{family name} \\ \Xhline{2\arrayrulewidth}
    Person & String \\ \Xhline{2\arrayrulewidth}
    P1 & Doe \\ \hline
    P2 & Smith \\ \hline
    P3 & Williams \\ \hline
    P4 & Jones \\ \hline
  \end{tabular}
  \quad
  \begin{tabular}{|l|l|}
    \multicolumn{2}{c}{given name} \\ \Xhline{2\arrayrulewidth}
    Person & String \\ \Xhline{2\arrayrulewidth}
    P1 & Alice \\ \hline
    P2 & Bob \\ \hline
    P3 & Carol \\ \hline
    P4 & David \\ \hline
  \end{tabular}
  \quad,
\end{center}
it would be more conventional to combine the maps with common domain into a
single table, e.g.,
\begin{center}
  \begin{tabular}{|l|l|l|l|}
    \multicolumn{4}{c}{Person} \\ \Xhline{2\arrayrulewidth}
    \textbf{ID} & age & family name & given name \\ \Xhline{2\arrayrulewidth}
    P1 & 21 & Doe & Alice \\ \hline
    P2 & 37 & Smith & Bob \\ \hline
    P3 & 22 & Williams & Carol \\ \hline
    P4 & 54 & Jones & David \\ \hline
  \end{tabular}
  \quad.
\end{center}
We could represent partial maps similarly, using \texttt{NULL} to indicate
undefined values.

By formalizing the association of instance data with an olog, it becomes
possible to migrate data in a precise, principled way. Suppose that, in light of
new information or a changing world, we decide to update the concepts in our
ontology $\cat{B}$, yielding a new ontology $\cat{B}'$. Ideally we can translate
the concepts of $\cat{B}$ into concepts of $\cat{B}'$ by means of a functor $F:
\cat{B} \to \cat{B}'$. The functor $F$ can then be used to migrate the original
data $D: \cat{B} \to \Rel$ to updated data $D': \cat{B}' \to \Rel$. This
paradigm is called \emph{functorial data migration} and is investigated by
Spivak and collaborators in a series of papers
\cite{spivak2012b,spivak2012a,spivak2015,spivak2016}. Functorial data migration
has been developed for functional ologs. At least one data migration functor,
the pullback functor, has an obvious analogue for relational ologs. It is an
open question whether the other data migration functors, the left and right
pushforward functors, admit analogues. This question, while important, is not
pursued further here.

\subsection{Graph databases} \label{sec:graph-db}

Graph databases \cite{angles2008,rodriguez2010} provide a natural storage model
for the instance data of an ontology. In fact, the development of graph
databases can be traced back to the semantic networks and frame systems of the
early era of knowledge representation \cite[Fig.\ 1]{angles2008}. Even today it
is sometimes suggested that graph databases \emph{are} knowledge representation
systems. That is not so: graph databases offer a generic data storage model that
need not impose any logical constraints on the data. Still, the confusion exists
precisely because graph databases are so well-aligned with the practice of
knowledge representation. In this section, we explain how instance data $D:
\cat{B} \to \Rel$ can be interpreted as a graph database.

Unlike relational databases, which are practically synonymous with the
Structured Query Language (SQL), graph databases are an emerging technology with
no universally accepted data model or query language. Our construction maps
easily onto Apache TinkerPop3 \cite{tinkerpop3}, an open standard for graph
databases with moderate vendor adoption. The Resource Description Framework
(RDF), a core component of the Semantic Web, can also be regarded as a graph
database \cite{angles2005}, especially when coupled with a graph query language
like SPARQL \cite{sparql2008}.

The interpretation of instance data as a graph database involves a construction
called the ``category of elements.''

\begin{definition}  
  Let $\cat{B}$ be a bicategory of relations. The \emph{category of elements} of
  a structure-preserving functor $F: \cat{B} \to \Rel$, denoted $\int_{\cat{B}}
  F$ or $\int F$, has as objects, the pairs
  \begin{equation*}
    (X,x) \qquad\text{where}\qquad X \in \cat{B}, \quad x \in F(X),
  \end{equation*}
  and as morphisms $(X,x) \to (Y,y)$, the morphisms in $\cat{B}$
  \begin{equation*}
    R: X \to Y \qquad\text{such that}\qquad x\,F(R)\,y.
  \end{equation*}
  Composition and identity morphisms are inherited from $\cat{B}$.
\end{definition}

\begin{remark}
  Technically, this definition is not included in the usual notion of a
  \emph{category of elements} \cite[\S 2.4]{riehl2016}, which applies to
  functors $F: \cat{C} \to \Set$, nor in the more general \emph{Grothendieck
  construction} \cite[\S 1.10]{jacobs1999}, which applies to functors $F:
  \cat{C} \to \Cat$. However, it is evidently the same idea, so we will use the
  same terminology.
\end{remark}

A category of elements $\int_{\cat{B}} F$ is itself a bicategory of relations,
with its structure inherited from both $\cat{B}$ and $\Rel$. The monoidal
product is defined by
\begin{equation*}
  (X,x) \otimes (Y,y) := (X \otimes Y, (x,y)), \qquad
  I_{\int F} := (I_{\cat{B}}, *).
\end{equation*}
The diagonal maps are $\Copy_{(X,x)} := \Copy_X$ and $\Delete_{(X,x)} :=
\Delete_X$. These morphisms behave as expected because $F$ is
structure-preserving. For example, there is a copying morphism $\Copy_X: (X,x)
\to (X \otimes X, (x',x''))$ in $\int F$ if and only if $x = x'$ and $x = x''$.
Finally, the 2-morphisms of $\int F$ are just the 2-morphisms of $\cat{B}$. (It
is tempting to declare that $R \Rightarrow S$ in $\int F$ whenever $F(R)
\Rightarrow F(S)$ in $\Rel$, but under that definition $\int F$ is not
necessarily locally posetal.)

Given instance data $D: \cat{B} \to \Rel$, we think of $\int D$ as a graph
database as follows. The vertices of the graph are the objects $(X,x)$ of $\int
D$. The vertex labels (or vertex types) are the objects $X$ of $\cat{B}$, given
by the canonical projection functor $\int_{\cat{B}} D \to \cat{B}$. The directed
edges of the graph are morphisms $(X,x) \xrightarrow{R} (Y,y)$ of $\int D$. The
edge labels (or edge types) are the morphisms $R$ of $\cat{B}$, again given by
the projection functor $\int_{\cat{B}} D \to \cat{B}$. As an example, the
instance data for the FOAF ontology yields the graph database:

\begin{center}
  \begin{tikzpicture}[
      obj/.style={draw,circle,align=center,scale=0.8},
      >=latex, elabel/.style={scale=0.7}]
    \node[obj] (P1) {P1 \\ {\footnotesize Person}};
    \node[obj, right=20ex of P1] (P2) {P2 \\ {\footnotesize Person}};
    \node[obj, below=7.5ex of P1] (P3) {P3 \\ {\footnotesize Person}};
    \node[obj, right=20ex of P3] (P4) {P4 \\ {\footnotesize Person}};
    
    \draw[->] (P1.0) to[bend left=20] node[elabel,above] {knows} (P2.180);
    \draw[->] (P2.180) to[bend left=20] node[elabel,below] {knows} (P1.0);
    \draw[->] (P1.30) to[bend left] node[elabel,above] {friend of} (P2.150);
    \draw[->] (P2.210) to[bend left] node[elabel,below] {friend of} (P1.330);
    
    \draw[->] (P3.0) to[bend left=20] node[elabel,above] {knows} (P4.180);
    \draw[->] (P4.180) to[bend left=20] node[elabel,below] {knows} (P3.0);
  \end{tikzpicture}
\end{center}

As with relational databases, the idealized graph database interpretation may
require modification to accommodate real-world database systems. The size of the
graph can be considerably reduced by representing maps with ``primitive type''
codomains (e.g., ``Number'' or ``String'') as ``vertex properties,'' a feature
supported by most graph databases. Symmetric relations, such as ``knows'' and
``friend of,'' can be represented by one undirected edge instead of two directed
edges. Another issue, not arising with relational databases, is the
representation of relations whose domain or codomain is a product of basic
types, such as the ``salary'' relation. If the database included vertices only
for basic types, we would need directed \emph{hyper}edges \cite{gallo1993}, a
feature not supported by most graph databases. The solution is to include
vertices for product types and edges for the projection morphisms. In fact,
this encoding is accomplished automatically by the monoidal product in the
category of elements.

\subsection{Boolean matrices} \label{sec:BoolMat}

We now consider instance data derived from the matrix calculus of relations.
Unlike relational and graph databases, the matrix calculus has no analogue for
functional ologs. It is a special case of \emph{categorical matrix calculus},
which can be performed in any biproduct category \cite[\S 3.5.5]{coecke2010},
\cite{harding2009}.

Let $\Bool = \{0,1\}$ be the commutative ``rig'' (commutative ring without
negatives) of booleans, whose operations are defined by
\begin{gather*}
  0+0=0, \quad 0+1=1+0=1, \quad 1+1=1 \\
  0 \cdot 0 = 0, \quad 0 \cdot 1 = 1 \cdot 0 = 0, \quad 1 \cdot 1 = 1.
\end{gather*}
That is, addition in $\Bool$ is logical disjunction and multiplication in
$\Bool$ is logical conjunction.

\begin{definition}
  The \emph{category $\BoolMat$ of boolean matrices} has as objects the natural
  numbers and as morphisms $m \to n$ the $m \times n$ matrices over $\Bool$.
  Composition is defined by matrix multiplication and the identity morphisms are
  the identity matrices.
\end{definition}

We interpret a matrix $R \in \Bool^{m \times n}$ as a relation with domain $[m] =
\{1,\dots,m\}$ and codomain $[n] = \{1,\dots,n\}$, where individual $i \in [m]$
is in relation $R$ with individual $j \in [n]$ if and only if $R_{i,j} = 1$. As
expected, composition in $\BoolMat$ is given by existential quantification:
\begin{equation*}
  (R \cdot S)_{i,k} = 1 \qquad\text{iff}\qquad
    \exists j: R_{i,j} = 1 \wedge S_{j,k} = 1.
\end{equation*}

The category of boolean matrices is a bicategory of relations. There is a
2-morphism $R \Rightarrow S$ if and only if $R \leq S$ (elementwise). The
monoidal product is the tensor product of matrices:
\begin{equation*}
  R \otimes S := \begin{pmatrix}
    R_{1,1} S & \cdots & R_{1,n} S \\
    \vdots & \ddots & \vdots \\
    R_{m,1} S & \cdots & R_{m,n} S
  \end{pmatrix}.
\end{equation*}
The diagonals are defined by
\begin{equation*}
  \Copy_n := \begin{pmatrix}
    e_1 e_1^\top & \cdots & e_n e_n^\top
  \end{pmatrix} \in \Bool^{n \times n^2}
  \qquad\text{and}\qquad
  \Delete_n := \begin{pmatrix}
    1 \\ \vdots \\ 1
  \end{pmatrix} \in \Bool^{n \times 1},
\end{equation*}
where $e_i$ is the $i$th standard basis vector. With these definitions, the
dagger is simply the matrix transpose, $R^\dagger = R^\top$. Given matrices $R,S
\in \Bool^{m,n}$, a quick calculation shows that local intersections are given
by the elementwise (Hadamard) product:
\begin{equation*}
  R \cap S := \Copy_m (R \otimes S) \Merge_n
    = \begin{pmatrix}
      R_{1,1} S_{1,1} & \cdots & R_{1,n} S_{1,n} \\
      \vdots & \ddots & \vdots \\
      R_{m,1} S_{m,1} & \cdots & R_{m,n} S_{m,n}
    \end{pmatrix}
    =: R \odot S.
\end{equation*}
Thus we recover the usual intersection of relations.

Anticipating \cref{sec:expressive-olog}, we equip $\BoolMat$ with a
\emph{second} monoidal product, the direct sum of matrices:
\begin{equation*}
  R \oplus S := \begin{pmatrix} R & 0 \\ 0 & S \end{pmatrix}.
\end{equation*}
Define a codiagonal with respect to the direct sum by
\begin{equation*}
  \varMerge_n := \begin{pmatrix} I_n \\ I_n \end{pmatrix} \in \Bool^{2n \times n}
  \qquad\text{and}\qquad
  \varCreate_n := () = \text{$0 \times n$ matrix},
\end{equation*}
where $I_n$ is the $n \times n$ identity matrix. We recover unions of relations
from the formula
\begin{equation*}
  R \cup S := \varCopy_m (R \oplus S) \varMerge_n = R+S.
\end{equation*}
This construction will be revisited and generalized in
\cref{sec:expressive-olog}.

Matrix data---instance data in the category of boolean matrices---for a
relational olog $\cat{B}$ is a structure-preserving functor $D: \cat{B} \to
\BoolMat$. Each basic type $X$ of $\cat{B}$ is mapped to a natural number $D(X)$
and each basic relation $R: X \to Y$ is mapped to a $D(X) \times D(Y)$ matrix
$D(R)$ over $\Bool$, such that all the subsumption axioms of $\cat{B}$ are
satisfied. For example, the instance data for the FOAF ontology becomes
\begin{equation*}
  D(\text{friend of}) = \begin{pmatrix}
    0 & 1 & 0 & 0 \\
    1 & 0 & 0 & 0 \\
    0 & 0 & 0 & 0 \\
    0 & 0 & 0 & 0
  \end{pmatrix}, \qquad
  D(\text{knows}) = \begin{pmatrix}
    0 & 1 & 0 & 0 \\
    1 & 0 & 0 & 0 \\
    0 & 0 & 0 & 1 \\
    0 & 0 & 1 & 0
  \end{pmatrix}.
\end{equation*}
The two matrices are symmetric because the corresponding relations in $\cat{B}$
are. Note that relations with (theoretically) infinite domain or codomain, such
as ``family name,'' ``given name,'' and ``salary,'' cannot be represented as
matrices.

The matrix representation of relations plays an important role in data analysis
applications. A symmetric relation $R$ with the same domain and codomain, such
as the ``friend of'' relation, is often regarded as an undirected graph, with
$D(R)$ its \emph{adjacency matrix}. This simple observation is the starting
point of the spectral analysis of network data, a rich and active area of
statistical research \cite{mahoney2016}. Matrix data offers yet another example
of how instance data can be used to connect an ontology to another computational
system in a mathematically precise way.

\subsection{Linear relations} \label{sec:VectRel}

Relational and graph databases are both realized by functors $D: \cat{B} \to
\Rel$, and matrix data can be regarded as a repackaged functor $D: \cat{B} \to
\FinRel$, where $\FinRel$ is the category of \emph{finite} sets and relations.
On the basis of these examples one might suppose that, in general, instance data
amounts to a functor into $\Rel$. That would be mistaken. In this section, we
describe a counterexample of practical significance, the category of
\emph{linear} relations. This category has been studied by Baez and Erbele
\cite{baez2015a}, and independently by Bonchi, Soboci{\'n}ski, and Zanasi
\cite{bonchi2014}, as a model of signal flow diagrams in control theory. Note
that the material in this section is peripheral to the main development of the
paper and can be skipped without loss of continuity.

\begin{definition}
  The \emph{category of linear relations}, denoted $\VectRel_k$, is the category
  whose objects are finite-dimensional vector spaces (over a fixed field $k$)
  and whose morphisms $L: U \to V$ are \emph{linear relations}, which are vector
  subspaces
  \begin{equation*}
    L \subseteq U \oplus V.
  \end{equation*}
  Composition and identity morphisms are defined as in $\Rel$; thus, given
  linear relations $L: U \to V$ and $M: V \to W$, the composite $LM: U \to W$ is
  \begin{equation*}
    LM = \left\{(u,w)\ \middle|\ 
      \exists v \in V: (u,v) \in L \wedge (v,w) \in M\right\}.
  \end{equation*}
\end{definition}

The category of linear relations is a bicategory of relations. The 2-morphisms
are subspace inclusions. The monoidal product is the direct sum (which we always
write as $\oplus$, not $\otimes$) and the monoidal unit is the zero vector
space. The diagonal is defined by
\begin{equation*}
  \Copy_V : \left\{ \begin{aligned}
    V &\to V \oplus V \\
    v &\mapsto (v,v)
  \end{aligned} \right.
  \qquad\text{and}\qquad
  \Delete_V : \left\{ \begin{aligned}
    V &\to \{0\} \\
    v &\mapsto 0.
  \end{aligned} \right.
\end{equation*}
Given these definitions, the maps in $\VectRel_k$ are just linear maps, in the
usual sense. Be warned that the dagger is \emph{not} the matrix transpose, in
contrast to $\BoolMat$; indeed, the transpose of a linear map is always another
linear map, but $f^\dagger$ is not a linear map unless $f$ is invertible. As in
$\Rel$, the dagger simply effects a formal exchange of inputs and outputs.

However, the linear transpose leads to interesting structure not present in
$\Rel$. The transpose of the duplication map $\Copy_V$ is the \emph{addition}
map
\begin{equation*}
  \varMerge_V := \Copy_V^\top : \left\{ \begin{aligned}
    V \oplus V &\to V \\
    (v_1,v_2) &\mapsto v_1 + v_2
  \end{aligned} \right.
\end{equation*}
and the transpose of the deletion map $\Delete_V$ is the \emph{zero} map
\begin{equation*}
  \varCreate_V := \Delete_V^\top : \left\{ \begin{aligned}
    \{0\} &\to V \\
    0 &\mapsto 0.
  \end{aligned} \right.
\end{equation*}
The linear relations $\varCopy_V := \varMerge_V^\dagger: V \to V \oplus V$ and
$\varDelete_V := \varCreate_V^\dagger: V \to \{0\}$ are called \emph{coaddition}
and \emph{cozero}, respectively.

The family of maps $(\varMerge_V,\varCreate_V)$ form a codiagonal structure on
$\VectRel_k$. Moreover, every linear relation $L: U \to V$ is a \emph{lax monoid
homomorphism} with respect to this structure, meaning that
\begin{equation*}
  (L \oplus L) \varMerge_V \implies \varMerge_U L
  \qquad\text{and}\qquad
  \varCreate_V \implies \varCreate_U L.
\end{equation*}
The duality exhibited here motivates the following definition.

\begin{definition}[{\cite[\S 5]{carboni1987}}]
  An \emph{abelian bicategory of relations} is locally posetal 2-category
  $\cat{B}$ that is also a symmetric monoidal category $(\cat{B},\otimes,I)$
  with diagonals $(X,\Copy_X,\Delete_X)$ and codiagonals
  $(X,\varMerge_X,\varCopy_X)$ such that
  \begin{itemize}
    \item every morphism $R: X \to Y$ is a lax comonoid homomorphism and a lax
    monoid homomorphism;
    \item the morphisms $\Copy_X,\Delete_X,\varMerge_X,\varCreate_X$ have right
    adjoints, denoted $\Merge_X,\Create_X,\varCopy_X,\varDelete_X$;
    \item both pairs $(\Copy_X,\Merge_X)$ and $(\varCopy_X,\varMerge_X)$ obey
    the Frobenius equations.
  \end{itemize}
\end{definition}

\begin{remark}
  An equivalent, more succinct definition is that an \emph{abelian bicategory of
  relations} is a locally posetal 2-category $\cat{B}$ such that both $\cat{B}$
  and $\cat{B}^\oppositeTwo$ are bicategories of relations with respect to the
  same monoidal product. Here $\cat{B}^\oppositeTwo$ is the 2-category $\cat{B}$
  with all 2-morphisms reversed. See \cref{sec:expressive-olog} for further
  discussion.
\end{remark}

The category of linear relations, $\VectRel_k$ is an abelian bicategory of
relations. The usual category of relations, $\Rel$, is \emph{not} an abelian
bicategory of relations.

The category of linear relations would obviously be an inappropriate data
category for the FOAF ontology. Linear relations are useful for representing
systems of linear ordinary differential equations (ODEs), as argued by Baez and
Erbele \cite{baez2015a}. The graphical language of monoidal categories then
formalizes the signal flow diagrams that appear in control theory and other
engineering fields. In this setting, one takes the field $k = \R(s)$, the real
numbers $\R$ with a formally adjoined indeterminate $s$. Upon taking Laplace
transforms, differentiation becomes the linear operation of scalar
multiplication by $s$ and integration becomes multiplication by $1/s$. A linear
relation $L: k^m \to k^n$ is a system of linear, constant-coefficient ODEs with
$m$ input signals and $n$ output signals.

The damped, driven harmonic oscillator provides a simple, one-dimensional
example. The equation of motion is
\begin{equation*}
  m \frac{d^2 x}{dt^2} + \beta \frac{dx}{dt} + \kappa x(t) = F(t),
\end{equation*}
where $x$ is a position or angle and $F$ is a driving force. Provided the
oscillations are not too large, this equation accurately describes a mass on a
spring or a pendulum under gravity, subject to an additional driving force. The
system can be represented by a linear relation $k \to k$ with input $F$ and
output $x$:
\begin{center}
  \includegraphics{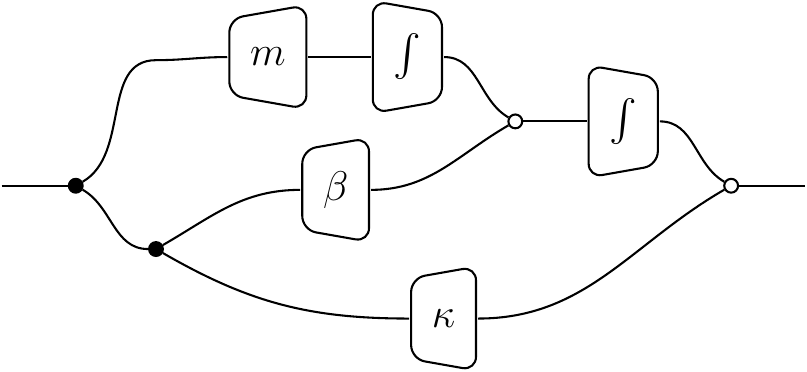}
\end{center}
The dark nodes are coaddition and the light nodes are coduplication. It is
perhaps easier to read the diagram from right to left, noting that the formal
inverse of the integration map $\int$ is the differentiation map $\frac{d}{dx}$.
The same diagram is drawn in conventional engineering notation in \cite[Fig.\
2.1]{friedland2012}. A string diagram for a more complicated system, the
``inverted pendulum,'' is presented in \cite[\S 5]{baez2015a}.

Like the previous data categories, the category of linear relations deserves a
more comprehensive treatment than space here permits. We include it mainly as a
concrete example of the ``non-standard'' instance data enabled by functorial
semantics. The possibility of instance data with extra algebraic or topological
structure is a distinctive---and sometimes useful---feature of categorical
knowledge representation that is not easily replicated in a purely logical
system.

\section{Types and the open-world assumption} \label{sec:types}

The use of \emph{types} is another distinctive feature of categorical knowledge
representation. Unlike instance data, types can be added to existing logical
systems without too much difficulty; we shall do so in \cref{sec:logic}.
Nonetheless, we regard types as distinctive because category theory is typed
``by default,'' while logic is not, and because, as a practical matter, the
knowledge representation systems in common use are untyped. In this section, we
discuss the significance of types for knowledge representation.

The reader may be puzzled by the claim that knowledge representation frameworks
based on description logic are untyped. Isn't the assignment of individuals to
concepts a form of typing? Indeed, isn't a primary purpose of description logic
to taxonomize the types of things existing in a given domain via a hierarchy of
interrelated concepts? If so, what could be the purpose of adding a
\emph{second}, explicit form of typing to the system? These questions have merit
and we shall try to answer them in this section.

First, we explain why description logic is untyped. Consider the relation
``friend of'' from the FOAF ontology (\cref{sec:foaf}). We want to express that
only people can be friends. In description logic, we would use a value
restriction (see \cref{sec:DL,sec:Rel}) to ensure that any two individuals in
the ``friend of'' relation belong to the concept ``Person.'' Then, given any two
individuals belonging to the disjoint concept ``Organization,'' the answer to
the question ``Are the two entities friends?'' would be ``No.'' On one view that
is a perfectly reasonable answer. But on another it is confused. We might argue
that the answer to the question is neither ``yes'' nor ``no'' because the
question does not make sense. Organizations are simply not the kind of things
that can be friends with each other. Merely by asking whether two organizations
are friends, we commit a \emph{category mistake}---using ``category'' in the
sense of Gilbert Ryle, not Eilenberg and Mac Lane. A programmer would call it a
\emph{type error}. These two possible responses illustrate the philosophical
difference between classes and types.

More prosaically, in description logic any concepts are comparable, while in
relational ologs only relations with the same domain and codomain are
comparable. Thus, in description logic, there is a \emph{universal concept}, to
which all individuals belong, and it is possible to take the intersection of any
two concepts. By contrast, a relational olog has only \emph{local} maxima and
\emph{local} intersections within each collection of typed relations $X \to Y$.

These distinctions have practical implications for knowledge representation. To
bring this out, we consider two different methods of constructing a
\emph{taxonomy} of entities in a relational olog. This task, although probably
overrepresented in KR research, is important in many applications. The first
method, effectively untyped, is based on subsumption of concepts with a single
domain type. The second method creates a hierarchy of different types connected
by inclusion maps.

We shall see that the difference between the two methods is related to the open
and closed world assumptions in database theory and knowledge representation
\cite{reiter1978}. Under the \emph{open world assumption}, any statements that
are not deducible from a knowledge base are not assumed to be either true or
false. Under the \emph{closed world assumption}, certain statements that are not
deducible are assumed to be false. The open world assumption is the standard
mode of reasoning in logical systems, including first-order logic and
description logic. The closed world assumption is commonly used in databases and
ruled-based knowledge representation. For example, in the database for the FOAF
ontology (\cref{sec:relational-db}), the absence of a row in the ``friend of''
table for Alice and Carol is interpreted as the absence of friendship between
Alice and Carol---not the absence of \emph{knowledge} of whether Alice and Carol
are friends. More generally, closed world reasoning makes assumptions about what
\emph{is} true based on what is \emph{not} explicitly stated.

To create a taxonomy in the style of description logic, we work with concepts
$C: X \to I$ over a fixed domain type $X$. The subsumption axiom $C \Rightarrow
D$ asserts that every instance of concept $C: X \to I$ is also an instance of
concept $D: X \to I$. A collection of such axioms implicitly defines a hierarchy
of concepts and sub-concepts. Inferences about concepts are made under the open
world assumption. In particular, no two concepts $C,D$ are provably disjoint
unless there is a disjointness axiom $C \cap D \Rightarrow \bot$ (or
disjointness can be inferred from other axioms). Relational ologs support
disjointness axioms through the extensions of \cref{sec:expressive-olog}.

In a relational olog, it is also possible to represent a taxonomy as a hierarchy
of types. For each kind in the taxonomy we define a type $X$. To declare that
type $X$ is a subtype of type $Y$, we add an inclusion map $\iota: X \to Y$. An
\emph{inclusion} of $X$ into $Y$ is simply a morphism $\iota: X \to Y$ that is
an injective map. Injectivity is asserted by the axiom
\begin{equation*}
  \vcenter{\hbox{\includegraphics{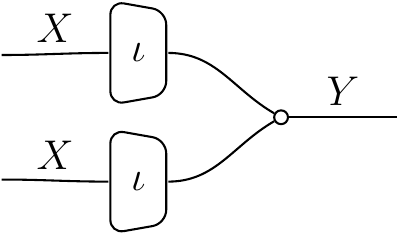}}}
  \quad=\quad
  \vcenter{\hbox{\includegraphics{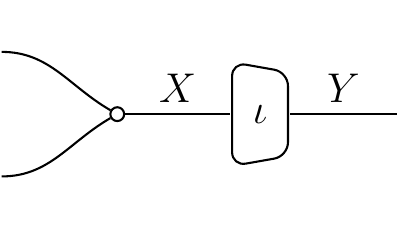}}}.
\end{equation*}
Given instance data $F: \cat{B} \to \Rel$, the function $F(\iota)$ associates
each element of $F(X)$ with a unique element of $F(Y)$, thereby identifying
$F(X)$ with a subset of $F(Y)$. In this way we interpret $X$ as a subtype of
$Y$. Inferences about types in the hierarchy are made under what amounts to a
closed world assumption: unless explicitly stated, distinct types are unrelated.
Two types unconnected by any morphism are not merely ``disjoint,'' they occupy
different universes. It is not permitted to contemplate their intersection.

These design patterns are not mutually exclusive, and we expect that they can be
profitably combined. On one hand, the open world assumption enables inference
about concepts that are not recorded by, or even anticipated by, the creator of
the ontology. It embodies the ethos of the Semantic Web, which, like the World
Wide Web, allows ``anyone to say anything about anything.'' On the other hand,
it can be inconvenient in scientific domains with a large and tightly controlled
vocabulary, such as biology and biomedicine. The Gene Ontology, for example,
contains tens of thousands of concepts related to genes and their biological
functions \cite{ashburner2000}. Many of them are disjoint, and all these
constraints must be recorded. Graphical ontology editors like Prot{\'e}g{\'e}
simplify these tasks \cite{musen2015}, but omissions are still easy to make. We
suggest that a judicious use of typing could eliminate the most embarrassing
errors of this kind.

Some authors have tried to augment description logics like OWL with closed world
reasoning \cite{knorr2011,sengupta2011}. This work is partly motivated by the
need for closed world reasoning when an ontology is used ``like a database'' to
make inferences about particular individuals. In this context, it is instance
data, not typing, that offers a simple solution. Given a relational olog
$\cat{B}$ and instance data $D: \cat{B} \to \Rel$, inference in $\cat{B}$ is
open world (modulo constraints imposed by the type system). By contrast,
inference in the bicategory of relations $D(\cat{B})$, a subcategory of $\Rel$,
is closed world. For any two relations $R,S: X \to Y$, there is a subsumption
$D(R) \Rightarrow D(S)$ in the database if and only if the table $D(R)$ is a
subset of the table $D(S)$. Assuming the database has complete information with
respect to a given population, if a subsumption $D(R) \Rightarrow D(S)$ holds at
the instance level, then we can regard the subsumption $R \Rightarrow S$ as
valid for that population, even if $R \Rightarrow S$ cannot be deduced at the
knowledge level (in $\cat{B}$). This is a form of closed world reasoning. Again,
we see the utility of a clean separation between universal and particular
knowledge.

Another important use of types is to represent ``concrete data types'' like
integers, real numbers, and strings within an ontology. All practical
description logic systems, including OWL, support data types, but only though ad
hoc extensions of the logical language \cite[\S 6.2]{baader2007}. We grant that
software implementations may need to handle primitive data types specially, but
think it inelegant to distinguish in the mathematical formalism between
``abstract'' types like ``Person'' and ``Organization'' and ``concrete'' types
like numbers and strings.

As others have observed, category theory builds a bridge between traditional
mathematical logic and programming language theory. In this setting, it connects
description logic with type theory. Relational ologs are based on simple type
theory. Of the two basic algebraic data types, we already have \emph{product
types}, and we shall introduce \emph{sum types} in \cref{sec:expressive-olog}.
We could conceivably use a more sophisticated type system. For example, instead
representing subtypes by inclusions, we could add first-class subtypes and
polymorphism. Polymorphic and other type theories have been extensively
investigated in the context of categorical logic \cite{crole1993,jacobs1999}.

In the programming language community, it is generally accepted that some amount
of typing increases the robustness and maintainability of software systems
(although opinions differ greatly as to how much typing is desirable). Apart
from low-level assembly languages, there are virtually no programming languages
in common use that are completely untyped. In the same spirit, we argue that at
least some typing is desirable in knowledge representation systems. The extent
and sophistication of the typing will depend on the application and on personal
preferences.

\section{Categorical logic} \label{sec:logic}

There is a fundamental connection between relational ologs and logical
formalisms for knowledge representation. We foreshadowed this connection in
\cref{sec:Rel} by defining the structures that make $\Rel$ into a bicategory of
relations using logical, rather than set-theoretic, notation. In fact, it is
possible to reason about \emph{any} bicategory of relations using first-order
logic, in effect pretending that it is $\Rel$. This conclusion is perhaps
surprising, since some bicategories of relations, such as the category
$\VectRel$ of linear relations (\cref{sec:VectRel}), look ``from the outside''
quite different than $\Rel$.

The purpose of this section, and the attendant \cref{app:regular}, is to make
precise the connection between relational ologs and first-order logic. Our
results belong to \emph{categorical logic}, which represents both syntax and
semantics as categories and interpretations of logical theories as functors. The
field was initiated by Lawvere's seminal thesis on the functorial semantics of
algebraic theories \cite{lawvere1963}. An important result of categorical logic,
perhaps the best known application of category theory to computer science, is
that the simply-typed lambda calculus is the \emph{internal language} of
cartesian closed categories \cite{lambek1988,crole1993,jacobs1999}. In a similar
spirit, we prove that a certain fragment of first-order logic, called ``regular
logic,'' is the internal language of bicategories of relations.

We now explain semiformally the correspondence between regular logic and
bicategories of relations. For details and proofs, we refer to
\cref{app:regular}.

\emph{Regular logic} is the fragment of first-order logic with connectives
$\exists,\wedge,\top,=$. Unlike traditional first-order logic, regular logic is
\emph{typed} (cf.\ \cref{sec:types}). Every variable $x$, free or bound, is
assigned a type $A$, expressed by writing $x: A$. To indicate the types of free
variables, every formula of regular logic is associated with a list of type
assignments, called a \emph{context}. Here are some examples of formulas in
context:
\begin{gather*}
  x:A, y:B \given R(x,y) \wedge S(x,y) \\
  x:A, z:C \given \exists y:B . (R(x,y) \wedge S(y,z)) \\
  x:A, x':A, x'':A \given (x = x') \wedge (x = x'') \\
  x:A \given \top
\end{gather*}
In general, a formula in context has form $\Gamma \given \varphi$, where
$\varphi$ is a formula and $\Gamma$ is a context containing all the free
variables of $\varphi$.

A theory in regular logic, or \emph{regular theory}, is defined by the following
data. There is a set of basic types $A,B,\dots$ and a set of relation symbols
$R,S,\dots$. Each relation symbol $R$ has a fixed signature $(A_1,\dots,A_n)$
that determines its arity and argument types. A regular theory also has a set of
\emph{axioms} of form $\Gamma \given \varphi \vdash \psi$, which we interpret
as: ``by assumption, $\varphi$ implies $\psi$ in the context $\Gamma$.''
Given a regular theory $\theory{T}$, we say that $\Gamma \given \varphi$
\emph{entails} $\Gamma \given \psi$ under the theory $\theory{T}$, or that
$\Gamma \given \varphi \vdash \psi$ is a \emph{theorem} of $\theory{T}$, if
$\Gamma \given \psi$ can be deduced from $\Gamma \given \varphi$ using the
axioms of $\theory{T}$ and the inference rules of the proof system for regular
logic.

As an example, we define a regular theory capturing a fragment of the ``friend
of a friend'' ontology (\cref{sec:foaf}). The types of the theory are
``Person,'' ``Organization,'' ``Number,'' and ``String.'' Its relation symbols
include
\begin{gather*}
  \text{knows}: (\text{Person}, \text{Person}), \qquad
  \text{friend of}: (\text{Person}, \text{Person}), \\
  \text{works at}: (\text{Person}, \text{Organization}), \qquad
  \text{salary}: (\text{Person}, \text{Organization}, \text{Number}).
\end{gather*}
The symmetry of the ``knows'' relation is expressed by the axiom
\begin{equation*}
  x: \text{Person}, y: \text{Person} \given
    \text{knows}(x,y) \vdash \text{knows}(y,x).
\end{equation*}
The ``works at'' relation is determined by
\begin{equation*}
  x: \text{Person}, y: \text{Organization} \given
   \exists z: \text{Number}.(\text{salary}(x,y,z)) \dashv\vdash \text{works at}(x,y),
\end{equation*}
where, as usual, $\dashv\vdash$ is shorthand for $\vdash$ and $\dashv$ (two
axioms).

We establish a correspondence between regular theories and bicategories of
relations. First, to every regular theory $\theory{T}$, we associate a
bicategory of relations $\Cl(\theory{T})$, called the \emph{classifying
category} of $\theory{T}$. The classifying category is constructed directly from
the syntax of regular logic. Its objects are finite lists of types $\tvec{A} =
(A_1,\dots,A_n)$, which can be regarded as $\alpha$-equivalence classes
$[\Gamma]$ of contexts $\Gamma = \tvec{x}: \tvec{A}$. Its morphisms are
equivalence classes of formulas in context $[\Gamma; \Delta \given \varphi]:
[\Gamma] \to [\Delta]$, where equality of formulas is up to $\alpha$-equivalence
(renaming of variables) and deducible logical equivalence under the axioms of
$\theory{T}$. For instance, in the ``friend of a friend'' theory, we have
\begin{equation*}
  [x:\text{Person}; y:\text{Org} \given
    \exists z:\text{Number} . (\text{salary}(x,y,z))]
   = [a:\text{Person}; b:\text{Org} \given \text{works at}(a,b)].
\end{equation*}
The semicolon in the context partitions the free variables into domain and
codomain; it serves no logical purpose. To make $\Cl(\theory{T})$ into a
bicategory of relations, we define composition, products, and diagonals
analogously to $\Rel$ (\cref{sec:Rel}).

Conversely, to every small bicategory of relations $\cat{B}$, we associate a
regular theory $\Lang(\cat{B})$, called the \emph{internal language} of
$\cat{B}$. Its types are the objects of $\cat{B}$ and its relation symbols are
the morphisms of $\cat{B}$. Note that $\Lang(\cat{B})$ necessarily has
infinitely many types and relation symbols! The axioms of $\Lang(\cat{B})$ are
chosen to guarantee that formulas corresponding to equal morphisms are provably
equivalent. To make sense of this statement, we must explain how an arbitrary
formula in context $\Gamma \given \varphi$ of $\Lang(\cat{B})$ can be
\emph{interpreted} as a morphism $\sem{\Gamma \given \varphi}$ of $\cat{B}$. The
mapping $\sem{\cdot}$ is essentially inverse to the constructions making the
classifying category into a bicategory of relations. We prove a \emph{soundness
theorem} for general interpretations of a regular theory $\theory{T}$ in a
bicategory of relations $\cat{B}$, yielding a \emph{categorical semantics} for
regular logic.

The main result of this section, proved in \cref{app:regular}, is that $\Cl$ is
inverse to $\Lang$ in an appropriate sense.

\begin{theorem}
  With respect to typed regular logic, for every small bicategory of relations
  $\cat{B}$, there is an equivalence of categories
  \begin{equation*}
    \Cl(\Lang(\cat{B})) \simeq \cat{B}
    \qquad\text{in}\qquad \BiRel.
  \end{equation*}
\end{theorem}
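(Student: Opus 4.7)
The plan is to construct an explicit structure-preserving functor $F\colon \Cl(\Lang(\cat{B})) \to \cat{B}$ and show it is an equivalence in $\BiRel$. On objects, $F$ sends the equivalence class of a context $[x_1:A_1,\dots,x_n:A_n]$ to the monoidal product $A_1 \otimes \cdots \otimes A_n$ (with the empty context going to $I$). On morphisms, $F$ sends $[\Gamma;\Delta \given \varphi]$ to its canonical interpretation $\sem{\Gamma;\Delta \given \varphi}$, where each relation symbol of $\Lang(\cat{B})$---which is by construction a morphism of $\cat{B}$---is interpreted as itself, and composites, products, diagonals, and quantifiers are handled clause by clause as in \cref{sec:Rel}.

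First I would verify that $F$ is well-defined and preserves all $\BiRel$-structure. Well-definedness is exactly soundness of the categorical semantics: the axioms of $\Lang(\cat{B})$ are designed to record every equation and every subsumption between morphisms of $\cat{B}$, so provably equivalent formulas acquire equal interpretations. Preservation of composition, monoidal product, braiding, diagonals, and 2-morphisms is then a routine unwinding, since the bicategory-of-relations structure on the classifying category is built by mimicking $\Rel$, while $\sem{-}$ is defined to respect these constructions stepwise. Essential surjectivity on objects is immediate, since every $X \in \cat{B}$ equals $F([x:X])$. Fullness is nearly as easy: given a morphism $R \colon X_1 \otimes \cdots \otimes X_n \to Y_1 \otimes \cdots \otimes Y_m$ in $\cat{B}$, the atomic formula $\tvec{x}:\tvec{X}; \tvec{y}:\tvec{Y} \given R(\tvec{x},\tvec{y})$ interprets to $R$ by the definition of $\sem{-}$ on atoms.

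The principal obstacle is faithfulness, i.e.\ the \emph{completeness} half of the correspondence: whenever $\sem{\Gamma;\Delta \given \varphi} = \sem{\Gamma;\Delta \given \psi}$ in $\cat{B}$, the statement $\varphi \dashv\vdash \psi$ must be provable in $\Lang(\cat{B})$, with an analogous reflection property for subsumptions needed to handle 2-morphisms. My approach would be to prove by induction on formula structure that every $\Gamma \given \varphi$ is provably equivalent to a canonical syntactic ``readback'' of its interpretation, obtained by translating the composition, tensor, diagonal, and dagger operations used to build $\sem{\varphi}$ back into conjunctions, existential quantifiers, and equalities via the defining clauses of $\sem{-}$. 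Once this normal-form lemma is in place, faithfulness is immediate: equal interpretations yield the same canonical form, so $\varphi$ and $\psi$ are each provably equivalent to it and hence to each other.

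What makes this step substantive is that the normal-form lemma implicitly requires every equational law of a bicategory of relations---associativity, the lax comonoid axioms, the Frobenius and special equations, naturality of braiding, the zig-zag identities, and the coherence axioms for the diagonals---to be derivable in typed regular logic from the corresponding axioms of $\Lang(\cat{B})$. Checking this clause by clause, and in particular keeping track of contexts and $\alpha$-equivalence while existential quantifiers are introduced or eliminated to match composition and daggering, is the bookkeeping that I would expect to defer to \cref{app:regular}.
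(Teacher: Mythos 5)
Your functor is exactly the paper's: send a context to the monoidal product of (the interpretations of) its types, send $[\Gamma;\Delta \given \varphi]$ to $\sem{\Gamma;\Delta \given \varphi}$, and check full, faithful, essentially surjective, structure-preserving. Your handling of well-definedness, surjectivity, and fullness matches the paper. Where you diverge is faithfulness, and there you have made the step much harder than the paper's definitions require. The axioms of $\Lang(\cat{B})$ are declared to be \emph{all} sequents $\Gamma \given \varphi \vdash \psi$ between arbitrary (not merely atomic) formulas such that $\sem{\Gamma \given \varphi} \Rightarrow \sem{\Gamma \given \psi}$ in $\cat{B}$. Consequently, if two formulas have interpretations related by a 2-morphism of $\cat{B}$, the corresponding sequent is literally an axiom, hence a theorem, of $\Lang(\cat{B})$; together with the soundness lemma for the converse direction this yields the ``fundamental equivalence'' ($[\varphi]\Rightarrow[\psi]$ in $\Cl(\Lang(\cat{B}))$ iff $\varphi\vdash_{\Lang(\cat{B})}\psi$ iff $\sem{\varphi}\Rightarrow\sem{\psi}$ in $\cat{B}$), and faithfulness is a one-line consequence. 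Your normal-form ``readback'' lemma is what one would need under a leaner, recursive axiomatization of the internal language (axioms only between atomic formulas, say); it is a legitimate alternative route, but as stated it is underspecified---the only sensible canonical readback of a morphism $R$ of $\cat{B}$ is the atomic formula $R(\tvec{x},\tvec{y})$ for the relation symbol naming it, and proving each formula provably equivalent to that atom again leans on exactly the axioms the paper grants by fiat. Relatedly, the substantive deferred work sits elsewhere than you place it: in the lemma that the classifying category of \emph{any} regular theory is a bicategory of relations (this is where your list of equational laws derivable in regular logic belongs, and it is needed even to make sense of an equivalence ``in $\BiRel$''), in the soundness lemma, and in the explicit verification that $F$ preserves composition and monoidal products, which requires genuine wire-bending calculations to match the syntactic composite $\exists\tvec{y}.(\varphi\wedge\psi)$ with categorical composition.
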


Consequently, we can regard regular theories and small bicategories of relations
as ``the same.'' Besides enriching our understanding of relational ologs, this
result is potentially practically useful, as it enables the transfer of tools
and techniques between category theory and logic. What should it mean to give
``instance data'' for a description logic knowledge base, assuming it can be
expressed as a regular theory $\theory{T}$? We simply ask what data is required
to give a structure-preserving functor $D: \Cl(\theory{T}) \to \Rel$. What
should be meant by a ``translation'' $\theory{T} \to \theory{T}'$ between
knowledge bases $\theory{T}$ and $\theory{T}'$? Again, we need only ask what
data is needed to give a functor $F: \Cl(\theory{T}) \to \Cl(\theory{T}')$. We
will not carry out these exercises here but it is instructive to do so.

In the other direction, we can reason about a relational olog $\cat{B}$ by
performing logical inference in $\Lang(\cat{B})$. This observation is
significant because the computational aspects of category theory are highly
underdeveloped in comparison with logic. Much research on description logic is
directed towards its computability and complexity theory, and there is a long
tradition of computational first-order logic. Many mature inference engines and
theorem provers exist. By contrast, the theory and practice of computational
category theory, especially higher category theory, is only now emerging
\cite{mimram2014,kissinger2015,bar2016}.

\paragraph{Bibliographic remarks} Regular logic has been thoroughly studied by
categorical logicians as the simplest fragment of first-order logic with a
quantifier \cite{awodey2009,butz1998,van-oosten1995}. Our categorical semantics
of regular logic is quite different from the usual one. Conventionally, a
formula $\Gamma \given \varphi$ is interpreted as a \emph{subobject}
$\sem{\Gamma \given \varphi}$ of the object $\sem{\Gamma}$: an equivalence
class of monomorphisms into $\sem{\Gamma}$. A category suitable for such
interpretations, called a \emph{regular category}, has all finite limits and
``well-behaved'' subobjects. The classifying category of a regular theory is a
regular category whose objects are (equivalence classes of) formulas in context
and whose morphisms are (equivalence classes of) formulas in context that are
provably functional. In our framework, the classifying bicategory of relations
has as objects (equivalence classes of) contexts and as morphisms (equivalence
classes of) formulas in context. The latter perspective seems more natural to
us.
  
The germ of the above theorem is present already in the original paper of
Carboni and Walters \cite[Remark 2.9 (iii)]{carboni1987}, but to our knowledge
has never been carefully developed. There are also strong connections between
bicategories of relations and regular categories. Given a regular category
$\cat{C}$, there is a bicategory of relations $\RelC(\cat{C})$ with the same
objects as $\cat{C}$ and with morphisms $A \to B$ equal to the subobjects of $A
\times B$---a construction that predates and motivates Carboni and Walters'
paper. Yet not every bicategory of relations arises in this way. Conversely, if
a bicategory of relations $\cat{B}$ is functionally complete, then
$\Map(\cat{B})$ is a regular category \cite[Theorem 3.5]{carboni1987}. Yet in
general $\Map(\cat{B})$ need not be a regular category. Thus, to a limited
extent, it is possible to pass between bicategories of relations and regular
categories.

\section{More expressive relational ologs} \label{sec:expressive-olog}

Relational ologs, as developed so far, can express local intersections and
maxima. As proved in \cref{sec:logic}, their internal language is the regular
($\exists,\wedge,\top,=$) fragment of first-order logic. It is natural to ask
for more expressive relational ologs allowing local unions and minima. In
logical terms, they should correspond to the coherent
($\exists,\wedge,\vee,\top,\bot,=$) fragment of first-order logic. In this
section, we develop such highly expressive relational ologs, called
\emph{distributive relational ologs}. We follow the pattern established by
\cref{sec:Rel,sec:BiRel,sec:olog}: first, we present the relevant monoidal
structures on $\Rel$; next, we abstract from $\Rel$ to formulate a general
categorical structure, called a \emph{distributive bicategory of relations};
finally, we define a distributive relational olog to be a finitely presented
distributive bicategory of relations.

\subsection{The category of relations, revisited} \label{sec:Rel2}

The category of relations has another interesting monoidal product, besides the
Cartesian product: the disjoint union. In this section, we explain disjoint
unions from a categorical perspective. As in \cref{sec:Rel}, our presentation
draws on the survey \cite{coecke2010}, especially \S 3.5 on ``classical-like''
monoidal products.

The \emph{disjoint union} (or \emph{tagged union}) is defined on objects of
$\Rel$ by
\begin{equation*}
  X \oplus Y := \{(x,1): x \in X\} \cup \{(y,2): y \in Y\}.
\end{equation*}
An element of $X \oplus Y$ is either an element of $X$ or an element of $Y$,
plus a special tag to avoid ambiguity when $X$ and $Y$ intersect. Given
morphisms $R: X \to Y$ and $S: Z \to W$ of $\Rel$, the \emph{disjoint union} $R
\oplus S: X \oplus Z \to Y \oplus W$ is defined by
\begin{equation*}
  (t,i)(R \oplus S)(s,j) \qquad\text{iff}\qquad
  \begin{cases}
    R(t,s) &\text{if $i=j=1$} \\
    S(t,s) &\text{if $i=j=2$} \\
    \bot &\text{otherwise}
  \end{cases}
\end{equation*}
The monoidal unit is $O := \emptyset$, the empty set. Finally, the braiding
morphism $\bm{\sigma}_{X,Y}: X \oplus Y \to Y \oplus X$ exchanges the tags. With
these definitions, $(\Rel,\oplus,O)$ is a symmetric monoidal category.

The category of relations is now equipped with two monoidal products. In
general, when working with two monoidal products $\otimes$ and $\oplus$, we call
the first product $\otimes$ the \emph{tensor} and the second product $\oplus$
the \emph{cotensor}. To avoid confusion, we will always use ``light'' notation
for structures associated with the tensor and ``dark'' notation for structures
associated with the cotensor. What that means should become clear shortly.

We would like to reason about both monoidal products using a single graphical
language. Unfortunately, that is not entirely straightforward. The basic problem
is that we now have an effectively \emph{three}-dimensional language, with
dimensions corresponding to composition, the tensor, and the cotensor, but
drawing pictures in dimensions greater than two is highly impractical. We
discuss (two-dimensional!) graphical languages for multiple products below. For
the moment, we work exclusively with the cotensor and can therefore employ,
without ambiguity, the usual graphical language of monoidal categories.

We now consider structures derived from the disjoint union. Define the
codiagonals $\varMerge_X: X \oplus X \to X$ and $\varCreate_X: O \to X$ on
$\Rel$ by 
\begin{equation*}
  (x,i) \varMerge_X x' \qquad\text{iff}\qquad x = x'.
\end{equation*}
Note that the initial morphism $\varCreate_X$ must be the (typed) empty relation
$O \to X$. Define the diagonals $\varCopy_X := \varMerge_X^\dagger: X \to X
\oplus X$ and $\varDelete_X := \varCreate_X^\dagger: X \to O$ by duality. As in
$(\Rel,\otimes,I)$, these morphisms form special $\dagger$-Frobenius monoids; in
particular, they satisfy the Frobenius equations.

The disjoint union is logically dual to the Cartesian product. The union $R \cup
S$ of two relations $R,S: X \to Y$ is $\varCopy_X (R \oplus S) \varMerge_Y$ or,
in graphical language,
\begin{equation*}
  R \cup S \quad=\quad \vcenter{\hbox{\includegraphics{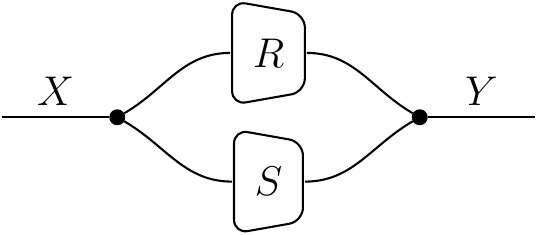}}}.
\end{equation*}
Similarly, the typed empty relation $\bot_{X,Y}: X \to Y$, or local minimum, is
$\varDelete_X \cdot \varCreate_Y$:
\begin{equation*}
  \bot_{X,Y} \quad=\quad \vcenter{\hbox{\includegraphics{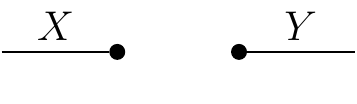}}}.
\end{equation*}
In particular, the boolean $\bot: I \to I$ is
$\bot_{I,I} = \varDelete_I \cdot \varCreate_I$.

There is a categorical interpretation of logical duality. The familiar principle
of 1-categorical duality establishes a correspondence between a category
$\cat{C}$ and its opposite $\cat{C}^\opposite$. In a bicategory of relations,
this form of duality is captured by the dagger functor. By analogy, in a
2-category $\cat{B}$, we can consider the 2-category $\cat{B}^\oppositeTwo$
obtained from $\cat{B}$ by reversing all 2-morphisms. The correspondence between
$\cat{B}$ and $\cat{B}^\oppositeTwo$ is duality at the level of 2-morphisms. If
$\cat{B}$ is a bicategory of relations, then 2-categorical duality is logical
duality. Consider the situation in $\Rel$. The diagonals $\Copy_X$ and
$\Delete_X$ are maps, and the codiagonals $\varMerge_X$ and $\varCreate_X$ are
also maps. Equivalently, the diagonals $\Copy_X$ and $\Delete_X$ are maps in
$\Rel$, while the diagonals $\varCopy_X$ and $\varDelete_X$ are maps in
$\Rel^\oppositeTwo$. Thus $(\Rel^\oppositeTwo,\oplus,O)$ is \emph{also} a
bicategory of relations, provided we verify the axiom on lax monoid
homomorphisms.

In fact, a stronger statement holds, breaking the symmetry between products and
sums. Unlike the diagonals $\Copy_X$ and $\Delete_X$ in $(\Rel,\otimes,I)$, the
codiagonals $\varMerge_X$ and $\varCreate_X$ in $(\Rel,\oplus,O)$ are
\emph{natural}. That is, for every relation $R: X \to Y$, we have $(R \oplus R)
\varMerge_Y = \varMerge_X R$ and $\varCreate_X R = \varCreate_Y$, or graphically
\begin{equation*}
  \vcenter{\hbox{\includegraphics{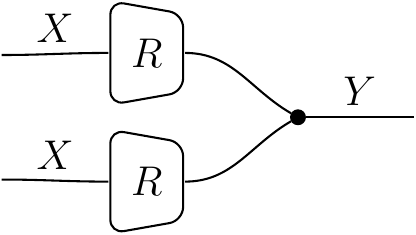}}}
  \quad=\quad
  \vcenter{\hbox{\includegraphics{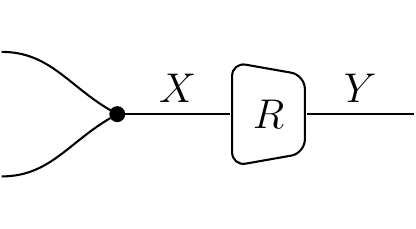}}}
\end{equation*}
and
\begin{equation*}
  \vcenter{\hbox{\includegraphics{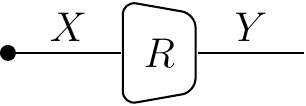}}}
  \quad=\quad
  \vcenter{\hbox{\includegraphics{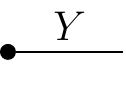}}}.
\end{equation*}
Likewise, we have $\varCopy_X (R \oplus R) = R \varCopy_Y $ and
$R \varDelete_Y = \varDelete_X$. This situation motivates the following
definition.

\begin{definition}
  Let $(\cat{C},\oplus,O)$ be a symmetric monoidal category. The monoidal
  product $\oplus$ on $\cat{C}$ is a
  \begin{itemize}
    \item a \emph{(categorical) product} if there exists a diagonal
    $(\varCopy_A,\varDelete_A)$, natural in $A$;
    \item a \emph{(categorical) coproduct} if there exists a codiagonal
    $(\varMerge_A,\varCreate_A)$, natural in $A$;
    \item a \emph{biproduct} if it both a product and coproduct, such that 
    for any objects $A_1,A_2$, the projection maps
    $\pi_1 = 1_{A_1} \oplus \varDelete_{A_2}$ and
    $\pi_2 = \varDelete_{A_1} \oplus 1_{A_2}$,
    the inclusion maps $\iota_1 = 1_{A_1} \oplus \varCreate_{A_2}$ and
    $\iota_2 = \varCreate_{A_1} \oplus 1_{A_2}$,
    and zero maps $0_{i,j} = \varDelete_{A_i} \varCreate_{A_j}$
    satisfy the equations
    \begin{equation*}
      \iota_i \cdot \pi_j = \delta_{i,j} := \begin{cases}
        1_{A_i} &\text{if $i = j$} \\
        0_{i,j} &\text{if $i \neq j$}
      \end{cases},
      \qquad i,j = 1,2.
    \end{equation*}
  \end{itemize}
\end{definition}

\begin{remark}
  Although it is not immediately obvious, these definitions of ``product'' and
  ``coproduct'' agree with the standard definitions via universal properties
  \cite{heunen2012}.
\end{remark}

The category of relations is a biproduct category with respect to the disjoint
union. Another prime example of a biproduct category is $(\Vect_k,\oplus)$, the
category of finite-dimensional vector spaces and linear maps, equipped with the
direct sum. The category of linear relations, $\VectRel_k$, is \emph{not} a
biproduct category.

\paragraph{Interactions between monoidal products}

We have hitherto studied the disjoint union only in isolation. We now consider
how the Cartesian product and disjoint union interact in $\Rel$. On the objects
of $\Rel$, there is a natural isomorphism
\begin{equation*}
  X \otimes (Y \oplus Z) \cong (X \otimes Y) \oplus (X \otimes Z),
\end{equation*}
given by $(x, (w,i)) \mapsto ((x,w),i)$, that expresses the distributivity of
products over sums. In words: having an element of $X$ and an element of $Y$ or
$Z$ is the same as having elements of $X$ and $Y$ or elements of $X$ and $Z$.
Here is one possible general definition of distributivity in a monoidal
category.

\begin{definition}[\cite{jay1993}]
  A \emph{distributive monoidal category} is a symmetric monoidal category
  $(\cat{C}, \otimes, I)$ with coproduct $\oplus$ that satisfies the
  \emph{distributive law}: for any objects $A,B,C$, the canonical
  distributivity morphism
  \begin{equation*}
    (A \otimes B) \oplus (A \otimes C) \to A \otimes (B \oplus C),
  \end{equation*}
  determined by the universal property of the coproduct, is an isomorphism.
\end{definition}

\begin{remark}
  A \emph{rig category}, or \emph{bimonoidal category}, retains the distributive
  law but relaxes the requirements that the tensor be symmetric and that the
  cotensor be the coproduct. It categorifies the classical algebraic structure
  known as a ``rig'' (ring without negatives) \cite{baez1998}. For the purposes
  of this paper, the extra generality of rig categories is unnecessary.
\end{remark}

Besides $\Set$ and $\Rel$, examples of distributive monoidal categories include
$\Ab$, the category of abelian groups, and $\Vect_k$, the category of
finite-dimensional vector spaces, both equipped the tensor product $\otimes$ and
the direct sum $\oplus$.

The distributive law extends to morphisms of $\Rel$ in a familiar way. For any
three relations $R,S,T: X \to Y$, we have
\begin{equation*}
  R \cap (S \cup T) = (R \cap S) \cup (R \cap T).
\end{equation*}
It is tempting to display this equation diagrammatically:
\begin{equation*}
  \vcenter{\smallhbox{\includegraphics{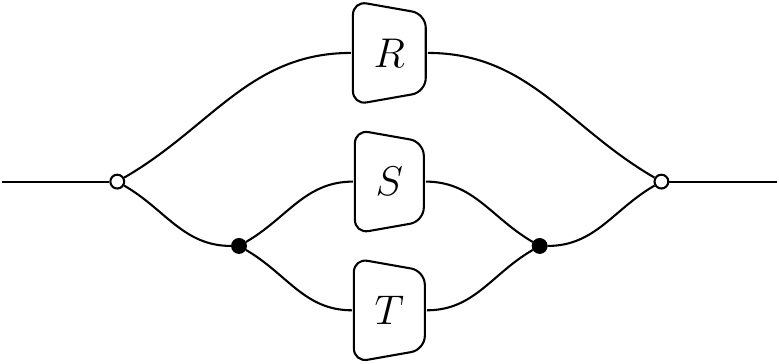}}}
  \quad=\quad
  \vcenter{\smallhbox{\includegraphics{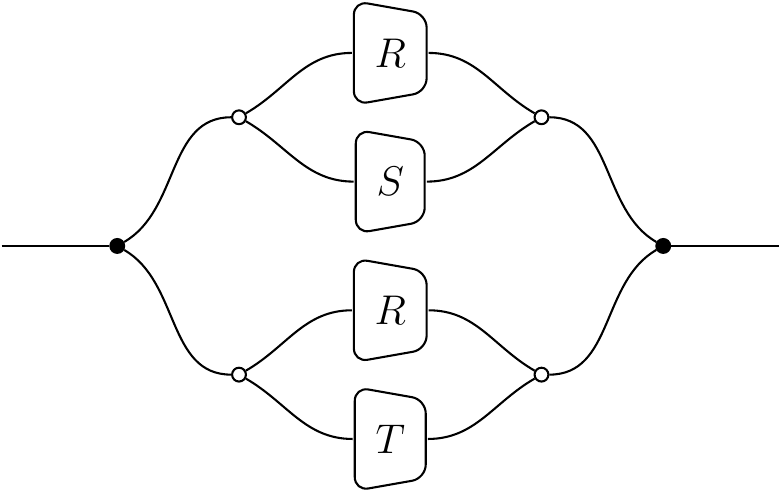}}}.
\end{equation*}
As noted at the beginning of this section, diagrams involving two monoidal
products take us beyond the firmly established graphical language of monoidal
categories. The picture above relies on \emph{context} derived from the copy and
merge nodes to determine which monoidal product is ``active'' at a given point.
In this case the notation is unambiguous, but in general one must take care to
avoid coherence problems, especially when working with the monoidal units. We
conjecture that soundness is maintained if the cotensor is restricted to forming
unions (via the morphisms $\varCopy_X$ and $\varMerge_X$) and local minima (via
$\varDelete_X$ and $\varCreate_X$). This restricted language is already
sufficient for applications that do not directly utilize sum types, a special
case of some practical interest.

In the literature, \emph{proof nets} are established as a graphical calculus for
categories with two monoidal products. Girad introduced proof nets in his
seminal paper on linear logic \cite{girard1987}. Blute et al generalized the
formalism to weakly distributive and $*$-autonomous categories (models of linear
logic), adopting a graphical style reminiscent of string diagrams
\cite{blute1996}. Unfortunately, proof nets are considerably more complicated
than string diagrams, accommodating monoidal units through special ``thinning
links.'' In our view it remains an open problem to define a graphical language
for categories with multiple monoidal products that is provably coherent---sound
and complete---but still simple enough for practical use by nonspecialists.

\subsection{Distributive bicategories of relations} \label{sec:DistBiRel}

Motivated by distributivity in the category of relations, we define a
categorical abstraction called a ``distributive bicategory of relations.''
We begin with the following more general definition.

\begin{definition}
  A \emph{union bicategory of relations} is a locally posetal 2-category
  $\cat{B}$, a symmetric monoidal category $(\cat{B},\otimes,I)$ with diagonals
  $(X,\Copy_X,\Delete_X)$, and a symmetric monoidal category
  $(\cat{B},\oplus,O)$ with codiagonals $(X,\varMerge_X,\varCopy_X)$, such that
  \begin{itemize}
    \item every morphism $R: X \to Y$ is a lax comonoid homomorphism and a lax
    monoid homomorphism;
    \item the morphisms $\Copy_X,\Delete_X,\varMerge_X,\varCreate_X$ have right
    adjoints, denoted $\Merge_X,\Create_X,\varCopy_X,\varDelete_X$;
    \item both pairs $(\Copy_X,\Merge_X)$ and $(\varCopy_X,\varMerge_X)$ obey
    the Frobenius equations.
  \end{itemize}
\end{definition}

Equivalently, a \emph{union bicategory of relations} is a locally posetal
2-category $\cat{B}$ such that both $\cat{B}$ and $\cat{B}^\oppositeTwo$ are
bicategories of relations (not necessarily with respect to the same monoidal
product).

\begin{remark}
  To our knowledge, this definition does not appear in the literature. Note that
  Johnstone's ``union allegory'' insists that unions are preserved by
  composition \cite[\S A3.2]{johnstone2002}; under our definition of ``union
  bicategory of relations,'' the strongest statement that can be made about the
  interaction between unions and composition is the logical dual of the modular
  law, $(R \cup TS^\dagger)S \subseteq RS \cup T$.
\end{remark}

The definition postulates no relationship whatsoever between the two monoidal
products. An abelian bicategory of relations (\cref{sec:VectRel}) is a union
bicategory of relations where the two products coincide. In our main example
$\VectRel_k$, the union $L \cup M$ of two linear relations $L,M \subseteq U
\oplus V$ is the vector space sum $L+M$. The other important special case of a
union bicategory of relations is the distributive bicategory of relations.

\begin{definition}
  A \emph{distributive bicategory of relations} is a union bicategory of
  relations where the cotensor is the categorical coproduct.
  
  We denote by $\DistBiRel$ the category of (small) distributive bicategories of
  relations and structure-preserving functors.
\end{definition}

\begin{remark}
  Carboni and Walters mention ``distributive'' bicategories of relations in
  passing, but do not clearly state a definition \cite[Remark 3.7]{carboni1987}.
  It seems likely that our definition is what they had in mind. Freyd and
  Scedrov utilize an analogous concept of ``distributive allegory''
  \cite{freyd1990}.
\end{remark}

Of course, $\Rel$ is a distributive bicategory of relations. Another example is
the category of boolean matrices, $\BoolMat$, introduced in \cref{sec:BoolMat}.

Several important properties are implicit in the definition. As the name
suggests, the distributive law holds automatically in a distributive bicategry
of relations. In fact, if $(\cat{C},\otimes,I)$ is any compact closed category
and $\oplus$ is the coproduct, then $\cat{C}$ is a distributive monoidal
category \cite{jay1993}. Moreover, it can be shown that intersections distribute
over unions in a distributive bicategory of relations. Also, the coproduct in a
distributive bicategry of relations is automatically a biproduct, by the
symmetry of a dagger category. (Alternatively, products or coproducts in a
compact closed category are always biproducts \cite{houston2008}.)
\cref{table:DistBiRel} summarizes the extra notation associated with a
distributive bicategory of relations, extending \cref{table:BiRel} in
\cref{sec:BiRel}.

\begin{table}
  \centering
  \begin{tabular}{llc}
    \toprule
    Structure & Name & Notation and definition \\
    \midrule
    biproduct category & biproduct & $R \oplus S$ \\
      & braiding & $\bm{\sigma}_{X,Y}$ \\
      & merge & $\varMerge_X$ \\
      & create & $\varCreate_X$ \\
      & copy & $\varCopy_X := \varMerge_X^* = \varMerge_X^\dagger$ \\
      & delete & $\varDelete_X := \varCreate_X^* = \varCreate_X^\dagger$ \\
    logical & union & $R \cup S := \varCopy_X (R \oplus S) \varMerge_Y$ \\
      & false & $\bot := \varDelete_I \cdot \varCreate_I $ \\
      & local minimum & $\bot_{X,Y} := \varDelete_X \cdot \varCreate_Y$ \\
    \bottomrule
  \end{tabular}
  \caption{Summary of extra morphisms in a distributive bicategory of relations}
  \label{table:DistBiRel}
\end{table}

Lastly, following the established pattern, we define a corresponding notion of
olog. All remarks made in \cref{sec:olog} about the meaning of ``finitely
presented'' remain in force.

\begin{definition}
  A \emph{distributive relational olog} is a finitely presented distributive
  bicategory of relations.
\end{definition}

Distributive relational ologs are very expressive. The only connectives of
first-order logic not directly expressible are negation and universal
quantification. However, the \emph{negation}, or \emph{complement}, of a
relation $R: X \to Y$ can be implicitly defined by introducing another relation
$S: X \to Y$ together with the two axioms $R \cap S \Rightarrow \bot_{X,Y}$ and
$\top_{X,Y} \Rightarrow R \cup S$. In graphical language, the axioms are
beautifully symmetric:
\begin{align*}
  \vcenter{\smallhbox{\includegraphics{img/intersection-role}}}
  \quad&\implies\quad
  \vcenter{\smallhbox{\includegraphics{img/local-minimum}}} \\
  \vcenter{\smallhbox{\includegraphics{img/local-maximum}}}
  \quad&\implies\quad
  \vcenter{\smallhbox{\includegraphics{img/union}}}.
\end{align*}
This definition of negation makes sense in any union bicategory of relations. In
$\Rel$, negation is the usual set-theoretic complement; in $\VectRel_k$, it is
the subspace complement (internal direct sum).

\subsection{Categorical logic with product and sum types} \label{sec:logic2}

By now it should be evident that distributive relational ologs correspond, in
some sense, to the fragment of first-order logic with connectives
$\exists,\wedge,\vee,\top,\bot,=$. This fragment is called \emph{coherent logic}
or, in older literature, \emph{geometric logic}. Coherent logic, or variants
thereof, has been used in axiomatic geometry \cite{avigad2009} and in automated
theorem proving \cite{stojanovic2014,ganesalingam2016}, in part because it is
more readily interpretable by humans than richer logics. Coherent logic is
nonetheless as expressive as first-order logic, in the sense that any
first-order theory can be translated into an equivalent coherent theory called
its \emph{Morleyization} \cite[Lemma D1.5.13]{johnstone2002}. In the Morleyized
theory, negations are encoded by the two axioms shown above.

Despite the suggestive analogy, a direct translation of \cref{sec:logic}
founders, due to the presence of sum types. In a bicategory of a relations, any
two objects $X$ and $Y$ have a product $X \otimes Y$, but our system of typed
regular logic does not include product types. That is, the syntax does not
permit the construction of a type $A \times B$ from two basic types $A$ and $B$.
This mismatch is, however, not fatal because products are smuggled into the
logical system as \emph{contexts}: a context ($x:A$, $y:B$) amounts to a single
variable of type $A \times B$. Adding a second monoidal product puts this device
under considerable strain. One could conceivably extend the syntax of a context
to represent a fully ``destructured'' element of arbitrary compound type. A
better solution is to augment the logical language with product types $A \times
B$ and sum types $A+B$, as well as a unit (singleton) type $1$ and zero (empty)
type $0$. The role of contexts is downplayed accordingly.

Product and sum types are ubiquitous in programming language theory. The simply
typed lambda calculus is often treated with product and sum types, in both
classical and categorical settings \cite{selinger2013,lambek1988}. By contrast,
first-order logics with non-trivial type systems are rare. A proof system for
first-order logic with product and sum types does not, to our knowledge, appear
in the literature. We now present such a system, straightforwardly adapted from
the lambda calculus. As in \cref{sec:logic}, the treatment here is informal.
Details and proofs are deferred to \cref{app:coherent}.

\begin{remark}
  A clarification may be helpful to readers acquainted with the Curry-Howard
  correspondence \cite{howard1980,wadler2015}. The interpretation of types as
  propositions, and programs as proofs, is the subject of a large body of
  research. However, we are interested in types \emph{with} logic, not types
  \emph{as} logic. The former is sometimes called ``two-level type theory.''  An
  example is Gambino and Aczel's ``logic-enriched type theory,'' a system of
  first-order logic with dependent types \cite{gambino2006}.
\end{remark}

By using a non-trivial type system, we commit ourselves to a proper treatment of
terms. In \cref{sec:logic}, we did not bother to distinguish the ``terms'' of
regular logic, which are just typed variables $x:A$. We now formally distinguish
two kinds of expressions in context: \emph{formulas} (also called
\emph{propositions}) and \emph{terms}. As before, the formulas are generated by
the equality relation, relation symbols, and logical connectives. The terms are
generated by variables, function symbols, and term constructors for the product
and sum types.

The term constructors are familiar from typed lambda calculus. Given a term $t:
A \times B$, there are \emph{projection} terms $\pi_1(t): A$ and $\pi_2(t): B$,
and given two terms $t:A$ and $s:B$, there is a \emph{pair} term $\pair{t}{s}: A
\times B$. Dually, given terms $t:A$ and $s:B$, there are \emph{inclusion} terms
$\iota_1(t): A+B$ and $\iota_2(s): A+B$, and given terms $t:A+B$, $r:C$, $s:C$,
there is a \emph{copair} term $\copair{t}{x:A.r}{y:B.s}: C$. The copair term, or
``case statement,'' is interpreted as follows. If $t:A+B$ is a value of type
$A$, return the term $r$ with variable $x$ replaced by $t$. If $t:A+B$ is a
value of type $B$, return the term $s$ with variable $y$ replaced by $t$. In
either case, the result is a value of type $C$.

We now sketch the correspondence between coherent logic with product and sum
types and distributive bicategories of relations. To every coherent theory
$\theory{T}$ we associate the classifying category $\Cl(\theory{T})$, a
distributive bicategory of relations. Its objects are the types of $\theory{T}$.
Note the difference from \cref{sec:logic}, where the objects of
$\Cl(\theory{T})$ are finite \emph{lists} of types. The morphisms of
$\Cl(\theory{T})$ are equivalence classes of formulas in context with exactly
two free variables. The types of these variables are the domain and codomain of
the morphism. We make the classifying category into a distributive bicategory of
relations analogously to $\Rel$ (\cref{sec:Rel,sec:Rel2}).

Conversely, every small distributive bicategory of relations $\cat{B}$ has its
internal language $\Lang(\cat{B})$, a coherent theory. As before, we interpret
an arbitrary formula in context $\Gamma \given \varphi$ of $\Lang(\cat{B})$ as a
morphism $\sem{\Gamma \given \varphi}$ of $\cat{B}$. The construction proceeds
analogously to \cref{sec:logic}, with one important addition: terms are
interpreted as \emph{maps}. More precisely, a term in context $\Gamma \given
t:A$ is interpreted as a morphism $\sem{\Gamma \given t:A}: \sem{\Gamma} \to
\sem{A}$ of $\Map(\cat{B})$. Although the connection between terms and maps is
interesting its own right, in the present proof, the interpretation of terms as
maps serves only to establish the base case in the inductive interpretation of
formulas as morphisms.

Our main result, proved in \cref{app:coherent}, is stated below.

\begin{theorem}
  With respect to coherent logic with product and sum types, for every small
  distributive bicategory of relations $\cat{B}$, there is an equivlance of
  categories
  \begin{equation*}
    \Cl(\Lang(\cat{B})) \simeq \cat{B}
    \qquad\text{in}\qquad \DistBiRel.
  \end{equation*}
\end{theorem}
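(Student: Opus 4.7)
The proof plan follows the template established in the previous theorem for regular logic (proved in \cref{app:regular}), extended to accommodate product and sum types together with the disjunctive connectives. I would construct two passages, the classifying category functor $\Cl$ and the internal language functor $\Lang$, and then verify they are mutually inverse up to equivalence in $\DistBiRel$.

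First I would define $\Cl(\theory{T})$ for a coherent theory $\theory{T}$ with product and sum types. Unlike the regular case, where objects were finite lists of basic types, objects of $\Cl(\theory{T})$ are now arbitrary types built from the basic types using $\times, +, 1, 0$. Morphisms $[A] \to [B]$ are equivalence classes $[x{:}A, y{:}B \given \varphi]$ of two-variable formulas-in-context, modulo $\alpha$-equivalence and provable logical equivalence under the axioms of $\theory{T}$. Composition uses existential quantification on an intermediate variable; the tensor is inherited from the product type $\times$, with diagonals given by pair/projection terms; the cotensor is inherited from $+$, with codiagonals given by copair/inclusion terms. Verifying the axioms of a distributive bicategory of relations---Frobenius equations, lax (co)monoid laws, distributivity---reduces to term-level derivations in the proof system, each routine given the corresponding derived rules for product and sum types.

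Next, for a small distributive bicategory of relations $\cat{B}$, I would define $\Lang(\cat{B})$: basic types are objects of $\cat{B}$; function symbols are the maps of $\cat{B}$ (i.e.\ morphisms in $\Map(\cat{B})$); relation symbols are general morphisms; and axioms impose all equalities and subsumptions that hold in $\cat{B}$. The essential new ingredient is the interpretation $\sem{\Gamma \given t{:}A}: \sem{\Gamma} \to \sem{A}$ of terms as maps in $\Map(\cat{B})$. Variables map to projections, function symbols to the corresponding maps, pair/projection terms to the cartesian structure on $\Map(\cat{B})$, and inclusion/copair terms to the coproduct structure on $\Map(\cat{B})$---using the fact that in a distributive bicategory of relations the biproduct in $\cat{B}$ restricts to a genuine coproduct on maps. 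Formulas are then interpreted inductively as in the regular case, with $\vee$ and $\bot$ handled via the codiagonals $\varCopy_X, \varMerge_X$ and the zero morphisms. Soundness---that provably equivalent formulas get equal interpretations---follows term-by-term from the categorical axioms.

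The equivalence $\Cl(\Lang(\cat{B})) \simeq \cat{B}$ is then given by the obvious structure-preserving functor sending $X \in \cat{B}$ to its basic type and $R: X \to Y$ to $[x{:}X, y{:}Y \given R(x,y)]$. Essential surjectivity on objects reduces to showing each compound type, e.g.\ $A \times B$, is isomorphic in $\Cl(\Lang(\cat{B}))$ to the product $A \otimes B$ taken in $\cat{B}$, which is immediate from the axioms of $\Lang(\cat{B})$. Fullness and faithfulness together amount to showing that every two-variable formula modulo provable equivalence corresponds bijectively to a morphism of $\cat{B}$; this is proved by induction on formulas, exhibiting for each a ``normal form'' built from relation and function symbols using the monoidal, diagonal, codiagonal, and biproduct operations that precisely tracks the categorical composition. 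The hard part will be the case-expression constructor $\copair{t}{x{:}A.r}{y{:}B.s}$: interpreting it as a map, and inverting that interpretation, requires distributivity in an essential way, since one must glue the two branches into a single total function out of a biproduct, and this gluing is exactly what the distributive law guarantees. Once the copair case is handled, the remaining inductive cases, and the check that the equivalence preserves all the structure of $\DistBiRel$, are careful but routine extensions of the regular-logic argument.
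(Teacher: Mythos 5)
Your proposal matches the paper's proof in all essentials: objects of $\Cl(\theory{T})$ are compound types rather than lists, morphisms are two-variable formulas in context, terms are interpreted as maps in $\Map(\cat{B})$ with the copair/case constructor handled via the distributivity isomorphism, and the equivalence is witnessed by the evident structure-preserving functor between $\Cl(\Lang(\cat{B}))$ and $\cat{B}$. The only notable difference is that you propose a normal-form induction for fullness and faithfulness, whereas the paper gets fullness for free (every morphism of $\cat{B}$ is a relation symbol of $\Lang(\cat{B})$) and faithfulness directly from the soundness lemma combined with the fact that the axioms of $\Lang(\cat{B})$ are exactly the 2-morphisms of $\cat{B}$.
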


It is also possible to realize the correspondence between regular logic and
bicategories of relations using a richer type system than in \cref{sec:logic}.
The appropriate logic is regular logic with product types and a singleton type
(but not sum types or an empty type).

\begin{theorem}
  With respect to regular logic with product types, for every small bicategory
  of relations $\cat{B}$, there is an equivalence of categories
  \begin{equation*}
    \Cl(\Lang(\cat{B})) \simeq \cat{B}
    \qquad\text{in}\qquad \BiRel.
  \end{equation*}
\end{theorem}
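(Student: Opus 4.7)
The plan is to mirror the proof of the immediately preceding theorem (coherent logic with product and sum types $\simeq$ small distributive bicategories of relations), simply restricting throughout to the regular fragment. Concretely, I would construct mutually essentially inverse functors between $\Cl(\Lang(\cat{B}))$ and $\cat{B}$ in $\BiRel$, after specifying the appropriate proof system of typed regular logic augmented with product types $A \times B$ and a singleton type $1$.

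On the syntax side, $\Cl(\theory{T})$ has as objects the types of $\theory{T}$---now built freely from the basic types using $\times$ and $1$---and as morphisms equivalence classes $[x:A;\, y:B \given \varphi]$ of regular formulas in context, modulo $\alpha$-conversion and provable equivalence under $\theory{T}$. Because product types are available, any multi-variable context $x_1:A_1, \dots, x_n:A_n$ collapses to the single variable $x : A_1 \times \cdots \times A_n$ via pair and projection terms, so I may assume that every context has exactly one variable on each side of the semicolon. Composition, the monoidal product, diagonals, and deletions are then defined by the same formulas that make $\Rel$ a bicategory of relations, and one verifies the axioms. In the other direction, $\Lang(\cat{B})$ takes the objects of $\cat{B}$ as basic types and the morphisms of $\cat{B}$ as relation symbols, with axioms chosen precisely so that the natural interpretation $\sem{-}$ descends to a structure-preserving functor into $\cat{B}$.

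The interpretation proceeds in two stages. Terms are interpreted as maps: a variable $x:X$ becomes $1_X$, each projection $\pi_i$ is built from $\Copy$ and $\Delete$, and a pair $\pair{t}{s}$ becomes the composite $\Copy_\Gamma (\sem{t} \otimes \sem{s})$. The key base observation is that these composites all lie in $\Map(\cat{B})$, because the maps form a cartesian subcategory of $\cat{B}$. Formulas are then interpreted inductively: conjunction via $\Copy(R \otimes S)\Merge$, equality via the unit and counit of the compact closed structure, $\top$ via $1_I$, and existential quantification via composition with $\Merge$ and $\Delete$. The essentially inverse functors of the equivalence then send each type $X$ to its interpretation in $\cat{B}$ and each formula in context to $\sem{-}$, and conversely each object $X \in \cat{B}$ to itself regarded as a basic type and each morphism $R:X \to Y$ to the canonical formula $[x:X; y:Y \given R(x,y)]$.

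The main obstacle is the soundness-and-completeness theorem required for faithfulness: provable entailment $\Gamma \given \varphi \vdash \psi$ must correspond exactly to the 2-morphism $\sem{\Gamma \given \varphi} \Rightarrow \sem{\Gamma \given \psi}$ in $\cat{B}$. Completeness follows from the freeness (universal property) of $\Cl(\Lang(\cat{B}))$: any 2-inclusion witnessed by the canonical interpretation must already be derivable in the proof system. The genuinely new ingredient compared to the product-free regular-logic theorem is verifying that the introduction and elimination rules for $\times$ and $1$ are sound when terms are interpreted as maps---this is where the fact that $\Map(\cat{B})$ is cartesian does the real work, and where most of the care must be taken (for example, uniqueness of pairing in $\Map(\cat{B})$ is needed to validate $\beta\eta$-conversion of pairs and projections). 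Once these rules are discharged, the remainder of the argument transports essentially verbatim from the proof of the earlier regular-logic correspondence in Appendix~A.
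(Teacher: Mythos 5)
Your proposal is correct and matches the paper's intended argument: the paper states this theorem without a separate written proof, leaving it as the restriction of the Appendix~B development (coherent logic with product and sum types) to the regular fragment with $\times$ and $1$ only, which is precisely what you carry out, including the interpretation of terms as maps and the use of the cartesian structure of $\Map(\cat{B})$ to validate the pairing and projection rules. One small remark: the ``completeness'' direction of faithfulness does not need an appeal to freeness---by construction the internal language $\Lang(\cat{B})$ takes \emph{every} semantically valid sequent as an axiom, so any 2-morphism holding in $\cat{B}$ is derivable for trivial reasons.
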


\section{Conclusion and outlook} \label{sec:conclusion}

In this paper, we have propounded a categorical framework for knowledge
representation centered around bicategories of relations. We emphasized three
important features that emerge automatically from category theory: instance
data, types, and graphical syntax. We compared our framework informally to
description logic and formally to the regular and coherent fragments of typed
first-order logic. In this final section, we offer a general perspective on
categorical knowledge representation. We also suggest directions for future
research.

We have extensively discussed the relationship between the algebraic and logical
approaches to knowledge representation, but have said comparatively little about
how the two categorical frameworks---functional and relational ologs---are
related. Although a complete answer is beyond the scope of this work, we will
suggest a ``pattern'' or ``template'' for defining categorical ontologies that
is general enough to encompass these and other frameworks. This template can
perhaps serve as a first step towards a unified methodology of categorical
knowledge representation.

Doctrines are a useful organizing principle for category theory \cite{kock1977}.
Informally, a \emph{doctrine} is a family of categories or higher categories
with extra structure. The most basic doctrine is the doctrine of categories
(with no extra structure). There are also doctrines of categories with finite
products, of symmetric monoidal categories, of compact closed categories, of
2-categories, of bicategories of relations, etc. Besides the categories
themselves, a doctrine specifies the relevant kind of ``structure-preserving''
functors, and the natural transformations between these. The concept of doctrine
can be formalized, but for us this informal understanding is perfectly adequate.

Here is a general recipe for constructing a categorical knowledge base. First,
choose a doctrine. This choice should be informed by the phenomena being
modeled; we expand on this idea below. Next, define a finitary specification
language for the doctrine. As above, the basic strategy is finite presentation,
a.k.a.\ the method of generators and relations, which works in considerable
generality. If the doctrine supports arbitrary limits or colimits, the method of
\emph{sketches} can be used instead \cite{wells1993,makkai1997}. Alternatively,
if the doctrine has as its internal language some well-known logical system,
that system could serve as a specification language. Finally, use the
specification language to define an ontology. As a mathematical object, the
ontology is simply a finitely generated category of the doctrine. Instance data
for the ontology is derived from the ``prototype'' category of the doctrine,
such as $\Set$ or $\Rel$, or possibly from a more exotic category.

Some doctrines relevant to knowledge representation are listed in
\cref{table:doctrine}. Functional ologs arise from the doctrine of categories
with finite limits and colimits; relational ologs from the doctrine of
bicategories of relations. The typed lambda calculus is the internal language of
the doctrine of cartesian closed categories. Besides its central role in
programming language theory, the lambda calculus has been used to model natural
languages \cite{heim1998}. This list by no means exhausts the doctrines that are
potentially useful for knowledge representation.

\begin{table}
  \centering
  \begin{tabular}{lp{0.85in}p{2in}}
    \toprule
    Doctrine & Prototype(s) & Internal language \\
    \midrule
    category & $\Set$ & \\
    category with finite (co)limits & $\Set$ & \\
    bicategory of relations & $\Rel$
      & regular logic with product types \\
    distributive bicategory of relations & $\Rel$
      & coherent logic with product and sum types \\
    cartesian closed category & $\Set$, $\Cpo$
      & typed lambda calculus with product types \\
    bicartesian closed category & $\Set$
      & typed lambda calculus with product and sum types \\
    \bottomrule
  \end{tabular}
  \caption{Selected doctrines relevant to knowledge representation}
  \label{table:doctrine}
\end{table}

Different doctrines are appropriate for different applications. Bicategories of
relations are designed to model classes of entities (concepts) and the
relationships between them (roles). Description logic shares this orientation.
However, creating taxonomies of concepts is hardly the only worthwhile
application of knowledge representation. As an example, the original impetus for
this work was our need to model knowledge about computer programs used in data
analysis. Description logics (and relational ologs) are ill-suited to this
project; a doctrine related to the lambda calculus is much more appropriate. In
general, we worry that the mainstream of KR research has unjustifiably
privileged taxonomies over other kinds of knowledge. Our philosophy is that
category theory is a universal modeling language enabling a more expansive
understanding of knowledge representation. In the future, we hope to see
practical, flexible knowledge representation systems that allow doctrines to be
rapidly assembled from the categorical toolbox to meet the needs of particular
applications.

There are myriad directions for future research on categorical knowledge
representation. We mention a few that are directly relevant to relational ologs.

A glaring omission in this work is any discussion of automated inference. By
contrast, computationally tractable inference has been the prime directive of
the description logic community. The first step for our project is to
acknowledge that inference in a relational olog is undecidable. This is true
even without the extensions of \cref{sec:expressive-olog}. (One can see this
algebraically, by reduction from the word problems for monoids or groups, or
logically, by reduction from the decision problem for regular theories.)

There are two possible responses to the problem of undecidability. We could
follow the DL community in imposing language restrictions to achieve provable
computational tractability. The extensive DL literature would doubtless be very
helpful in carrying out this program. However, we worry that imposing ad hoc
restrictions would do irredeemable violence to the formalism's elegance and
expressivity. A second approach is to allow an unrestricted language and settle
for approximate inference. We share with Doyle and Patil the opinion that this
approach is undervalued by the description logic community \cite{doyle1991}. In
the statistics and machine learning communities, the need for approximate
inference in complex models is an accepted fact of life. The contrast is
especially stark because inference in first-order theories is \emph{harder} than
inference in probabilistic models. In any event, developing inference algorithms
for relational ologs, exact or approximate, is an important prerequisite for
practical applications.

Another problem, already raised in \cref{sec:expressive-olog}, is to define a
graphical language for distributive bicategories of relations that is coherent,
yet intuitive. (We think that proof nets fall short on the second count.) The
graphical language of string diagrams is a very appealing feature of relational
ologs. We hope that a satisfactory extension of string diagrams to categories
with multiple monoidal products will be discovered.

\addsec{Acknowledgments}

I thank John Baez, David Spivak, and Ryan Wisnesky for helpful comments on the
manuscript. I am grateful to Thomas Icard for his encouragement of this project.
Finally, I thank the many contributors to the $n$Lab for creating an invaluable
online resource for aspiring and professional category theorists.

\printbibliography[heading=bibintoc]

\appendix
\section{Regular logic and bicategories of relations} \label{app:regular}

The objective of this appendix to state carefully and prove the theorem of
\cref{sec:logic}, establishing a correspondence between regular logic and
bicategories of relations. The development will be detailed yet terse, as we
have already explained the main ideas behind the theorem in \cref{sec:logic}.

We first define a formal system for regular logic. The syntax is borrowed from
\cite{awodey2009} and the proof system from \cite{johnstone2002}. We depart from
the standard formulation of regular logic only by dispensing with function
symbols. This convention is merely a convenience; \cref{app:coherent} shows how
to incorporate function symbols.

\begin{definition}
  A \emph{(multisorted) signature} consists of
  \begin{itemize}
    \item a set of \emph{sorts} or \emph{basic types}, which we write
    generically as $A,B,A_1,B_1,\dots$, and
    \item a set of \emph{relation symbols}, which we write as $R,S,\dots$, where
    each relation symbol $R$ is associated with a (possibly empty) ordered list
    of types $(A_1,\dots,A_n)$.
  \end{itemize}
\end{definition}

For brevity, we often use the vector notation $\tvec{A} := (A_1,\dots,A_n)$.
To express that relation symbol $R$ has types $\tvec{A}$, we write $R:
\tvec{A}$ or $R: (A_1,\dots,A_n)$.

There is a countably infinite set of \emph{variables} $x,y,z,\dots$. Unlike in
some typed logical calculi, the variable symbols do not have fixed types.
Instead, we write $x: A$ to express that $x$ is a variable of type $A$. A
\emph{context} is a (possibly empty) finite, ordered list of form
\begin{equation*}
  \Gamma = \tvec{x}: \tvec{A} = (x_1: A_1, \dots, x_n: A_n),
\end{equation*}
where the $x_i$'s are distinct variables and the $A_i$'s are basic types.

\begin{figure}
  \centering
  {\def\arraystretch{2.5}\tabcolsep=4ex
  \begin{tabular}{cr}
    $\vcenter{\prftree{\Gamma \given x_1: A_1}{\cdots}{\Gamma \given x_n: A_n}
                      {\Gamma \given R(x_1,\dots,x_n)}}$
      & (relation symbol $R: \tvec{A}$) \\
    $\vcenter{\prftree{\Gamma \given x:A}{\Gamma \given y:A}
                      {\Gamma \given x=y}}$
      & (equality) \\
    $\vcenter{\prftree{\Gamma \given \varphi}{\Gamma, x:A \given \varphi}}$
      & (weakening) \\
    $\Gamma \given \top$
      & (truth) \\
    $\vcenter{\prftree{\Gamma \given \varphi}{\Gamma \given \psi}
                      {\Gamma \given \varphi \wedge \psi}}$
      & (conjunction) \\
    $\vcenter{\prftree{\Gamma, x:A \given \varphi}
                      {\Gamma \given \exists x:A.\varphi}}$
      & (existential quantifier)
  \end{tabular}}
  \caption{Formation rules for regular logic}
  \label{fig:regular-formation}
\end{figure}

With respect to a fixed signature, the \emph{formulas in context} of the regular
language are expressions of form $\Gamma \given \varphi$, where $\Gamma$ is a
context and $\varphi$ is a formula, as defined inductively by the formation
rules in \cref{fig:regular-formation}. Formulas outside of a context have no
definite meaning. Note that the only \emph{terms in context} of the regular
language are variables, for which there is a single formation rule:
\begin{equation*}
  \Gamma, x:A, \Gamma' \given x:A.
\end{equation*}
A \emph{sequent} is an expression of form $\Gamma \given \varphi \vdash
\psi$, where $\Gamma \given \varphi$ and $\Gamma \given \psi$ are both formulas
in context. The inference rules for sequents are listed in
\cref{fig:regular-inference}.

\begin{figure}
  \centering
  {\def\arraystretch{2.5}\tabcolsep=0ex
  \begin{tabular}{cr}
    $\varphi \vdash \varphi$
      & (identity) \\
    $\vcenter{\prftree{\varphi \vdash \chi}{\chi \vdash \psi}
                      {\varphi \vdash \psi}}$
      & (cut) \\
    $\vcenter{\prftree
      {\tvec{x}:\tvec{A} \given \varphi \vdash \psi}
      {\Gamma \given \tvec{t}: \tvec{A}}
      {\Gamma \given \varphi[\tvec{t}/\tvec{x}] \vdash \psi[\tvec{t}/\tvec{x}]}}$
      & (substitution) \\
    $\varphi \vdash x=x
     \qquad
     (\tvec{x}=\tvec{y}) \wedge \varphi \vdash \varphi[\tvec{y}/\tvec{x}]$
      & (equality) \\
    $\varphi \vdash \top$
      & (truth) \\
    $\vcenter{\prftree{\varphi \vdash \psi}{\varphi \vdash \chi}
                      {\varphi \vdash \psi \wedge \chi}}
     \qquad
     \varphi \wedge \psi \vdash \varphi
     \qquad
     \varphi \wedge \psi \vdash \psi$
      & (conjunction) \\
    $\vcenter{\prftree[double]{\Gamma, x:A \given \varphi \vdash \psi}
                      {\Gamma \given (\exists x:A.\varphi) \vdash \psi}}$
      & (existential quantifier) \\
    $\Gamma \given \varphi \wedge (\exists x:A.\psi)
      \vdash \exists x:A.(\varphi \wedge \psi) \qquad [x \notin \Gamma]$
      & (Frobenius)
  \end{tabular}}
  \caption{Inference rules for regular logic}
  \label{fig:regular-inference}
\end{figure}

The statement of the inference rules assumes the following notational
conventions. Contexts that are constant across the premises and conclusion of a
rule are omitted. The formulas in context appearing in the rules are implicitly
assumed to be well-formed. For example, the existential quantifier rule assumes
that $x$ does not appear freely in $\psi$ because $\Gamma \given \psi$ must be
well-formed. The vector notation $\tvec{x} = \tvec{y}$ is shorthand for $x_1 =
y_1 \wedge \cdots \wedge x_n = y_n$; likewise, $\exists \tvec{x}: \tvec{A}$ is
shorthand for $\exists x_1:A_1 \cdots \exists x_n:A_n$. The expression
$\varphi[\tvec{y}/\tvec{x}]$ denotes the simultaneous substitution of $y_i$ for
$x_i$, for $1 \leq i \leq n$, in the formula $\varphi$.

\begin{remark}
  Several inference rules deserve further comment.
  \begin{itemize}
    \item \emph{Substitution}: Useful special cases of the substitution rule
    include the \emph{weakening} and \emph{strengthening} rules
    \begin{equation*}
      \vcenter{\prftree{\Gamma \given \varphi \vdash \psi}
                       {\Gamma, x:A \given \varphi \vdash \psi}}
      \qquad\qquad
      \vcenter{\prftree{\Gamma, x:A \given \varphi \vdash \psi}{\Gamma \given t:A}
                       {\Gamma \given \varphi \vdash \psi}}.
    \end{equation*}
    
    \item \emph{Existential quantifier}: Given the other rules, the
    bidirectional existential quantifier rule is equivalent to the
    \emph{$\exists$-introduction} and \emph{$\exists$-elimination} rules
    \cite[Lemma 4.1.8]{jacobs1999}
    \begin{equation*}
      \vcenter{\prftree{\Gamma \given t:A}{\Gamma \given \varphi \vdash \psi[t/x]}
                       {\Gamma \given \varphi \vdash \exists x:A.\psi}}
      \qquad\qquad
      \vcenter{\prftree{\Gamma \given \varphi \vdash \exists x:A.\psi}
                       {\Gamma, x:A \given \psi \vdash \chi}
                       {\Gamma \given \varphi \vdash \chi}}.
    \end{equation*}
    
    \item \emph{Frobenius}: The so-called ``Frobenius axiom,'' linking
    conjunction and existential quantification, is superfluous in full
    first-order logic with implication but is \emph{not} deducible from the
    other rules of regular logic \cite[p.\ 831]{johnstone2002}. The converse
    rule \emph{is} deducible \cite[p.\ 832]{johnstone2002}. The omission of the
    Frobenius rule in some standard texts on regular logic, such as
    \cite{van-oosten1995} and \cite{butz1998}, is apparently an error.
  \end{itemize}
\end{remark}

\begin{definition}
  A \emph{regular theory} (with respect to a fixed signature) is defined by a
  set of sequents in the signature, not necessarily finite, called the
  \emph{axioms} of theory. Under a regular theory $\theory{T}$, a formula
  $\varphi$ \emph{entails} $\psi$, written $\Gamma \given \varphi
  \vdash_{\theory{T}} \psi$, if the sequent $\Gamma \given \varphi \vdash \psi$
  is deducible from the axioms of $\theory{T}$ using the inference rules of
  regular logic (\cref{fig:regular-inference}). In this case we say that the
  sequent $\Gamma \given \varphi \vdash \psi$ is an \emph{entailment} or
  \emph{theorem} of $\theory{T}$.
\end{definition}

We now begin to establish the correspondence between bicategories of relations
and regular logic by constructing the classifying category of a regular theory.

\begin{definition}
  The \emph{classifying category} of a regular theory $\theory{T}$, denoted
  $\Cl(\theory{T})$, is the bicategory of relations defined as follows. Its
  objects are finite lists of basic types $\tvec{A}: (A_1,\dots,A_n)$. Given a
  context $\Gamma = \tvec{x}: \tvec{A}$, we also write $[\Gamma] := \tvec{A}$.
  Its morphisms $\tvec{A} \to \tvec{B}$ are equivalence classes of formulas in
  context,
  \begin{equation*}
    [\tvec{x}:\tvec{A}; \tvec{y}:\tvec{B} \given \varphi],
  \end{equation*}
  where the equivalence relation $\sim$ is defined by
  \begin{equation*}
    (\tvec{x}:\tvec{A}, \tvec{y}:\tvec{B} \given \varphi) \sim
      (\tvec{x}':\tvec{A}, \tvec{y}':\tvec{B} \given \varphi')
    \qquad\text{iff}\qquad
    \tvec{x}:\tvec{A}, \tvec{y}:\tvec{B} \given
      \varphi \dashv \vdash_{\theory{T}}
      \varphi'[\tvec{x}/\tvec{x}', \tvec{y}/\tvec{y}'].
  \end{equation*}
  In other words, the morphisms of $\Cl(\theory{T})$ are formulas in context up
  to $\alpha$-equivalence and logical equivalence under $\theory{T}$. Here
  $\dashv \vdash_{\theory{T}}$ is shorthand for $\dashv_{\theory{T}}$ and
  $\vdash_{\theory{T}}$, and the semicolon in a context $(\Gamma;\Gamma')$ is an
  extralogical marker that partitions the context into the domain $[\Gamma]$ and
  codomain $[\Gamma']$ of the morphism $[\Gamma;\Gamma' \given \phi]$. The
  2-morphisms of $\Cl(\theory{T})$ are the entailments of $\theory{T}$:
  \begin{equation*}
    [\Gamma;\Gamma' \given \varphi] \implies [\Gamma;\Gamma' \given \psi]
    \qquad\text{iff}\qquad
    \Gamma,\Gamma' \given \varphi \vdash_{\theory{T}} \psi.
  \end{equation*}
  
  We now define the requisite structures to make the classifying category into a
  bicategory of relations. Composition of morphisms is given by
  \begin{equation*}
    [\tvec{x}:\tvec{A}; \tvec{y}:\tvec{B} \given \varphi]
      \cdot
    [\tvec{y}:\tvec{B}; \tvec{z}:\tvec{C} \given \psi]
      := 
    [\tvec{x}:\tvec{A}; \tvec{z}:\tvec{C} \given
      \exists \tvec{y}: \tvec{B} . \varphi \wedge \psi],
  \end{equation*}
  and the identity morphisms are
  \begin{equation*}
    1_{\tvec{A}} := [\tvec{x}:\tvec{A}; \tvec{x}':\tvec{A} \given
      \tvec{x} = \tvec{x}'].
  \end{equation*}
  The monoidal product is defined on objects by $\tvec{A} \otimes \tvec{B} :=
  (\tvec{A},\tvec{B})$ and on morphisms by
  \begin{equation*}
    [\Gamma;\Delta \given \varphi] \otimes [\Gamma';\Delta' \given \psi]
      := [\Gamma,\Gamma';\Delta,\Delta' \given \varphi \wedge \psi].
  \end{equation*}
  The monoidal unit is the empty list $I := ()$. The braidings are
  \begin{equation*}
    \sigma_{\tvec{A},\tvec{B}}
      := [\tvec{x}:\tvec{A},\tvec{y}:\tvec{B};
          \tvec{y}':\tvec{B},\tvec{x}':\tvec{A} \given
          (\tvec{x} = \tvec{x}') \wedge (\tvec{y} = \tvec{y}')].
  \end{equation*}
  Finally, the diagonals are
  \begin{align*} 
    \Copy_{\tvec{A}}
      &:= [\tvec{x}:\tvec{A}; \tvec{x}':\tvec{A}, \tvec{x}'':\tvec{A} \given
          (\tvec{x} = \tvec{x}') \wedge (\tvec{x} = \tvec{x}'')] \\
    \Delete_{\tvec{A}} &:= [\tvec{x}:\tvec{A} \given \top].
  \end{align*}
\end{definition}

\begin{lemma}
  The classifying category $\Cl(\theory{T})$ of a regular theory $\theory{T}$ is
  a bicategory of relations.
\end{lemma}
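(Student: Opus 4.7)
The plan is to verify each axiom in the definition of a bicategory of relations by unwinding the definitions of the structures on $\Cl(\theory{T})$ and appealing to the inference rules of regular logic in \cref{fig:regular-inference}. The work is largely routine book-keeping, but it divides naturally into several stages, each of which must be discharged.

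First I would check that every operation defined on equivalence classes is well-defined, i.e.\ respects $\alpha$-equivalence and provable logical equivalence under $\theory{T}$. This amounts to observing that the substitution, conjunction, and existential quantifier rules in \cref{fig:regular-inference} are congruences for $\dashv\vdash_{\theory{T}}$. Next, I would verify the category axioms. Associativity of composition requires showing
\begin{equation*}
  \exists \tvec{y}:\tvec{B}.\bigl(\varphi \wedge \exists \tvec{z}:\tvec{C}.(\psi \wedge \chi)\bigr)
    \;\dashv\vdash_{\theory{T}}\;
  \exists \tvec{z}:\tvec{C}.\bigl(\exists \tvec{y}:\tvec{B}.(\varphi \wedge \psi) \wedge \chi\bigr),
\end{equation*}
where both sides are proved equal to $\exists \tvec{y}:\tvec{B}.\exists \tvec{z}:\tvec{C}.(\varphi \wedge \psi \wedge \chi)$ using the Frobenius rule to pull quantifiers through conjunctions (this is where the Frobenius rule of regular logic earns its keep). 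The identity law is immediate from the equality rule, using that $\exists \tvec{y}:\tvec{B}.(\tvec{x}=\tvec{y} \wedge \varphi) \dashv\vdash \varphi[\tvec{x}/\tvec{y}]$. Local posetality is immediate: entailment is reflexive and transitive (identity and cut), and the quotient by $\dashv\vdash_{\theory{T}}$ makes each hom-set a poset.

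Next I would verify that $(\otimes, I)$ with the given braidings is a symmetric monoidal structure and that $(\Copy_{\tvec{A}}, \Delete_{\tvec{A}})$ forms a cocommutative comonoid satisfying the coherence axioms of a monoidal category with diagonals. Each axiom reduces to a logical equivalence between formulas built from conjunctions of equalities, and all are direct consequences of the equality, conjunction, and substitution rules. For the lax comonoid homomorphism property I must show, for a morphism $R = [\tvec{x};\tvec{y} \given \varphi]$, that $R \cdot \Copy_{\tvec{B}} \Rightarrow \Copy_{\tvec{A}} \cdot (R \otimes R)$ and $R \cdot \Delete_{\tvec{B}} \Rightarrow \Delete_{\tvec{A}}$. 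After simplification the former reduces to the entailment $\varphi(\tvec{x},\tvec{y}') \wedge \tvec{y}' = \tvec{y}'' \vdash \varphi(\tvec{x},\tvec{y}') \wedge \varphi(\tvec{x},\tvec{y}'')$, which follows from equality-substitution; the latter is immediate since every formula entails $\top$.

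To construct right adjoints I would define
\begin{align*}
  \Merge_{\tvec{A}} &:= [\tvec{x}:\tvec{A}, \tvec{x}':\tvec{A};\ \tvec{y}:\tvec{A}
    \given (\tvec{x} = \tvec{y}) \wedge (\tvec{x}' = \tvec{y})], \\
  \Create_{\tvec{A}} &:= [\,;\ \tvec{y}:\tvec{A} \given \top],
\end{align*}
and then verify the unit $1 \Rightarrow \Copy \cdot \Merge$ and counit $\Merge \cdot \Copy \Rightarrow 1$ of the adjunction $\Copy_{\tvec{A}} \dashv \Merge_{\tvec{A}}$, and similarly for $\Delete \dashv \Create$. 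Again each reduces to a transparent manipulation of conjunctions of equalities. Finally, the Frobenius equations become entailments of the form
\begin{equation*}
  \exists \tvec{y}:\tvec{A}.\bigl((\tvec{x}_1 = \tvec{y}) \wedge (\tvec{x}_2 = \tvec{y}) \wedge
    (\tvec{y} = \tvec{z}_1) \wedge (\tvec{y} = \tvec{z}_2)\bigr)
  \;\dashv\vdash\;
  (\tvec{x}_1 = \tvec{z}_1) \wedge (\tvec{x}_1 = \tvec{x}_2) \wedge (\tvec{z}_1 = \tvec{z}_2),
\end{equation*}
which are straightforward consequences of the equality rules. I expect the main obstacle to be purely bureaucratic: organizing the verification of associativity of composition and the Frobenius equations without drowning in substitutions, and making sure each use of the Frobenius rule of regular logic is accounted for (since, as noted in the remark, it is not derivable from the other rules). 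Once these are cleanly dispatched, the remaining axioms fall out by symmetric, almost mechanical computations.
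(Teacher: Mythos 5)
Your proposal is correct and follows essentially the same route as the paper's proof: reduce each axiom of a bicategory of relations to an entailment or logical equivalence derivable in regular logic, with the Frobenius rule carrying the weight in the associativity of composition, the lax comonoid homomorphism axiom reducing to the same equality-substitution entailments, and the adjoints $\Merge_{\tvec{A}}$ and $\Create_{\tvec{A}}$ given by the same formulas. The paper sketches a few of these verifications (associativity, horizontal composition, lax comonoid homomorphism) and leaves the rest to the reader, exactly as you anticipate.
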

\begin{proof}
  
  We must check that every axiom of a bicategory of relations can be deduced
  from the inference rules of regular logic. The proofs are tedious but
  mechanical; we sketch a few to illustrate what is involved and leave the rest
  to the reader.
  
  First, we show that $\Cl(\theory{T})$ is a category. To prove that
  composition is associative, we must show that the two formulas
  \begin{gather*}
    \tvec{x}:\tvec{A}; \tvec{w}:\tvec{D} \given
      \exists \tvec{y}:\tvec{B}.
        (\varphi \wedge \exists \tvec{z}:\tvec{C}.(\psi \wedge \chi)) \\
    \tvec{x}:\tvec{A}; \tvec{w}:\tvec{D} \given
      \exists \tvec{z}:\tvec{C}.
        (\exists \tvec{y}:\tvec{B}.(\varphi \wedge \psi) \wedge \chi)
  \end{gather*}
  are equivalent in regular logic. In fact, both formulas are equivalent to
  \begin{equation*}
    \tvec{x}:\tvec{A}; \tvec{w}:\tvec{D} \given
      \exists \tvec{y}:\tvec{B}.\exists \tvec{z}:\tvec{C}.
      (\varphi \wedge \psi \wedge \chi).
  \end{equation*}
  The derivation relies crucially on the Frobenius rule. We omit the details
  and the proof of the identity axiom of a category.

  Next, we prove that $\Cl(\theory{T})$ is a locally posetal 2-category. It is
  obvious that $\Cl(\theory{T})$ is locally posetal. The vertical composition
  axiom is immediate from the cut rule. To prove the horizontal composition
  axiom, suppose we have two entailments
  \begin{equation*}
    \tvec{x}:\tvec{A}, \tvec{y}:\tvec{B} \given \varphi \vdash \theta
    \qquad\text{and}\qquad
    \tvec{y}:\tvec{B}, \tvec{z}:\tvec{C} \given \psi \vdash \chi.
  \end{equation*}
  By the conjunction and weakening rules,
  \begin{equation*}
    \tvec{x}:\tvec{A}, \tvec{y}:\tvec{B}, \tvec{z}:\tvec{C} \given
      \varphi \wedge \psi \vdash \theta \wedge \chi.
  \end{equation*}
  Using the $\exists$-introduction rule and then the bidirectional
  $\exists$-rule, we obtain
  \begin{equation*}
    \tvec{x}:\tvec{A}, \tvec{z}:\tvec{C} \given
      \exists \tvec{y}:\tvec{B}.(\varphi \wedge \psi) \vdash
      \exists \tvec{y}:\tvec{B}.(\theta \wedge \chi),
  \end{equation*}
  proving the validity of horizontal composition.
  
  The axiom on lax comonoid homomorphisms amounts to two easily proved
  entailments, namely
  \begin{equation*}
    \tvec{x}:\tvec{A}; \tvec{y}':\tvec{B},\tvec{y}'':\tvec{B} \given
      \exists \tvec{y}:\tvec{B}.
        (\varphi \wedge (\tvec{y} = \tvec{y}') \wedge (\tvec{y} = \tvec{y}''))
    \vdash \varphi[\tvec{y}'/\tvec{y}] \wedge \varphi[\tvec{y}''/\tvec{y}]
  \end{equation*}
  and
  \begin{equation*}
    \tvec{x}:\tvec{A} \given \exists \tvec{y}:\tvec{B}.\varphi \vdash \top.
  \end{equation*}
  Finally, we mention that the adjoints of $\Copy_{\tvec{A}}$ and 
  $\Delete_{\tvec{A}}$ exist and are equal to
  \begin{equation*}
    \Merge_{\tvec{A}} :=
      [\tvec{x}':\tvec{A},\tvec{x}'':\tvec{A}; \tvec{x}:\tvec{A} \given
        (\tvec{x}'=\tvec{x}) \wedge (\tvec{x}''=\tvec{x})]
    \qquad\text{and}\qquad
    \Create_{\tvec{A}} := [;\tvec{x}:\tvec{A} \given \top]. \qedhere
  \end{equation*}
\end{proof}

Next, we construct the internal language of a bicategory of relations. A
preliminary definition is:
\begin{definition}
  An \emph{interpretation} or \emph{model} of a signature in a bicategory of
  relations $\cat{B}$ is specified by
  \begin{itemize}
    \item for every basic type $A$, an object $\sem{A}$ of $\cat{B}$;
    \item for every relation symbol $R:\tvec{A}$, a morphism
    $\sem{R}: \sem{\tvec{A}} \to I$ of $\cat{B}$, where we define
    $\sem{\tvec{A}} := \sem{A_1} \otimes \cdots \otimes \sem{A_n}$.
  \end{itemize}
  If $\Gamma = \tvec{x}: \tvec{A}$ is a context, we also write
  $\sem{\Gamma} := \sem{\tvec{A}}$.
  
  An interpretation of a signature extends to the full regular language in that
  signature. By induction on the formation rules of regular logic
  (\cref{fig:regular-formation}), we assign to each formula in context
  $\Gamma,\Gamma' \given \varphi$ a morphism $\sem{\Gamma;\Gamma' \given
  \varphi}: \sem{\Gamma} \to \sem{\Gamma'}$ of $\cat{B}$. In interpreting a
  rule, we allow ourselves to order the context variables and place the context
  semicolon however is most convenient, with the understanding that any other
  arrangement can be achieved by a suitable braiding and bending of wires. This
  flexibility greatly simplifies the notation.
  
  \begin{itemize}
    \item \emph{Relation symbol}: Given a relation symbol $R: \tvec{A}$,
    \begin{equation*}
      \sem{\tvec{x}:\tvec{A}, \Gamma \given R(\tvec{x})}
        := \sem{R} \otimes \Delete_{\sem{\Gamma}}.
    \end{equation*}
    
    \item \emph{Equality}: There are two cases: when the variables are distinct,
    set
    \begin{equation*}
      \sem{\Gamma, x:A; y:A \given x=y}
        := \Delete_{\sem{\Gamma}} \otimes 1_{\sem{A}};
    \end{equation*}
    when the variables are equal, set
    \begin{equation*}
      \sem{\Gamma \given x=x} := \Delete_{\sem{\Gamma}}.
    \end{equation*}
    
    \item \emph{Weakening}: Given a morphism
    $\sem{\Gamma \given \varphi}: \sem{\Gamma} \to I$,
    \begin{equation*}
      \sem{\Gamma, x:A \given \varphi}
        := \sem{\Gamma \given \varphi} \otimes \Delete_{\sem{A}}.
    \end{equation*}
    
    \item \emph{Truth}: $\sem{\Gamma;\Gamma' \given \top} := 
      \top_{\sem{\Gamma},\sem{\Gamma'}} = 
      \Delete_{\sem{\Gamma}} \Create_{\sem{\Gamma'}}$.
    
    \item \emph{Conjunction}: Given morphisms
    $\sem{\Gamma;\Gamma' \given \varphi}: \sem{\Gamma} \to \sem{\Gamma'}$ and
    $\sem{\Gamma;\Gamma' \given \psi}: \sem{\Gamma} \to \sem{\Gamma'}$,
    \begin{equation*}
      \sem{\Gamma;\Gamma' \given \varphi \wedge \psi}
        := \Copy_{\sem{\Gamma}} \cdot
           (\sem{\Gamma;\Gamma' \given \varphi} \otimes 
            \sem{\Gamma;\Gamma' \given \psi}) \cdot
           \Merge_{\sem{\Gamma'}}.
    \end{equation*}
    
    \item \emph{Existential quantifier}: Given a morphism
    $\sem{\Gamma; x:A \given \varphi}: \sem{\Gamma} \to \sem{A}$,
    \begin{equation*}
      \sem{\Gamma \given \exists x:A.\varphi}
        := \sem{\Gamma; x:A \given \varphi} \cdot \Delete_{\sem{A}}.
    \end{equation*}
  \end{itemize}
\end{definition}

\begin{definition}
  An interpretation $\sem{\cdot}$ of the signature of a regular theory
  $\theory{T}$ in a bicategory of relations $\cat{B}$ is an
  \emph{interpretation} or \emph{model} of $\theory{T}$ in $\cat{B}$ if it
  satisfies all the axioms of $\theory{T}$, i.e, for every axiom
  \begin{equation*}
    \Gamma \given \varphi \vdash \psi \quad\text{of}\quad \theory{T},
  \end{equation*}
  there is a 2-morphism
  \begin{equation*}
    \sem{\Gamma \given \varphi} \implies \sem{\Gamma \given \psi}
    \quad\text{in}\quad \cat{B}.
  \end{equation*}
\end{definition}

The previous lemma can be interpreted as a \emph{completeness} theorem, stating
that the inference rules of regular logic are sufficient to prove every axiom of
a bicategory of relations. The next lemma is a \emph{soundness} theorem: it says
that every inference rule of regular logic is valid in an arbitrary bicategory
of relations.
\begin{lemma}
  Let $\sem{\cdot}$ be an interpretation of a regular theory $\theory{T}$ in a
  bicategory of relations $\cat{B}$. For every theorem
  \begin{equation*}
    \Gamma \given \varphi \vdash_{\theory{T}} \psi
    \quad\text{of}\quad \theory{T},
  \end{equation*}
  there is a 2-morphism
  \begin{equation*}
    \sem{\Gamma \given \varphi} \implies \sem{\Gamma \given \psi}
    \quad\text{in}\quad \cat{B}.
  \end{equation*}
\end{lemma}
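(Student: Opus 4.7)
The plan is to proceed by induction on the length of a derivation of $\Gamma \given \varphi \vdash_{\theory{T}} \psi$ in the proof system of \cref{fig:regular-inference}. The base case consists of two kinds of leaves: axioms of $\theory{T}$, for which the required 2-morphism exists by the very definition of a model of $\theory{T}$; and the schematic logical axioms of regular logic, namely the identity, truth, equality, and Frobenius axioms together with the conjunction projections. For each of these, I would unfold the inductive definition of $\sem{\cdot}$ and exhibit the 2-morphism directly from the axioms of a bicategory of relations. The Frobenius rule is especially satisfying: it translates, after unfolding the interpretation of $\exists$ and $\wedge$, exactly into the Frobenius equation for $(\Copy_{\sem{A}},\Merge_{\sem{A}})$ combined with the adjunction $\Delete_{\sem{A}} \dashv \Create_{\sem{A}}$, which is presumably the origin of the shared terminology.

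The inductive step handles the remaining rules of \cref{fig:regular-inference}. The cut rule is discharged by vertical composition of 2-morphisms. The conjunction introduction rule uses horizontal composition together with the fact that $\Copy$ and $\Merge$ are morphisms in a locally posetal 2-category and are therefore monotone in each argument. The bidirectional existential quantifier rule becomes the assertion that postcomposition with $\Delete_{\sem{A}}$ and $\Create_{\sem{A}}$ sets up a bijection between 2-morphisms of the two shapes in question; one direction is horizontal composition, while the other uses the 2-categorical adjunction $\Delete_{\sem{A}} \dashv \Create_{\sem{A}}$ and the zig-zag identities to recover the premise from the conclusion.

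The main obstacle is the substitution rule, which requires a preliminary \emph{substitution lemma}: for every formula in context $\tvec{x}:\tvec{A} \given \varphi$ and every context-to-context assignment $\Gamma \given \tvec{t}:\tvec{A}$ (here just a tuple of variables drawn from $\Gamma$), one has
\begin{equation*}
  \sem{\Gamma \given \varphi[\tvec{t}/\tvec{x}]}
    \;=\; s_{\tvec{t}} \cdot \sem{\tvec{x}:\tvec{A} \given \varphi},
\end{equation*}
where $s_{\tvec{t}} : \sem{\Gamma} \to \sem{\tvec{A}}$ is the ``substitution morphism'' assembled from $\Copy$, $\Delete$, and braidings according to which variables of $\Gamma$ are duplicated, dropped, or permuted by $\tvec{t}$. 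I would prove this by a secondary induction on the structure of $\varphi$, using the coherence axioms for the diagonals (\cref{sec:Rel}) and the crucial observation that $s_{\tvec{t}}$ is a \emph{map} in $\cat{B}$ (being built entirely from maps and their monoidal products); hence the diagonal is natural with respect to $s_{\tvec{t}}$, allowing one to commute $s_{\tvec{t}}$ past the $\Copy$ used in the clause for $\wedge$, and similarly past the projections implicit in the clauses for $\exists$ and for relation symbols. Once the substitution lemma is established, the substitution rule follows by precomposing the inductively obtained 2-morphism with $s_{\tvec{t}}$ and invoking the monotonicity of horizontal composition.
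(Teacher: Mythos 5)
Your proof follows essentially the same route as the paper's: induction on the derivation, with the axioms of $\theory{T}$ holding by the definition of a model and each inference rule of \cref{fig:regular-inference} validated by the corresponding categorical structure (cut as vertical composition, the conjunction rules via monotonicity of $\Copy_{\sem{\Gamma}}(-\otimes-)\Merge_{\sem{\Gamma'}}$ and the lax comonoid homomorphism axiom, the bidirectional $\exists$-rule via the adjunction $\Delete_{\sem{A}} \dashv \Create_{\sem{A}}$, and the Frobenius rule via the Frobenius equation). The one place you go beyond the paper is the substitution rule, which the paper marks ``Omitted''; your substitution lemma---with the key observation that $s_{\tvec{t}}$ is a map, hence a comonoid homomorphism, so the diagonal is natural with respect to it and can be commuted past the interpretation clauses for $\wedge$ and $\exists$---is exactly the standard way to fill that gap and is correct.
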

\begin{proof}
  The proof is by induction on the derivation of a theorem of $\theory{T}$. By
  the definition of an interpretation, every axiom of $\theory{T}$ holds in
  $\cat{B}$. Therefore, it suffices to show that every inference rule of regular
  logic (\cref{fig:regular-inference}) is valid in $\cat{B}$. We sketch these
  proofs below.
  \begin{itemize}
    \item \emph{Identity}: Existence of identity 2-morphism.
    \item \emph{Cut}: Vertical composition of 2-morphisms.
    \item \emph{Substitution}: Omitted.
    \item \emph{Equality}: Omitted.
    
    \item \emph{Truth}: By the lax comonoid homomorphism axiom,
    \begin{equation*}
      \sem{\Gamma \given \varphi}
       = \sem{\Gamma \given \varphi} \cdot \Delete_I
       \implies \Delete_{\sem{\Gamma}}
       = \sem{\Gamma \given \top}.
    \end{equation*}
    
    \item \emph{Conjunction}: If $\sem{\Gamma;\Gamma' \given \varphi} \Rightarrow
    \sem{\Gamma;\Gamma' \given \psi}$ and $\sem{\Gamma;\Gamma' \given \varphi}
    \Rightarrow \sem{\Gamma;\Gamma' \given \chi}$, then
    \begin{align*}
      \sem{\Gamma;\Gamma' \given \varphi}
        &= \Copy_{\sem{\Gamma}}
            (\sem{\Gamma;\Gamma' \given \varphi} \otimes 
             \sem{\Gamma;\Gamma' \given \varphi})
            \Merge_{\sem{\Gamma'}} \\
        &\Rightarrow \Copy_{\sem{\Gamma}}
            (\sem{\Gamma;\Gamma' \given \psi} \otimes 
             \sem{\Gamma;\Gamma' \given \chi})
            \Merge_{\sem{\Gamma'}} \\
        &= \sem{\Gamma;\Gamma' \given \psi \wedge \chi}.
    \end{align*}
    That proves the first conjunction rule. For the second, calculate
    \begin{align*}
      \sem{\Gamma;\Gamma' \given \varphi \wedge \psi}
        &= \Copy_{\sem{\Gamma}}
            (\sem{\Gamma;\Gamma' \given \varphi} \otimes 
             \sem{\Gamma;\Gamma' \given \psi})
            \Merge_{\sem{\Gamma'}} \\
        &\Rightarrow \Copy_{\sem{\Gamma}}
            (\sem{\Gamma;\Gamma' \given \varphi} \otimes 
             \Delete_{\sem{\Gamma}} \Create_{\sem{\Gamma'}})
            \Merge_{\sem{\Gamma'}} \\
        &= \sem{\Gamma;\Gamma' \given \varphi}.
    \end{align*}
    The proof of the third rule is similar.
    
    \item \emph{Existential quantifier}: Fix formulas $\sem{\Gamma; x:A \given
    \varphi}: \sem{\Gamma} \to \sem{A}$ and $\sem{\Gamma \given \psi}:
    \sem{\Gamma} \to I$. By the weakening formation rule, $\sem{\Gamma; x:A
    \given \psi} = \sem{\Gamma \given \psi} \cdot \Create_{\sem{A}}$. If there
    is an entailment $\sem{\Gamma; x:A \given \varphi} \Rightarrow \sem{\Gamma;
    x:A \given \psi}$, then
    \begin{align*}
      \sem{\Gamma \given \exists x:A.\varphi}
        &= \sem{\Gamma; x:A \given \varphi} \Delete_{\sem{A}} \\
        &\Rightarrow \sem{\Gamma; x:A \given \psi} \Delete_{\sem{A}} \\
        &= \sem{\Gamma \given \psi} \Create_{\sem{A}} \Delete_{\sem{A}} \\
        &= \sem{\Gamma \given \psi}.
    \end{align*}
    The proof of the converse rule is similar.
    
    \item \emph{Frobenius}: Given formulas $\sem{\Gamma \given \varphi}:
    \sem{\Gamma} \to I$ and $\sem{\Gamma, x:A \given \psi}: \sem{\Gamma} \to
    \sem{A}$, compute
    \begin{align*}
      \sem{\Gamma \given \varphi \wedge (\exists x:A.\psi)}
        &= \Copy_{\sem{\Gamma}}
          (\sem{\Gamma \given \varphi} \otimes 
           \sem{\Gamma \given \exists x:A.\psi}) \\
        &= \Copy_{\sem{\Gamma}}
          (\sem{\Gamma \given \varphi} \otimes 
           \sem{\Gamma, x:A \given \psi} \Delete_{\sem{A}}) \\
        &= \Copy_{\sem{\Gamma}}
          (\sem{\Gamma \given \varphi} \Create_{\sem{A}} \otimes
           \sem{\Gamma, x:A \given \psi})
          \Merge_{\sem{A}} \Delete_{\sem{A}} \\
        &= \Copy_{\sem{\Gamma}}
          (\sem{\Gamma, x:A \given \varphi} \otimes
           \sem{\Gamma, x:A \given \psi})
          \Merge_{\sem{A}} \Delete_{\sem{A}} \\
        &= \sem{\Gamma; x:A \given \varphi \wedge \psi} \Delete_{\sem{A}} \\
        &= \sem{\Gamma \given \exists x:A.(\varphi \wedge \psi)}. \qedhere
    \end{align*}
  \end{itemize}
\end{proof}

\begin{definition}
  The \emph{internal language} of a small bicategory of relations $\cat{B}$ is
  the regular theory $\Lang(\cat{B})$ defined as follows. Its signature consists
  of
  \begin{itemize}
    \item a basic type $A$ for every object $A$ of $\cat{B}$, and
    \item a relation symbol $R: (A_1,\dots,A_n)$ for every morphism 
    $R: A_1 \otimes \cdots \otimes A_n \to I$ of $\cat{B}$.
  \end{itemize}
  A sequent $\Gamma \given \varphi \vdash \psi$ is an axiom of $\Lang(\cat{B})$
  if and only if $\sem{\Gamma \given \varphi} \Rightarrow \sem{\Gamma \given
  \psi}$ in $\cat{B}$, where $\sem{\cdot}$ is the obvious interpretation of the
  signature of $\Lang(\cat{B})$ in $\cat{B}$.
\end{definition}

By the lemma, $\Lang(\cat{B})$ is interpretable in $\cat{B}$ and the theorems of
$\Lang(\cat{B})$ are exactly the 2-morphism of $\cat{B}$.

\begin{remark}
  In general, a single morphism of $\cat{B}$ gives rise to many relation symbols
  of $\Lang(\cat{B})$, e.g., if $R: A \to I$ is morphism and $A = A_1 \otimes
  A_2$, then there are relation symbols $R: (A)$ and $R: (A_1,A_2)$.
\end{remark}

We have now developed the machinery to state and prove the main theorem of
\cref{sec:logic}.

\begin{theorem}
  For every small bicategory of relations $\cat{B}$, there is an equivalence
  of categories
  \begin{equation*}
    \Cl(\Lang(\cat{B})) \simeq \cat{B}
    \qquad\text{in}\qquad \BiRel.
  \end{equation*}
\end{theorem}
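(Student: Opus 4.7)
The plan is to construct a structure-preserving functor $F: \Cl(\Lang(\cat{B})) \to \cat{B}$ from the canonical interpretation of $\Lang(\cat{B})$ in $\cat{B}$ and verify directly that $F$ is an equivalence in $\BiRel$. On objects, set $F(A_1,\dots,A_n) := A_1 \otimes \cdots \otimes A_n$, with $F() := I$; on morphisms, set $F([\Gamma;\Delta \given \varphi]) := \sem{\Gamma;\Delta \given \varphi}$, where $\sem{\cdot}$ is the obvious interpretation of the signature of $\Lang(\cat{B})$ in $\cat{B}$. Well-definedness on morphisms is immediate from the soundness lemma: two formulas representing the same morphism of $\Cl(\Lang(\cat{B}))$ are interprovable in $\Lang(\cat{B})$, so their interpretations agree as 2-morphisms in both directions, hence are equal.

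Next I would check that $F$ preserves the structure of a bicategory of relations. Because composition, monoidal product, braiding, and diagonals in $\Cl(\Lang(\cat{B}))$ are defined using exactly the same logical connectives ($\exists,\wedge,\top,=$) that the interpretation $\sem{\cdot}$ translates into $\cdot, \otimes, \Copy, \Delete, \Merge, \Create$ in $\cat{B}$, this is a routine unwinding of definitions. Preservation of 2-morphisms is a restatement of the soundness lemma: every entailment $\varphi \vdash_{\Lang(\cat{B})} \psi$ yields a 2-morphism $\sem{\varphi} \Rightarrow \sem{\psi}$ in $\cat{B}$.

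It then suffices to show $F$ is essentially surjective, full, and faithful. Essential surjectivity is trivial: any object $A$ of $\cat{B}$ is $F$ of the singleton list $(A)$. For fullness, given any morphism $R: \sem{\Gamma} \to \sem{\Delta}$ in $\cat{B}$, bend it via the compact closed structure to a morphism $\tilde{R}: \sem{\Gamma} \otimes \sem{\Delta} \to I$; this is (the interpretation of) a relation symbol of $\Lang(\cat{B})$ of arity $(\Gamma,\Delta)$, and the zig-zag identities ensure that $\sem{\tvec{x}:\sem{\Gamma}; \tvec{y}:\sem{\Delta} \given \tilde{R}(\tvec{x},\tvec{y})} = R$, exhibiting a preimage of $R$ under $F$. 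For faithfulness, suppose $F([\Gamma;\Delta \given \varphi]) = F([\Gamma;\Delta \given \psi])$, i.e., $\sem{\varphi} = \sem{\psi}$ in $\cat{B}$. By the definition of $\Lang(\cat{B})$, both $\varphi \vdash \psi$ and $\psi \vdash \varphi$ are then axioms of $\Lang(\cat{B})$, so $\varphi \dashv\vdash_{\Lang(\cat{B})} \psi$, whence $[\varphi] = [\psi]$ in $\Cl(\Lang(\cat{B}))$.

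The main technical obstacle I anticipate is not conceptual but bookkeeping: verifying the fullness computation honestly requires a clean treatment of the compact closed ``bending'' operation in the presence of lists of objects of arbitrary length, and one must be careful that the relation symbol $\tilde{R}$ used to fill in a formula does in fact have signature compatible with the context $(\Gamma;\Delta)$. The companion verifications that $F$ preserves $\Copy$, $\Merge$, and the Frobenius structure up to the identifications built into the canonical interpretation are similarly mechanical but voluminous. Note that $F$ is an equivalence rather than an isomorphism because $G(F(A_1,\dots,A_n)) = (A_1 \otimes \cdots \otimes A_n)$ differs from $(A_1,\dots,A_n)$; the pseudo-inverse $G$ exists by fullness and faithfulness, and the required natural isomorphism on objects is supplied by the formula $[x:A_1 \otimes \cdots \otimes A_n;\ \tvec{y}:\tvec{A} \given \epsilon_{\tvec{A}}(x,\tvec{y})]$, which is invertible in $\Cl(\Lang(\cat{B}))$ by the zig-zag identities.
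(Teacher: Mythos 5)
Your proposal is correct and follows essentially the same route as the paper: define $F$ on objects by tensoring the interpreted types and on morphisms via $\sem{\cdot}$, use the soundness lemma (in the form of the equivalence between entailment in $\Lang(\cat{B})$ and 2-morphisms in $\cat{B}$) to get well-definedness and faithfulness, obtain fullness from the relation symbols attached to (bent) morphisms of $\cat{B}$, and note essential surjectivity on singleton lists. The paper likewise defers most of the structure-preservation checks, carrying out only the composition and monoidal-product calculations explicitly, so your deferral of that bookkeeping matches its level of detail.
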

\begin{proof}
  To prove the equivalence, it suffices to construct a structure-preserving
  functor $F: \Cl(\Lang(\cat{B})) \to \cat{B}$ that is full, faithful, and
  essentially surjective on objects \cite[Theorem 1.5.9]{riehl2016}. Define the
  functor $F$ on objects by
  \begin{equation*}
    F(\tvec{A}) := \sem{\tvec{A}} = \sem{A_1} \otimes \cdots \otimes \sem{A_n},
  \end{equation*}
  where $\tvec{A} = (A_1,\dots,A_n)$ and each $A_i$ is a basic type of
  $\Lang(\cat{B})$. If $\Gamma = \tvec{x}: \tvec{A}$ is a context, we also write
  $F(\Gamma) := \sem{\Gamma}$. Define $F$ on morphisms $[\Gamma; \Gamma' \given
  \varphi]: [\Gamma] \to [\Gamma']$ by
  \begin{equation*}
    F([\Gamma; \Gamma' \given \varphi]) := \sem{\Gamma; \Gamma' \given \varphi}:
      F(\Gamma) \to F(\Gamma').
  \end{equation*}
  By the construction of the classifying category and the internal language, we
  have the fundamental equivalence
  \begin{align*}
    [\Gamma;\Gamma' \given \varphi] \Rightarrow
      [\Gamma;\Gamma' \given \psi] \ \text{in}\ \Cl(\Lang(\cat{B}))
    &\qquad\text{iff}\qquad
    \Gamma,\Gamma' \given \varphi \vdash_{\Lang(\cat{B})} \psi \\
    &\qquad\text{iff}\qquad
    \sem{\Gamma;\Gamma' \given \varphi} \Rightarrow
      \sem{\Gamma;\Gamma' \given \psi}\ \text{in}\ \cat{B}.
  \end{align*}
  In particular, the functor $F$ is well-defined and faithful. It is full
  because if $R: A \to B$ is a morphism of $\cat{B}$, then there exists a
  relation symbol $R:(A,B)$ of $\Lang(\cat{B})$ such that
  $F(\sem{x:A;y:B \given R(x,y)}) = R$. Clearly, $F$ is (essentially) surjective
  on objects.
  
  It remains to prove that $F$ is a structure-preserving functor. The
  fundamental equivalence says that $F$ preserves 2-morphisms. We must show that
  $F$ also preserves composition, identities, monoidal products, and all the
  other structures of a bicategory of relations. We prove that $F$ preserves
  composition and products of morphisms and omit the other straightforward
  verifications. First, given morphisms $[\tvec{x}:\tvec{A}; \tvec{y}:\tvec{B}
  \given \varphi]$ and $[\tvec{y}:\tvec{B}; \tvec{z}:\tvec{C} \given \psi]$ of
  $\Cl(\Lang(\cat{B}))$, calculate
  \begin{align*}
    & F([\tvec{x}:\tvec{A}; \tvec{y}:\tvec{B} \given \varphi] \cdot
        [\tvec{y}:\tvec{B}; \tvec{z}:\tvec{C} \given \psi]) \\
    &\qquad= F([\tvec{x}:\tvec{A}; \tvec{z}:\tvec{C} \given
              \exists \tvec{y}:\tvec{B}.(\varphi \wedge \psi)]) \\
    &\qquad= \sem{\tvec{x}:\tvec{A}; \tvec{z}:\tvec{C} \given
              \exists \tvec{y}:\tvec{B}.(\varphi \wedge \psi)} \\
    &\qquad= (1_{\sem{\tvec{A}}} \otimes \eta_{\sem{\tvec{C}}})
             \sem{\tvec{x}:\tvec{A},\tvec{z}:\tvec{C}; \tvec{y}:\tvec{B} \given
                  \varphi \wedge \psi} \Delete_{\sem{\tvec{B}}} \\
    &\qquad= (1_{\sem{\tvec{A}}} \otimes \eta_{\sem{\tvec{C}}})
             (\sem{\tvec{x}:\tvec{A}; \tvec{y}:\tvec{B} \given \varphi} \otimes
              \sem{\tvec{z}:\tvec{C}; \tvec{y}:\tvec{B} \given \psi})
             \Merge_{\sem{\tvec{B}}} \Delete_{\sem{\tvec{B}}} \\
    &\qquad= (1_{\sem{\tvec{A}}} \otimes \eta_{\sem{\tvec{C}}})
             (\sem{\tvec{x}:\tvec{A}; \tvec{y}:\tvec{B} \given \varphi} \otimes
              \sem{\tvec{y}:\tvec{B}; \tvec{z}:\tvec{C} \given \psi}^\dagger)
             \epsilon_{\sem{\tvec{B}}} \\
    &\qquad= \sem{\tvec{x}:\tvec{A}; \tvec{y}:\tvec{B} \given \varphi} \cdot
             \sem{\tvec{y}:\tvec{B}; \tvec{z}:\tvec{C} \given \psi} \\
    &\qquad = F([\tvec{x}:\tvec{A}; \tvec{y}:\tvec{B} \given \varphi]) \cdot
              F([\tvec{y}:\tvec{B}; \tvec{z}:\tvec{C} \given \psi]).
  \end{align*}
  Given morphisms $[\tvec{x}:\tvec{A}; \tvec{y}:\tvec{B} \given \varphi]$ and
  $[\tvec{z}:\tvec{C}; \tvec{w}:\tvec{D} \given \psi]$, calculate
  \begin{align*}
    & F([\tvec{x}:\tvec{A}; \tvec{y}:\tvec{B} \given \varphi] \otimes
        [\tvec{z}:\tvec{C}; \tvec{w}:\tvec{D} \given \psi]) \\
    &\qquad= F([\tvec{x}:\tvec{A},\tvec{z}:\tvec{C}; 
                \tvec{y}:\tvec{B},\tvec{w}:\tvec{D} \given
                \varphi \wedge \psi]) \\
    &\qquad= \sem{\tvec{x}:\tvec{A},\tvec{z}:\tvec{C}; 
                  \tvec{y}:\tvec{B},\tvec{w}:\tvec{D} \given
                  \varphi \wedge \psi]} \\
    &\qquad= \Copy_{(\tvec{A},\tvec{C})}
             (\sem{\tvec{x}:\tvec{A},\tvec{z}:\tvec{C}; 
                   \tvec{y}:\tvec{B},\tvec{w}:\tvec{D} \given \varphi} \otimes
              \sem{\tvec{x}:\tvec{A},\tvec{z}:\tvec{C}; 
                   \tvec{y}:\tvec{B},\tvec{w}:\tvec{D} \given \psi})
             \Merge_{(\tvec{B},\tvec{D})} \\
    &\qquad= (\Copy_{\tvec{A}} \otimes \Copy_{\tvec{C}})
             (1_{\tvec{A}} \otimes \sigma_{\tvec{A},\tvec{C}} \otimes 1_{\tvec{C}}) \\
      &\qquad\phantom{{}={}}
          (\sem{\tvec{x}:\tvec{A}; \tvec{y}:\tvec{B} \given \varphi} \otimes
                \Delete_{\tvec{C}} \Create_{\tvec{D}} \otimes
                \Delete_{\tvec{A}} \Create_{\tvec{B}} \otimes
                \sem{\tvec{z}:\tvec{C}; \tvec{w}:\tvec{D} \given \psi}) \\
      &\qquad\phantom{{}={}}
          (1_{\tvec{B}} \otimes \sigma_{\tvec{D},\tvec{B}} \otimes 1_{\tvec{D}})
          (\Merge_{\tvec{B}} \otimes \Merge_{\tvec{D}}) \\
    &\qquad= (\Copy_{\tvec{A}} \otimes \Copy_{\tvec{C}})
             (\sem{\tvec{x}:\tvec{A}; \tvec{y}:\tvec{B} \given \varphi} \otimes
                   \Delete_{\tvec{A}} \Create_{\tvec{B}} \otimes
                   \Delete_{\tvec{C}} \Create_{\tvec{D}} \otimes
                   \sem{\tvec{z}:\tvec{C}; \tvec{w}:\tvec{D} \given \psi})
             (\Merge_{\tvec{B}} \otimes \Merge_{\tvec{D}}) \\
    &\qquad= \sem{\tvec{x}:\tvec{A}; \tvec{y}:\tvec{B} \given \varphi} \otimes
             \sem{\tvec{z}:\tvec{C}; \tvec{w}:\tvec{D} \given \psi} \\
    &\qquad= F([\tvec{x}:\tvec{A}; \tvec{y}:\tvec{B} \given \varphi]) \otimes
             F([\tvec{z}:\tvec{C}; \tvec{w}:\tvec{D} \given \psi]).
  \end{align*}
  The first calculation is obscured by the wire bending needed to unpack the
  definitions, but becomes transparent when rendered as a string diagram.
\end{proof}

\begin{remark}
  The inverse functor $G: \cat{B} \to \Cl(\Lang(\cat{B}))$ in the equivalence is
  defined on objects by $G(A) := A$ and on morphisms $R: A \to B$ by $G(R) :=
  [x:A; y:B \given R(x,y)]$. As an alternate proof, it is possible to explicitly
  construct the natural isomorphisms $F \cdot G \cong 1_{\Cl(\Lang(\cat{B}))}$
  and $G \cdot F \cong 1_{\cat{B}}$.
\end{remark}

\section{Coherent logic and distributive bicategories of relations} \label{app:coherent}

In this appendix we prove the main theorem of \cref{sec:logic2}, establishing a
correspondence between coherent logic with product and sum types and
distributive bicategories of relations. We present the logical system carefully
because we cannot find a comparable logic in the literature. However, where the
proof overlaps with \cref{app:regular}, we provide less detail.

We first define a formal system for coherent logic with product and sum types.
For the sake of brevity, we refer to this system simply as ``coherent logic.''
We maintain the syntactic conventions of \cref{app:regular}, including the
vector notation. Our proof system is an amalgamation of the usual proof systems
for typed lambda calculus \cite{jacobs1999,fiore2006} and coherent logic
\cite{johnstone2002} but with several important differences that we shall point
out.

\begin{definition}[cf.\ \cite{jacobs1995,jacobs1999}]
  A \emph{distributive signature} consists of
  \begin{itemize}
    \item a set of \emph{basic types}, which generates a set of \emph{types}
    according to the BNF grammar
    \begin{equation*}
      A,B ::= C\ |\ A \times B\ |\ 1\ |\ A + B\ |\ 0,
    \end{equation*}
    where $C$ ranges over basic types;
    
    \item a set of \emph{function symbols} $f,g,h,\dots$, each with fixed domain
    type $A$ and codomain type $B$, written $f: A \to B$;
    
    \item a set of \emph{relation symbols} $R,S,\dots$, each with fixed domain
    types $A$ and $B$, written $R: (A,B)$.
  \end{itemize}
\end{definition}

Variables and contexts are defined as in \cref{app:regular}. With respect to a
fixed distributive signature, a \emph{term in context} is an expression of form
$\Gamma \given t:A$, as defined inductively by the formation rules in
\cref{fig:coherent-formation-terms}. Likewise, a \emph{formula in context} is an
expression $\Gamma \given \varphi$ defined by the formation rules in
\cref{fig:coherent-formation-formulas}.

\begin{figure}
  \centering
  {\def\arraystretch{2.5}\tabcolsep=0ex
  \begin{tabular}{cr}
    $\Gamma, x:A, \Gamma' \given x:A$
      & (variable) \\
    $\vcenter{\prftree{\Gamma \given t: A}{\Gamma \given f(t): B}}$
      & (function symbol $f: A \to B$) \\
    $\vcenter{\prftree{\Gamma \given t:A}{\Gamma \given s:B}
                      {\Gamma \given \pair{t}{s}: A \times B}}$
      & (pair) \\
    $\vcenter{\prftree{\Gamma \given t: A \times B}
                      {\Gamma \given \pi_1(t): A}}
     \qquad
     \vcenter{\prftree{\Gamma \given t: A \times B}
                       {\Gamma \given \pi_2(t): B}}$
      & (projection) \\
    $\Gamma \given *: 1$
      & (singleton) \\
    $\vcenter{\prftree{\Gamma \given t:A+B}
                      {\Gamma, x:A \given r:C}{\Gamma, y:B \given s:C}
                      {\Gamma \given \copair{t}{x:A.r}{y:B.s}: C}}$
      & (case) \\
    $\vcenter{\prftree{\Gamma \given t: A}
                      {\Gamma \given \iota_1(t): A+B}}
     \qquad
     \vcenter{\prftree{\Gamma \given t: B}
                       {\Gamma \given \iota_2(t): A+B}}$
      & (inclusion)
  \end{tabular}}
  \caption{Formation rules for terms of coherent logic}
  \label{fig:coherent-formation-terms}
\end{figure}

\begin{figure}
  \centering
  {\def\arraystretch{2.5}\tabcolsep=4ex
  \begin{tabular}{cr}
    $\vcenter{\prftree{\Gamma \given t: A}{\Gamma \given s:B}
                      {\Gamma \given R(t,s)}}$
      & (relation symbol $R: (A,B)$) \\
    $\vcenter{\prftree{\Gamma \given t:A}{\Gamma \given s:A}
                      {\Gamma \given t=s}}$
      & (equality) \\
    $\vcenter{\prftree{\Gamma \given \varphi}{\Gamma, x:A \given \varphi}}$
      & (weakening) \\
    $\Gamma \given \top$
      & (truth) \\
    $\Gamma \given \bot$
      & (falsity) \\
    $\vcenter{\prftree{\Gamma \given \varphi}{\Gamma \given \psi}
                      {\Gamma \given \varphi \wedge \psi}}$
      & (conjunction) \\
    $\vcenter{\prftree{\Gamma \given \varphi}{\Gamma \given \psi}
                      {\Gamma \given \varphi \vee \psi}}$
      & (disjunction) \\
    $\vcenter{\prftree{\Gamma, x:A \given \varphi}
                      {\Gamma \given \exists x:A.\varphi}}$
      & (existential quantifier)
  \end{tabular}}
  \caption{Formation rules for formulas of coherent logic}
  \label{fig:coherent-formation-formulas}
\end{figure}

A \emph{sequent} is an expression $\Gamma \given \varphi \vdash \psi$, where
$\Gamma \given \varphi$ and $\Gamma \given \psi$ are both formulas in context.
The sequent $\Gamma \vdash \varphi$ is shorthand for $\Gamma \given \top \vdash
\varphi$. The inference rules for sequents are listed in
\cref{fig:coherent-inference}. As before, we omit the context when it is the
same in the premises and conclusion of a rule.

\begin{remark}
  We comment on the inference rules that differ from both first-order logic and
  equational type theory. Most importantly, product types are treated exactly as
  in type theory but sum types have a stronger axiomatization.
  \begin{itemize}
    \item \emph{Distributivity}: Like the Frobenius rule, the distributivity
    axiom linking conjunction and disjunction is superfluous in full first-order
    logic but is \emph{not} deducible from the other rules of coherent logic
    \cite[p.\ 831]{johnstone2002}. The converse of the distributivity axiom
    \emph{is} deducible.
  
    \item \emph{Case}: Our case rules cannot be expressed in simple type theory.
    Instead, type theory postulates
    \begin{equation*}
      \vdash \copair{t}{x:A.s[\iota_1(x)/z]}{y:B.s[\iota_2(y)/z]} = s[t/z].
    \end{equation*}
    Given the other axioms, this rule can be deduced from our first case rule
    but not conversely.
    
    \item \emph{Empty}: In simple type theory, the empty type is axiomatized by
    an \emph{empty function} $\bot_A: A$ obeying the formation and inference
    rules
    \begin{equation*}
      \prftree{t:0}{\bot_A(t): A}
      \qquad\qquad
      \prftree{t:0}{s:A}{\vdash \bot_A(t) = s}.
    \end{equation*}
    In particular, one can derive $\Gamma, x:0 \given r=s$ for any terms $r,s$.
    We strengthen that result to $\Gamma, x:0 \given \bot$ and discard the empty
    function $\bot_A$.
  \end{itemize}
\end{remark}

\begin{figure}
  \centering
  {\def\arraystretch{3}\tabcolsep=0ex
  \begin{tabular}{cr}
    $\varphi \vdash \varphi$
      & (identity) \\
    $\vcenter{\prftree{\varphi \vdash \chi}{\chi \vdash \psi}
                      {\varphi \vdash \psi}}$
      & (cut) \\
    $\vcenter{\prftree
      {\tvec{x}:\tvec{A} \given \varphi \vdash \psi}
      {\Gamma \given \tvec{t}: \tvec{A}}
      {\Gamma \given \varphi[\tvec{t}/\tvec{x}] \vdash \psi[\tvec{t}/\tvec{x}]}}$
      & (substitution) \\
    $\vdash x=x
     \qquad
     (\tvec{x}=\tvec{y}) \wedge \varphi \vdash \varphi[\tvec{y}/\tvec{x}]
     \qquad
     \tvec{x}=\tvec{y} \vdash t = t[\tvec{y}/\tvec{x}]$
      & (equality) \\
    $\vdash \pi_1(\pair{t}{s}) = t
     \qquad
     \vdash \pi_2(\pair{t}{s}) = s$
      & (projection) \\
    $\vdash \pair{\pi_1(t)}{\pi_2(t)} = t$
      & (pair) \\
    $\vcenter{\prftree[noline]
      {\vdash \copair{\iota_1(t)}{x:A.r}{y:B.s} = r[t/x]}
      {\vdash \copair{\iota_2(t)}{x:A.r}{y:B.s} = s[t/y]}}$
      & (inclusion) \\
    $\vcenter{\prftree[noline]
      {\Gamma \given \vdash
       (\exists x:A. \iota_1(x) = t) \vee (\exists x:B. \iota_2(x) = t)}
      {\Gamma \given
       (\exists x:A. \iota_1(x) = t) \wedge (\exists x:B. \iota_2(x) = t)
       \vdash \bot}}
      \qquad [x \notin \Gamma]$
      & (case) \\
    $\vcenter{\prftree{\Gamma \given t:1}{\Gamma \given \vdash t = *}}
     \qquad
     \Gamma, x:0 \given \vdash \bot$
      & (singleton \& empty) \\
    $\varphi \vdash \top
     \qquad
     \bot \vdash \varphi$
      & (truth \& falsity) \\
    $\vcenter{\prftree{\varphi \vdash \psi}{\varphi \vdash \chi}
                      {\varphi \vdash \psi \wedge \chi}}
     \qquad
     \varphi \wedge \psi \vdash \varphi
     \qquad
     \varphi \wedge \psi \vdash \psi$
      & (conjunction) \\
    $\vcenter{\prftree{\varphi \vdash \chi}{\psi \vdash \chi}
                      {\varphi \vee \psi \vdash \chi}}
     \qquad
     \varphi \vdash \varphi \vee \psi
     \qquad
     \psi \vdash \varphi \vee \psi$
      & (disjunction) \\
    $\varphi \wedge (\psi \vee \chi)
       \vdash (\varphi \wedge \psi) \vee (\varphi \wedge \chi)$
      & (distributivity) \\
    $\vcenter{\prftree[double]{\Gamma, x:A \given \varphi \vdash \psi}
                      {\Gamma \given (\exists x:A.\varphi) \vdash \psi}}$
      & (existential quantifier) \\
    $\Gamma \given \varphi \wedge (\exists x:A.\psi)
       \vdash \exists x:A.(\varphi \wedge \psi)
       \qquad [x \notin \Gamma]$
      & (Frobenius)
  \end{tabular}}
  \caption{Inference rules for coherent logic}
  \label{fig:coherent-inference}
\end{figure}

A \emph{coherent theory} is defined analogously to a regular theory.

\begin{definition}
  The \emph{classifying category} of a coherent theory $\theory{T}$, denoted
  $\Cl(\theory{T})$, is the distributive bicategory of relations whose objects
  are the types of $\theory{T}$ (basic or compound); whose morphisms $A \to B$
  are equivalence classes of formulas in context $[x:A,y:B \given \varphi]$,
  where equivalence is up to $\alpha$-equivalence and provable logical
  equivalence under $\theory{T}$; and whose 2-morphisms are the theorems of
  $\theory{T}$.

  The structures of a distributive bicategory of relations are defined as
  follows.
  \begin{itemize}
    \item \emph{Category}: Composition is defined by
    \begin{equation*}
      [x:A,y:B \given \varphi] \cdot [y:B,z:C \given \psi]
        := [x:A,z:C \given \exists y:B.(\varphi \wedge \psi)]
    \end{equation*}
    and the identity morphisms are $1_A := [x:A,x':A \given x=x']$.
    
    \item \emph{Tensor}: On objects, $A \otimes B := A \times B$ and $I := 1$;
    on morphisms,
    \begin{align*}
      [x_1:A_1,& y_1:B_1 \given \varphi] \otimes [x_2:A_2, y_2:B_2 \given \psi] \\
       :=&\ [x:A_1 \times A_2, y: B_1 \times B_2 \given \\
         &\quad \exists x_1:A_1.\exists y_1:B_1.
                   (\varphi \wedge \pi_1 x = x_1 \wedge \pi_1 y = y_1) \wedge \\
         &\quad \exists x_2:A_2.\exists y_2:B_2.
                   (\psi \wedge \pi_2 x = x_2 \wedge \pi_2 y = y_2)] \\
         =&\ [x:A_1 \times A_2, y: B_1 \times B_2 \given
          \varphi[\pi_1 x / x_1,\pi_1 y / y_1] \wedge
          \psi[\pi_2 x / x_2,\pi_2 y / y_2]].
    \end{align*}
    The braidings are
    \begin{equation*}
      \sigma_{A,B} := [x: A \times B, y: B \times A \given
        (\pi_1 x = \pi_2 y) \wedge (\pi_2 x = \pi_1 y)].
    \end{equation*}
    
    \item \emph{Cotensor}: On objects, $A \oplus B := A+B$ and $O := 0$;
    on morphisms,
    \begin{align*}
      [x_1:A_1,& y_1:B_1 \given \varphi] \oplus [x_2:A_2, y_2:B_2 \given \psi] \\
        :=&\ [x:A_1 + A_2, y: B_1 + B_2 \given \\
         &\qquad \exists x_1:A_1.\exists y_1:B_1.
                   (\varphi \wedge \iota_1 x_1 = x \wedge \iota_1 y_1 = y) \vee \\
         &\qquad \exists x_2:A_2.\exists y_2:B_2.
                   (\psi \wedge \iota_2 x_2 = x \wedge \iota_2 y_2 = y)].
    \end{align*}
    The braidings are
    \begin{align*}
      \bm{\sigma}_{A,B} &:= [x: A+B, y: B+A \given \\
        &\qquad \exists x':A. \exists y':B.(
          (\iota_1 x' = x \wedge \iota_2 x' = y) \vee
          (\iota_2 y' = x \wedge \iota_1 y' = y))].
    \end{align*}
    
    \item \emph{Diagonals and codiagonals}:
    \begin{align*}
      \Copy_A &:=
        [x:A, y:A \times A \given \pair{x}{x} = y] \\
      \Delete_A &:= [x:A, y:1 \given \top] \\
      \varMerge_A &:=
        [x:A+A, y:A \given \copair{x}{x_1:A.x_1}{x_2:A.x_2} = y] \\
      \varCreate_A &:= [x:0, y:A \given \bot].
    \end{align*}
  \end{itemize}
\end{definition}

\begin{lemma}
  The classifying category $\Cl(\theory{T})$ of a coherent theory $\theory{T}$
  is a distributive bicategory of relations.
\end{lemma}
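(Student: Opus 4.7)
The plan is to extend the proof of the corresponding lemma in \cref{app:regular} (that the classifying category of a regular theory is a bicategory of relations), since the coherent fragment contains the regular fragment. First I would verify that everything checked in the regular case goes through here: that $\Cl(\theory{T})$ is a locally posetal 2-category, that $(\Cl(\theory{T}), \otimes, I)$ is symmetric monoidal with diagonals $(\Copy_A,\Delete_A)$, that every morphism is a lax comonoid homomorphism, that the diagonals have right adjoints, and that $(\Copy_A,\Merge_A)$ obeys the Frobenius equations. The arguments are exactly those of the regular case, with the minor technical difference that objects are now types (not lists of types) and so the tensor product uses the product-type term constructors $\pair{\cdot}{\cdot}$ and $\pi_1,\pi_2$ explicitly rather than concatenation of contexts. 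The extra rules governing $\pair{\cdot}{\cdot}$ and $\pi_1,\pi_2$ supply exactly what is needed to rewrite the regular-logic calculations in this more structured setting.

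The new work is to verify the dual structure arising from sums. I would show that $(\Cl(\theory{T}),\oplus,O)$ is a symmetric monoidal category by a parallel but dualized calculation using the case-analysis term $\copair{t}{x.r}{y.s}$, the inclusions $\iota_1,\iota_2$, and the case rule (which plays for sums the role the pair-rule plays for products). The codiagonals $(\varMerge_A,\varCreate_A)$ as defined are easily seen to be cocommutative monoids, using the inclusion and case axioms. To check that every morphism $[x{:}A, y{:}B \given \varphi]$ is a lax monoid homomorphism, I would unfold both sides using the sum/case rules and apply the disjunction and existential rules; the key observation is that $(R \oplus R)\varMerge_B$ corresponds logically to the formula ``$x = \iota_1 x_1 \vee x = \iota_2 x_2$'' feeding into the relation, and this is entailed by the case rule together with substitution. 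I would then identify the right adjoints $\varCopy_A := \varMerge_A^\dagger$ and $\varDelete_A := \varCreate_A^\dagger$ by direct calculation; the unit and counit inequalities reduce to routine applications of equality and disjunction. The Frobenius equations for $(\varCopy_A,\varMerge_A)$ follow from the same kind of case-based unfolding.

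Finally, to show that the cotensor is actually the categorical coproduct, and hence a biproduct by compact closure, I would verify the biproduct equations from \cref{sec:Rel2}: $\iota_i \cdot \pi_j = \delta_{i,j}$. Composing the logical formulas defining $\iota_1 = 1_{A_1} \oplus \varCreate_{A_2}$ with $\pi_1 = 1_{A_1} \oplus \varDelete_{A_2}$ and simplifying via the inclusion rule gives $1_{A_1}$; the cross terms involve ``$\iota_1 x = \iota_2 y$'' which, by the second case rule, entails $\bot$, collapsing to the zero map $\varDelete_{A_1}\varCreate_{A_1}$. Equivalently, one can argue more abstractly that $\Cl(\theory{T})$ is a compact closed category (via the self-dual structure inherited from the dagger, as in \cref{sec:BiRel}) in which $\oplus$ admits natural codiagonals, so by \cite{jay1993,houston2008} or the argument sketched in \cref{sec:DistBiRel}, $\oplus$ is automatically a biproduct.

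The main obstacle will be the lax monoid homomorphism axiom, since it is the only place where the case rule is used nontrivially and care is needed to ensure that the axiom $\Gamma \vdash (\exists x{:}A.\iota_1 x = t) \vee (\exists x{:}B.\iota_2 x = t)$ is invoked correctly to discharge the disjunction on the output side. A secondary source of tedium, though not real difficulty, is that because objects are now arbitrary types rather than lists of types, all the monoidal manipulations previously carried out at the level of contexts must be rephrased using the pair, projection, inclusion, and case term constructors; this bookkeeping is where the product and sum type rules earn their keep, and one should check that the distributivity axiom is used exactly to verify the distributive law on morphisms. I would organize the writeup by stating each axiom to be verified, then giving the corresponding regular-logic derivation or its sum-type dual, and relegating the routine verifications to a ``tedious but mechanical'' remark as in the regular case.
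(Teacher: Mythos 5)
Your proposal is correct and takes essentially the same approach as the paper, which in fact declines to write the proof out, calling it ``monstrously long but similar in many respects'' to the completeness proof for the regular case and instead illustrating the new sum-type considerations with a single sample calculation (that local unions are given by logical disjunction). That sample calculation uses exactly the ingredients your plan identifies---unfolding $\varCopy_A$ and $\varMerge_B$ via the inclusion and case rules, killing cross terms like $\iota_1 \tilde y = \iota_2 y$ with the second case rule, and normalizing with the Frobenius and distributivity rules---so your axiom-by-axiom outline, reusing the regular-logic arguments for the tensor side and dualizing with the case/inclusion machinery for the cotensor side, is faithful to what the paper intends.
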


The proof of the lemma is monstrously long but similar in many respects to the
completeness proof in \cref{app:regular}. To exemplify the new considerations
posed by product and sum types, we prove a different, ``obvious'' fact about
local unions in the classifying category. We expect this will exhaust the
reader's appetite for such calculations. We then proceed directly to the
construction of the internal language of a distributive bicategory of relations.

\begin{proposition}
  Local unions in the classifying category of a coherent theory are given by
  logical disjunction:
  \begin{equation*}
    [x:A,y:B \given \varphi] \cup [x:A,y:B \given \psi]
      = [x:A,y:B \given \varphi \vee \psi].
  \end{equation*}
\end{proposition}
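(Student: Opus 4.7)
The plan is to unfold the definition $R \cup S := \varCopy_A (R \oplus S) \varMerge_B$ using the composition formula of $\Cl(\theory{T})$, substitute the explicit formulas for the three factors from the definition of the classifying category, and then simplify via distributivity and the inclusion rules to eliminate the intermediate $A+A$ and $B+B$ variables.

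Concretely, first I would write out $\varCopy_A = \varMerge_A^{\dagger}$ explicitly as $[x:A, w:A+A \given \copair{w}{w_1:A.w_1}{w_2:A.w_2} = x]$, and similarly $\varMerge_B = [v:B+B, y:B \given \copair{v}{v_1:B.v_1}{v_2:B.v_2} = y]$. Then, by the definition of composition in $\Cl(\theory{T})$, the composite $\varCopy_A (R \oplus S) \varMerge_B$ is
\begin{equation*}
  [x:A, y:B \given \exists w:A{+}A.\, \exists v:B{+}B.\, (\varCopy_A(x,w) \wedge (R \oplus S)(w,v) \wedge \varMerge_B(v,y))].
\end{equation*}
Substituting the explicit formulas (including the disjunctive definition of $R \oplus S$ from the classifying category) gives, after pushing the two outer existentials and the two surrounding conjuncts through the disjunction by the distributivity and Frobenius rules, a disjunction of two formulas of the shape
\begin{equation*}
  \exists w, v, x_i, y_i.\,(\varphi \wedge \iota_i x_i = w \wedge \iota_i y_i = v \wedge \copair{w}{\cdots} = x \wedge \copair{v}{\cdots} = y),
\end{equation*}
one for $i=1$ and one for $i=2$.

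Within each disjunct I would use the equality rule to substitute $\iota_i x_i$ for $w$ and $\iota_i y_i$ for $v$, discharging those existentials, and then apply the inclusion axiom $\vdash \copair{\iota_i t}{x_1.r_1}{x_2.r_2} = r_i[t/x_i]$ to collapse the two $\copair$-equalities into $x_i = x$ and $y_i = y$. A final use of equality, together with the $\exists$-introduction/elimination pair, eliminates $x_i$ and $y_i$, leaving $\varphi$ in the first disjunct and $\psi$ in the second. The symmetric calculation in the reverse direction --- given $\varphi \vee \psi$, choose $w := \iota_i x$ and $v := \iota_i y$ in whichever disjunct holds --- establishes logical equivalence under the axioms of $\theory{T}$, hence equality of the corresponding equivalence classes in $\Cl(\theory{T})$.

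The main obstacle is purely bookkeeping: the straightforward substitution produces a deeply nested formula involving five existentials, three conjunctions, an outer disjunction, and two $\copair$-terms, and one must carefully apply the right sequence of distributivity, Frobenius, and equality rules to land in the normal form $\varphi \vee \psi$. Apart from this administrative work, the key conceptual input is just the inclusion rule for case terms, which ensures that post-composing $\varCopy_A$ with the $\iota_1$-disjunct of $R \oplus S$ selects $R$, and with the $\iota_2$-disjunct selects $S$ --- exactly the categorical content of local union as logical disjunction.
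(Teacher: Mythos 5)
Your proposal is correct and follows essentially the same route as the paper's proof: unfold $\varCopy_A([\varphi]\oplus[\psi])\varMerge_B$ via the composition formula of $\Cl(\theory{T})$, normalize with the Frobenius and distributivity rules, and collapse the sum-type bookkeeping with the inclusion/case rules to land on $\varphi\vee\psi$. The only difference is cosmetic: the paper first rewrites $\varCopy_A$ and $\varMerge_B$ in the form $(\iota_1 x = y)\vee(\iota_2 x = y)$ using the first case rule, which produces a four-term disjunctive normal form whose cross terms must then be killed by the second case rule (disjointness of the inclusions) and whose diagonal terms need injectivity of the inclusions, whereas your copair formulation yields only two disjuncts and evaluates the copairs directly by the inclusion rules---a slightly leaner calculation of the same substance.
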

\begin{proof}
  In an abuse of notation, we write $[\varphi] := [x:A,y:B \given \varphi]$ and
  $[\psi] := [x:A,y:B \given \psi]$. By definition,
  \begin{equation*}
    [\varphi] \cup [\psi] = \varCopy_A ([\varphi] \oplus [\psi]) \varMerge_B.
  \end{equation*}
  First, use the inclusion rules and the first case rule to show that
  \begin{gather*}
    \varCopy_A = [x:A, y:A+A \given (\iota_1 x = y) \vee (\iota_2 x = y)] \\
    \varMerge_B = [x:B+B, y:B \given (x = \iota_1 y) \vee (x = \iota_2 y)].
  \end{gather*}
  Therefore,
  \begin{align*}
    ([\varphi] \oplus [\psi]) \varMerge_B
      = [&x':A+A, y:B \given \exists y':B+B.( \\
        & ((\exists x:A. \exists \tilde y:B.(
            \varphi[\tilde y/y] \wedge
            \iota_1 x = x' \wedge \iota_1 \tilde y = y')) \vee \\
        & \phantom{(}(\exists x:A. \exists \tilde y:B.(
            \psi[\tilde y/y] \wedge
            \iota_2 x = x' \wedge \iota_2 \tilde y = y'))) \\
        & \wedge (y' = \iota_1 y \vee y' = \iota_2 y) )].
  \end{align*}
  Use the Frobenius rule and the distributivity of existential quantifiers over
  disjunctions to put the formula into prenex normal form, then distribute the
  conjunction over the disjunctions to put the body into disjunctive normal
  form:
  \begin{align*}
    ([\varphi] \oplus [\psi]) \varMerge_B
      = [&x':A+A, y:B \given \exists x:A. \exists \tilde y:B. \exists y':B+B.( \\
        & (\varphi[\tilde y/y] \wedge \iota_1 x = x' \wedge
           \iota_1 \tilde y = y' \wedge \iota_1 y = y') \vee \\
        & (\varphi[\tilde y/y] \wedge \iota_1 x = x' \wedge
           \iota_1 \tilde y = y' \wedge \iota_2 y = y') \vee \\
        & (\psi[\tilde y/y] \wedge \iota_2 x = x' \wedge
           \iota_2 \tilde y = y' \wedge \iota_1 y = y') \vee \\
        & (\psi[\tilde y/y] \wedge \iota_2 x = x' \wedge
           \iota_2 \tilde y = y' \wedge \iota_2 y = y') )].
  \end{align*}
  The first and last disjuncts are handled by the injectivity of the inclusions
  (deducible from the inclusion rules), e.g.,
  \begin{equation*}
    \iota_1 \tilde y = y' \wedge \iota_1 y = y'
      \vdash \iota_1 \tilde y = \iota_1 y
      \vdash \tilde y = y.
  \end{equation*}
  Eliminate the two cross terms using the second case rule, e.g.,
  \begin{equation*}
    \iota_1 \tilde y = y' \wedge \iota_2 y = y'
      \vdash \iota_1 \tilde y = \iota_2 y
      \vdash \bot.
  \end{equation*}
  Upon simplification the result is
  \begin{equation*}
    ([\varphi] \oplus [\psi]) \varMerge_B
      = [x':A+A, y:B \given \exists x:A.(
          (\varphi \wedge \iota_1 x = x') \vee
          (\psi \wedge \iota_2 x = x'))].
  \end{equation*}
  The second half of the calculation is very similar and yields
  \begin{equation*}
    [\varphi] \cup [\psi]
      = \varCopy_A ([\varphi] \oplus [\psi]) \varMerge_B
      = [x:A,y:B \given \varphi \vee \psi]. \qedhere
  \end{equation*}
\end{proof}

\begin{definition}
  An \emph{interpretation} or \emph{model} of a distributive signature in a
  distributive bicategory of relations $\cat{B}$ is specified by
  \begin{itemize}
    \item for every basic type $C$, an object $\sem{C}$ of $\cat{B}$;
    \item for every function symbol $f:A \to B$, a morphism $\sem{f}: \sem{A}
    \to \sem{B}$ of $\Map(\cat{B})$;
    \item for every relation symbol $R: (A,B)$, a morphism $\sem{R}: \sem{A} \to
    \sem{B}$ of $\cat{B}$.
  \end{itemize}
  The extension of the interpretation to any type of the signature is implicit
  in the definition:
  \begin{equation*}
    \sem{A \times B} = \sem{A} \otimes \sem{B}, \quad
    \sem{1} = I, \quad
    \sem{A+B} = \sem{A} \oplus \sem{B}, \quad
    \sem{0} = O.
  \end{equation*}
  We also maintain the convention that $\sem{\tvec{x}:\tvec{A}} :=
  \sem{\tvec{A}} := \sem{A_1} \otimes \cdots \otimes \sem{A_n}$.
  
  An interpretation of a distributive signature extends to the full coherent
  language in that signature by induction on the term and formula formation
  rules. Each term in context $\Gamma \given t:A$ is interpreted as a morphism
  $\sem{\Gamma \given t}: \sem{\Gamma} \to \sem{A}$ of $\Map(\cat{B})$ as
  follows.
  \begin{itemize}
    \item \emph{Variable}: $\sem{\Gamma,x:A,\Gamma' \given x:A} := 
      \Delete_{\sem{\Gamma}} \otimes 1_{\sem{A}} \otimes \Delete_{\sem{\Gamma'}}$.
      
    \item \emph{Function symbol}: Given a function symbol $f: A \to B$ and a map
    $\sem{\Gamma \given t: A}$,
    \begin{equation*}
      \sem{\Gamma \given f(t): B} := \sem{\Gamma \given t: A} \cdot \sem{f}.
    \end{equation*}
    
    \item \emph{Pair}: Given maps $\sem{\Gamma \given t:A}$ and
    $\sem{\Gamma \given s:B}$,
    \begin{equation*}
      \sem{\Gamma \given \pair{t}{s}: A \times B} :=
        \Copy_{\sem{\Gamma}}
        (\sem{\Gamma \given t:A} \otimes \sem{\Gamma \given s:B}).
    \end{equation*}
        
    \item \emph{Projection}: Given a map $\sem{\Gamma \given t: A \times B}$,
    \begin{align*}
      \sem{\Gamma \given \pi_1(t): A}
        &:= \sem{\Gamma \given t: A \times B} 
            (1_{\sem{A}} \otimes \Delete_{\sem{B}}) \\
      \sem{\Gamma \given \pi_2(t): B}
        &:= \sem{\Gamma \given t: A \times B}
            (\Delete_{\sem{A}} \otimes 1_{\sem{B}}).
    \end{align*}
    
    \item \emph{Singleton}: $\sem{\Gamma \given *:I} := \Delete_{\sem{\Gamma}}$.
    
    \item \emph{Case}: Given maps $\sem{\Gamma \given t: A+B}$,
    $\sem{\Gamma, x:A \given r:C}$, and $\sem{\Gamma, y:B \given s:C}$, define
    $\sem{\Gamma \given \copair{t}{x:A.r}{y:B.s}: C}$ to be
    \begin{equation*}
      \Copy_{\sem{\Gamma}} \cdot
      (1_{\sem{\Gamma}} \otimes \sem{\Gamma \given t: A+B}) \cdot
      d_{\sem{\Gamma},\sem{A},\sem{B}} \cdot
      (\sem{\Gamma, x:A \given r} \oplus \sem{\Gamma, y:B \given s}) \cdot
      \varMerge_{\sem{C}},
    \end{equation*}
    where $d_{X,Y,Z}: X \otimes (Y \oplus Z) \to (X \otimes Y) \oplus (X \otimes
    Z)$ is the family of distributivity isomorphisms in $\cat{B}$.
    
    \item \emph{Inclusion}: Given maps $\sem{\Gamma \given t:A}$ and
    $\sem{\Gamma \given s:B}$,
    \begin{align*}
      \sem{\Gamma \given \iota_1(t): A+B}
        &:= \sem{\Gamma \given t:A} \oplus \varCreate_B \\
      \sem{\Gamma \given \iota_2(s): A+B}
        &:= \varCreate_A \oplus \sem{\Gamma \given s:B}.
    \end{align*}
  \end{itemize}
  
  Each formula in context $\Gamma;\Gamma' \given \varphi$ is interpreted as a
  morphism $\sem{\Gamma;\Gamma' \given \varphi}: \sem{\Gamma} \to
  \sem{\Gamma'}$ of $\cat{B}$ as follows.
  \begin{itemize}
    \item \emph{Relation symbol}: Given a relation symbol $R: (A,B)$ and two
    maps $\sem{\Gamma \given t:A}$ and $\sem{\Gamma \given s:B}$, define
    \begin{equation*}
      \sem{\Gamma \given R(t,s)} :=
        \Copy_{\sem{\Gamma}}
        (\sem{\Gamma \given t: A} \sem{R} \otimes \sem{\Gamma \given s: B})
        \epsilon_{\sem{B}}.
    \end{equation*}
    where, as usual, $\epsilon_{\sem{B}} = \Merge_{\sem{B}} \Delete_{\sem{B}}$.
    
    \item \emph{Equality}: Given maps $\sem{\Gamma \given t:A}$ and
    $\sem{\Gamma \given s:A}$,
    \begin{equation*}
      \sem{\Gamma \given t=s} :=
        \Copy_{\sem{\Gamma}}
        (\sem{\Gamma \given t:A} \otimes \sem{\Gamma \given s:A})
        \epsilon_{\sem{A}}
    \end{equation*}
    
    \item \emph{Falsity}: $\sem{\Gamma;\Gamma' \given \bot} :=
      \bot_{\sem{\Gamma},\sem{\Gamma'}} =
      \varDelete_{\sem{\Gamma}} \varCreate_{\sem{\Gamma'}}$.
    
    \item \emph{Disjunction}: Given morphisms
    $\sem{\Gamma;\Gamma' \given \varphi}: \sem{\Gamma} \to \sem{\Gamma'}$ and
    $\sem{\Gamma;\Gamma' \given \psi}: \sem{\Gamma} \to \sem{\Gamma'}$,
    \begin{equation*}
      \sem{\Gamma;\Gamma' \given \varphi \vee \psi}
        := \varCopy_{\sem{\Gamma}} \cdot
           (\sem{\Gamma;\Gamma' \given \varphi} \oplus 
            \sem{\Gamma;\Gamma' \given \psi}) \cdot
           \varMerge_{\sem{\Gamma'}}.
    \end{equation*}
  \end{itemize}
  The other formation rules---weakening, truth, conjunction, and
  existential quantifier---are interpreted exactly as in \cref{app:regular}.
\end{definition}

An \emph{interpretation} or \emph{model} of a coherent theory in a distributive
bicategory of relations is defined analogously to an interpretation of a regular
theory in a bicategory of relations. There is also a corresponding soundness
theorem.

\begin{lemma}
  Let $\sem{\cdot}$ be an interpretation of a coherent theory $\theory{T}$ in a
  distributive bicategory of relations $\cat{B}$. For every theorem
  \begin{equation*}
    \Gamma \given \varphi \vdash_{\theory{T}} \psi
    \quad\text{of}\quad \theory{T},
  \end{equation*}
  there is a 2-morphism
  \begin{equation*}
    \sem{\Gamma \given \varphi} \implies \sem{\Gamma \given \psi}
    \quad\text{in}\quad \cat{B}.
  \end{equation*}
\end{lemma}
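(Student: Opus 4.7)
The plan is to proceed by induction on the derivation of $\Gamma \given \varphi \vdash_{\theory{T}} \psi$, mirroring the soundness proof for regular logic in \cref{app:regular}. The base case (an axiom of $\theory{T}$) holds by the definition of a model. For the inductive step I must verify that every inference rule in Figure \ref{fig:coherent-inference} is sound. The rules shared with regular logic---identity, cut, substitution, weakening, truth, conjunction, existential quantifier, and Frobenius---transfer essentially verbatim from \cref{app:regular}, with the only novelty being that morphisms now live in a distributive bicategory of relations.

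The first genuinely new ingredient is the equational theory of \emph{terms} (projection, pair, inclusion, case, singleton, and empty rules). I would first prove, as a preliminary sub-lemma by induction on term formation, that $\sem{\Gamma \given t:A}$ is a well-defined map (not merely a morphism) and that the term equations of Figure~\ref{fig:coherent-inference} yield honest equalities of maps in $\cat{B}$. For example, $\vdash \pair{\pi_1(t)}{\pi_2(t)} = t$ follows because every map $\sem{t}: \sem{\Gamma} \to \sem{A} \otimes \sem{B}$ is a comonoid homomorphism, so $\sem{t} \cdot \Copy = \Copy \cdot (\sem{t} \otimes \sem{t})$; composing with $(1 \otimes \Delete) \otimes (\Delete \otimes 1)$ recovers $\sem{t}$. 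Dually, $\vdash \copair{\iota_1(t)}{x.r}{y.s} = r[t/x]$ uses naturality of the codiagonal on maps together with the biproduct equation $\iota_1 \cdot \pi_1 = 1$. The singleton and empty rules reduce to the uniqueness of maps into $I$ and out of $O$ respectively.

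Next come the new \emph{formula}-level rules: falsity ($\bot \vdash \varphi$), disjunction introduction and elimination, the formula-level case rules, and distributivity. Falsity is immediate because $\sem{\bot} = \varDelete_{\sem{\Gamma}} \varCreate_{\sem{\Gamma'}}$ is the bottom element of $\cat{B}(\sem{\Gamma}, \sem{\Gamma'})$, which follows from every morphism being a lax monoid homomorphism. Disjunction introduction, $\varphi \vdash \varphi \vee \psi$, unfolds by definition to the inequality
\begin{equation*}
  \sem{\varphi} \implies \varCopy_{\sem{\Gamma}} (\sem{\varphi} \oplus \sem{\psi}) \varMerge_{\sem{\Gamma'}},
\end{equation*}
which is obtained by applying the lax monoid homomorphism inequality to $\sem{\varphi}$ and the trivial inequality $\bot \Rightarrow \sem{\psi}$. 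Disjunction elimination is its 2-categorical dual. The second case rule---that $\iota_1$ and $\iota_2$ are disjoint---is soundness of the biproduct identity $\iota_1 \pi_2 = 0$, pulled through $\epsilon$.

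The hard part will be the distributivity axiom $\varphi \wedge (\psi \vee \chi) \vdash (\varphi \wedge \psi) \vee (\varphi \wedge \chi)$. Unfolding, the left-hand side is built from $\Copy$ and $\Merge$ around $\sem{\varphi} \otimes \bigl(\varCopy \cdot (\sem{\psi} \oplus \sem{\chi}) \cdot \varMerge\bigr)$, while the right-hand side is built from $\varCopy$ and $\varMerge$ around $(\sem{\varphi \wedge \psi}) \oplus (\sem{\varphi \wedge \chi})$. Turning one into the other requires sliding $\otimes$ past $\oplus$ using the distributivity isomorphism $d_{X,Y,Z}$, which exists precisely because $\cat{B}$ is \emph{distributive}; it is here that the hypothesis beyond a union bicategory of relations is essential. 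I would carry out this reduction as a string-diagram calculation, using naturality of $d$ together with the interaction between the two Frobenius structures (both diagonals being lax comonoid homomorphisms, both codiagonals being natural with respect to maps). Once distributivity is settled, the remaining verifications are routine, and the inductive step is complete.
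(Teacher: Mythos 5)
Your proposal matches the paper's proof in essentially every respect: induction on derivations, reuse of the regular-logic soundness argument for the shared rules, verification of the term equations using the fact that terms denote maps (comonoid homomorphisms), duality for falsity and disjunction, the coproduct's universal property for the case rules, terminality of $I$ and initiality of $O$ for singleton and empty, and an appeal to the canonical distributivity isomorphism for the distributivity axiom (which the paper likewise leaves as an omitted calculation). Your preliminary sub-lemma that $\sem{\Gamma \given t:A}$ is a well-defined map is a point the paper leaves implicit in its definition of interpretation, so making it explicit is a welcome refinement rather than a divergence.
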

\begin{proof}
  As before, we must show that every inference rule of coherent logic
  (\cref{fig:coherent-inference}) is valid in $\cat{B}$. We sketch the proofs
  for the rules that have not already been treated in \cref{app:regular}.
  \begin{itemize}
    \item \emph{Projection}: The first projection axiom
    $\vdash \pi_1\pair{t}{s} = t$ holds because
    \begin{equation*}
      \Copy_X (f \otimes g) \pi_{Y,Z}
        = \Copy_X (f \otimes g) (1_Y \otimes \Delete_Z)
        = \Copy_X (f \otimes g \Delete_Z)
        = \Copy_X (f \otimes \Delete_X)
        = f
    \end{equation*}
    whenever $f: X \to Y$ and $g: X \to Z$ are maps. The second projection axiom
    $\vdash \pi_2\pair{t}{s} = s$ is proved similarly.
    
    \item \emph{Pair}: The pair axiom 
    $\vdash \pair{\pi_1(t)}{\pi_2(t)} = t$ amounts to the equality
    \begin{align*}
      \Copy_{X \otimes Y} (\pi_{X,Y} \otimes \pi_{X,Y}')
        &= (\Copy_X \otimes \Copy_Y) (1_X \otimes \sigma_{X,Y} \otimes 1_Y)
           (1_X \otimes \Delete_Y \otimes \Delete_X \otimes 1_Y) \\
        &= (\Copy_X \otimes \Copy_Y)
           (1_X \otimes \sigma_{X,Y} \Delete_{Y \otimes X} \otimes 1_Y) \\
        &= (\Copy_X \otimes \Copy_Y)
           (1_X \otimes \Delete_X \otimes \Delete_Y \otimes 1_Y) \\
        &= 1_X \otimes 1_Y = 1_{X \otimes Y}.
    \end{align*}
    
    \item \emph{Inclusion}: The two inclusion axioms are dual to the two
    projection axioms, e.g., $\iota_{X,Y} (f \oplus g) \varMerge_Z = f$ for any
    two morphisms $f: X \to Z$ and $g: Y \to Z$ (which need not be maps).
    
    \item \emph{Case}: The first case axiom
    $\vdash (\exists x:A. \iota_1 x = t) \vee (\exists x:B.\iota_2 x = t)$ 
    amounts to
    \begin{equation*}
      \Create_{X \oplus Y} =
        \varCopy_I
        (\Create_X \iota_{X,Y} \oplus \Create_Y \iota_{X,Y}')
        \varMerge_{X \oplus Y}.
    \end{equation*}
    By the dual of the pair axiom, $(\iota_{X,Y} \oplus \iota_{X,Y}')
    \varMerge_{X \oplus Y} = 1_{X \oplus Y}$, it suffices to show that
    $\Create_{X \oplus Y} = \varCopy_I (\Create_X \oplus \Create_Y)$, or 
    equivalently
    \begin{equation*}
      \Delete_{X \oplus Y}
        = (\Delete_X \oplus \Delete_Y) \varMerge_I
        =: [\Delete_X,\Delete_Y]
    \end{equation*}
    Because the inclusions are maps, we have $\iota_{X,Y} \Delete_{X \oplus Y} =
    \Delete_X$ and $\iota_{X,Y}' \Delete_{X \oplus Y} = \Delete_Y$, and hence
    the last equation holds by the universal property of the coproduct. We omit
    the proof of the second case axiom.
    
    \item \emph{Singleton}: The unit $I$ of the tensor is terminal in
    $\Map(\cat{B})$.
    
    \item \emph{Empty}: For any objects $X,Y$ of $\cat{B}$, there is at most one
    morphism $X \otimes O \to Y$, which must therefore be $\bot_{X \otimes
    O,Y}$. A similar result holds, with essentially the same proof, in a
    distributive category; we refer to \cite[Proposition 3.2]{carboni1993}.
    
    \item \emph{Falsity}: Dual to truth.
    
    \item \emph{Disjunction}: Dual to conjunction.
    
    \item \emph{Distributivity}: As mentioned in \cref{sec:DistBiRel}, the
    distributive law
    \begin{equation*}
      R \cap (S \cup T) = (R \cap S) \cup (R \cap T)
    \end{equation*}
    holds in any distributive bicategory of relations. The proof, which we omit,
    is a calculation involving the canonical distributivity isomorphism.
    \qedhere
  \end{itemize}
\end{proof}

\begin{definition}
  The \emph{internal language} of a small distributive bicategory of relations
  $\cat{B}$ is the coherent theory $\Lang(\cat{B})$ defined as follows. Its
  signature consists of
  \begin{itemize}
    \item for every object $A$ of $\cat{B}$, a basic type $A$;
    \item for every pair of types $(A,B)$ and every morphism $f: \sem{A} \to
    \sem{B}$ of $\Map(\cat{B})$, a function symbol $f: A \to B$; and
    \item for every pair of types $(A,B)$ and every morphism $R: \sem{A} \to
    \sem{B}$ of $\cat{B}$, a relation symbol $R:(A,B)$.
  \end{itemize}
  A sequent $\Gamma \given \varphi \vdash \psi$ is an axiom of $\Lang(\cat{B})$
  if and only if $\sem{\Gamma \given \varphi} \Rightarrow \sem{\Gamma \given
  \psi}$ in $\cat{B}$. Throughout the definition, $\sem{\cdot}$ is the obvious
  interpretation of the signature of $\Lang(\cat{B})$ in the category $\cat{B}$.
\end{definition}

\begin{remark}
  The expressivity of the internal language is not affected by including
  function symbols for the maps because every map is also associated with a
  relation symbol.

  As in \cref{app:regular}, a single morphism of $\cat{B}$ can give rise to many
  function and relation symbols of $\Lang(\cat{B})$. Moreover, despite replacing
  lists of types by compound types, it remains the case that $\Lang(\cat{B})$
  has ``too many'' types. For example, if $A = A_1 \otimes A_2$ is a product in
  $\cat{B}$, then both $A$ and $A_1 \times A_2$ are types in $\Lang(\cat{B})$.
  Although the types are isomorphic, they are formally distinct. This
  discrepancy explains why the equivalence of categories in the theorem below is
  not an isomorphism of categories.
\end{remark}

\begin{theorem}
  For every small distributive bicategory of relations $\cat{B}$, there is an
  equivalence of categories
  \begin{equation*}
    \Cl(\Lang(\cat{B})) \simeq \cat{B}
    \qquad\text{in}\qquad \DistBiRel.
  \end{equation*}
\end{theorem}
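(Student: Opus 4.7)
The plan is to mirror the proof of the analogous theorem for regular logic in \cref{app:regular}, constructing a structure-preserving functor $F: \Cl(\Lang(\cat{B})) \to \cat{B}$ that is fully faithful and essentially surjective on objects. On objects, set $F(A) := \sem{A}$, where $A$ is any type (basic or compound) of $\Lang(\cat{B})$ and $\sem{\cdot}$ is the canonical interpretation of the signature of $\Lang(\cat{B})$ in $\cat{B}$. On morphisms, set $F([x{:}A, y{:}B \given \varphi]) := \sem{x{:}A, y{:}B \given \varphi}$, using the inductive interpretation of formulas as morphisms defined in this appendix.

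The fundamental equivalence
\begin{equation*}
  [x{:}A, y{:}B \given \varphi] \Rightarrow [x{:}A, y{:}B \given \psi]
    \ \text{in}\ \Cl(\Lang(\cat{B}))
  \quad\text{iff}\quad
  \sem{\varphi} \Rightarrow \sem{\psi}\ \text{in}\ \cat{B}
\end{equation*}
follows immediately from the construction of $\Cl$ and $\Lang$ combined with the soundness lemma just proved. This shows at once that $F$ is well-defined, preserves 2-morphisms, and is faithful. Essential surjectivity on objects is automatic: every object $A$ of $\cat{B}$ is a basic type of $\Lang(\cat{B})$, so $F(A) = A$ on the nose. Fullness is nearly as immediate: given a morphism $R: A \to B$ in $\cat{B}$, the relation symbol $R:(A,B)$ of $\Lang(\cat{B})$ yields the formula $[x{:}A, y{:}B \given R(x,y)]$, and a short diagrammatic calculation using the zig-zag identities shows $F([x{:}A, y{:}B \given R(x,y)]) = R$.

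The main work is to verify that $F$ preserves every operation of a distributive bicategory of relations. For composition, identities, tensor, braidings, and diagonals, the verifications are essentially those of \cref{app:regular} and are read off the definition of $\Cl(\Lang(\cat{B}))$. What is new is the preservation of the cotensor structure---biproduct, cotensor unit and braidings, codiagonals---together with the canonical distributivity morphisms $d_{X,Y,Z}: X \otimes (Y \oplus Z) \to (X \otimes Y) \oplus (X \otimes Z)$ that appear in the interpretation of case terms. Most of these checks are formally dual to the tensor case, but now require unfolding \textbf{copair}, inclusion, and case expressions and simplifying via the biproduct axioms.

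The principal obstacle is the combinatorial bulk of these structure-preservation checks rather than any single conceptual difficulty. In particular, verifying that $F$ preserves the cotensor of two morphisms requires computing $\sem{\cdot}$ on a disjunction of existential statements involving inclusions, then simplifying to $\varCopy (\sem{\varphi} \oplus \sem{\psi}) \varMerge$. The technique is exactly that used in the proof that local unions are given by logical disjunction: put the formula in prenex normal form via Frobenius, distribute to disjunctive normal form, eliminate the cross terms using the second case axiom, and contract the surviving diagonal terms by injectivity of the inclusions. Once this template is in hand the remaining checks are mechanical; and as in \cref{app:regular}, one can exhibit an inverse functor $G: \cat{B} \to \Cl(\Lang(\cat{B}))$ explicitly, with $G(A) := A$ on objects and $G(R) := [x{:}A, y{:}B \given R(x,y)]$ on morphisms, to confirm the equivalence.
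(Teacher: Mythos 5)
Your proposal is correct and follows essentially the same route as the paper, which defines the same functor $F(A) := \sem{A}$, $F([x{:}A, y{:}B \given \varphi]) := \sem{x{:}A, y{:}B \given \varphi}$ and then simply defers all verifications to the regular-logic case in \cref{app:regular}. You actually supply more detail than the paper does—in particular, the observation that the cotensor-preservation checks can be carried out by the same prenex/disjunctive-normal-form template used for the local-unions proposition—and this is a faithful elaboration of the argument the paper leaves to the reader.
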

\begin{proof}
  It suffices to construct a structure-preserving functor $F:
  \Cl(\Lang(\cat{B})) \to \cat{B}$ that is full, faithful, and essentially
  surjective on objects. Define the functor $F$ on objects by $F(A) := \sem{A}$
  and on morphisms by
  \begin{equation*}
    F([x:A; y:B \given \varphi]) := \sem{x:A; y:B \given \varphi}: F(A) \to F(B).
  \end{equation*}
  The proof that $F$ is well-defined and has the requisite properties proceeds
  as in \cref{app:regular}. We leave the details to the reader.
\end{proof}

\end{document}